\documentclass[twoside]{article}
\usepackage[accepted]{aistats2024}

%%%%% NEW MATH DEFINITIONS %%%%%

\usepackage{amsmath,amsfonts,bm}

% Mark sections of captions for referring to divisions of figures

% Highlight a newly defined term

% Figure reference, lower-case.

% Figure reference, capital. For start of sentence

% Section reference, lower-case.

% Section reference, capital.

% Reference to two sections.

% Reference to three sections.

% Reference to an equation, lower-case.
\def\eqref#1{equation~\ref{#1}}
% Reference to an equation, upper case

% A raw reference to an equation---avoid using if possible

% Reference to a chapter, lower-case.

% Reference to an equation, upper case.

% Reference to a range of chapters

% Reference to an algorithm, lower-case.

% Reference to an algorithm, upper case.

% Reference to a part, lower case

% Reference to a part, upper case

\def\1{\bm{1}}

\def\eps{{\epsilon}}

\newcommand{\ps}[1]{\langle #1 \rangle}

% Random variables

% rm is already a command, just don't name any random variables m

% Random vectors

% Elements of random vectors

% Random matrices

% Elements of random matrices

% Vectors

% Elements of vectors

% Matrix

% Tensor
\DeclareMathAlphabet{\mathsfit}{\encodingdefault}{\sfdefault}{m}{sl}
\SetMathAlphabet{\mathsfit}{bold}{\encodingdefault}{\sfdefault}{bx}{n}

% Graph

% Sets

% Don't use a set called E, because this would be the same as our symbol
% for expectation.

% Entries of a matrix

% entries of a tensor
% Same font as tensor, without \bm wrapper

% The true underlying data generating distribution

% The empirical distribution defined by the training set

% The model distribution

% Stochastic autoencoder distributions

 % Laplace distribution

\newcommand{\E}{\mathbb{E}}

\newcommand{\R}{\mathbb{R}}

% Wolfram Mathworld says $L^2$ is for function spaces and $\ell^2$ is for vectors
% But then they seem to use $L^2$ for vectors throughout the site, and so does
% wikipedia.

 % See usage in notation.tex. Chosen to match Daphne's book.

\DeclareMathOperator*{\argmax}{arg\,max}
\DeclareMathOperator*{\argmin}{arg\,min}

\usepackage[round]{natbib}

\usepackage[pdfencoding=auto, psdextra, pagebackref=true]{hyperref}
\usepackage{xcolor}
\hypersetup{
    colorlinks,
    linkcolor={blue!50!black},
    citecolor={blue!50!black},
    urlcolor={blue!80!black}
}

\usepackage{url}
\usepackage{algorithm}
\usepackage[noend]{algpseudocode}
\usepackage{booktabs}
\usepackage{xcolor}
\usepackage{multirow}
\usepackage{amssymb}
\usepackage{amsthm}
\usepackage{amsmath}
\usepackage{mathtools}
\usepackage{makecell}
\usepackage{amssymb}
\usepackage{float}
\usepackage{pifont}
\usepackage[sort,capitalise]{cleveref}
\usepackage{enumitem}
\usepackage{balance}

\newtheorem{theorem}{Theorem}

\newtheorem{lemma}{Lemma}
\newtheorem{proposition}{Proposition}
\newtheorem{definition}{Definition}
\newtheorem{assumption}{Assumption}

\newtheorem{remark}{Remark}

\usepackage{wrapfig}
\usepackage[skip=6pt]{caption}

\addtocontents{toc}{\protect\setcounter{tocdepth}{0}}

\crefname{assumption}{Assumption}{Assumptions}
\crefname{theorem}{Theorem}{Theorems}
\crefname{lemma}{Lemma}{Lemma}
\crefname{corollary}{Corollary}{Corollaries}
\crefname{proposition}{Proposition}{Propositions}
\crefname{algorithm}{Algorithm}{Algorithms}
\crefname{section}{section}{sections}
\crefname{equation}{}{}

\newtheorem*{T1}{Theorem~\ref{thm:main-exact-gradients}}
\newtheorem*{T2}{Theorem~\ref{thm:main-finite-sample}}

\begin{document}

\twocolumn[

\aistatstitle{Independent Learning in Constrained Markov Potential Games}

\aistatsauthor{ Philip Jordan \And Anas Barakat \And  Niao He }

\aistatsaddress{ ETH Z\"urich \And  ETH Z\"urich \And ETH Z\"urich } ]

\begin{abstract}
Constrained Markov games offer a formal mathematical framework for modeling multi-agent reinforcement learning problems where the behavior of the agents is subject to constraints. 
In this work, we focus on the recently introduced class of constrained Markov Potential Games. 
While centralized algorithms have been proposed for solving such constrained games, the design of converging independent learning algorithms tailored for the constrained setting remains an open question.
We propose an independent policy gradient algorithm for learning approximate constrained Nash equilibria: Each agent observes their own actions and rewards, along with a shared state.
Inspired by the optimization literature, our algorithm performs proximal-point-like updates  augmented with a regularized constraint set. Each proximal step is solved inexactly using a stochastic switching gradient algorithm. 
Notably, our algorithm can be implemented independently without a centralized coordination mechanism requiring turn-based agent updates.
Under some technical constraint qualification conditions, we establish convergence guarantees towards constrained approximate Nash equilibria. 
We perform simulations to illustrate our results. 
\end{abstract}

\section{INTRODUCTION}\label{sec:introduction}

In multi-agent reinforcement learning (RL), several agents interact within a shared dynamic and uncertain environment evolving over time depending on the individual strategic decisions of all the agents. Each agent aims to maximize their own individual reward which may however depend on all players'\footnote{We will use \textit{player} and \textit{agent} interchangeably.} decisions. Besides reward maximization, agents may also contend with satisfying constraints that are often dictated by multi-agent RL applications. Prominent such real-world applications include multi-robot control on cooperative tasks~\citep{gu-et-al23} as well as autonomous driving~\citep{shalev-shwartz-et-al16,liu-et-al23autonomous-driving} where physical system constraints and safety considerations such as collision avoidance are of primary importance. 
In other applications, agents may be subject to soft constraints such as average users' total latency thresholds in wireless networks or average power constraints in signal transmission. 
Each agent seeks to maximize their reward while also accounting for constraints which are coupled among agents. 
Constrained Markov games~\citep{altman_constrained_2000} offer a mathematical framework to model multi-agent RL problems incorporating coupled constraints. 

In this work, we focus on a particular class of structured constrained Markov games: constrained Markov Potential Games (CMPGs). Recently introduced in~\citet{alatur_provably_2023} to incorporate constraints, CMPGs naturally extend the class of Markov Potential Games~(MPGs) that has been actively investigated in the last few years~\citep{macua-et-al18,leonardos_global_2021,fox-et-al22,zhang-et-al21grad-play,song-mei-bai22,ding_independent_2022,zhang-et-al22softmax,maheshwari_independent_2023,zhou-et-al23networked-mpg}. 
Interestingly, this class of games is a class of mixed cooperative/competitive Markov games including pure identical interest Markov games (in which all the reward and cost functions of the agents are identical) as a particular case. 
The ability to cooperate between learning agents is crucial to improve their joint welfare and achieve social welfare for artificial intelligence (see~\citet{dafoe-et-al20,dafoe-et-al21} for an extensive discussion about the need for promoting cooperative AI). 

Independent learning has recently attracted increasing attention thanks to its versatility as a learning protocol. We refer the reader to a recent nice survey on the topic~\citep{ozdaglar-sayin-zhang21survey}. 
In this protocol, agents can only observe the realized state and their own reward and action in each stage to individually optimize their return. 
In particular, each agent does not observe actions or policies from any other agent. This protocol offers several advantages including the following aspects: (a)~Scaling: independent learning dynamics do not scale exponentially with the number of players in the game (also known as the curse of multi-agents); (b)~Privacy protection: agents may avoid sharing their local data and information to protect their privacy and autonomy; (c)~Communication cost: a central node that can bidirectionally communicate with all agents may not exist or may be too expensive to afford. Therefore, this protocol is particularly appealing in several applications where agents need to make decisions independently, in a decentralized manner. For example, dynamic load balancing, which consists in evenly assigning clients to servers in distributed computing, demands for learning algorithms that minimize communication overhead to enable low-latency response times and scalability across large data centers. This task has been modeled as an MPG~\citep{yao2022learning}. In other applications such as the pollution tax model and distributed energy marketplace detailed in section~\ref{sec:experiments}, coordination is inherently ruled out due to the competitive nature of the players' interactions. Independent learning algorithms have been proposed for unconstrained multi-agent RL problems such as zero-sum Markov games~\citep{daskalakis-foster-golowich20indep,sayin-et-al21decentralized,chen-et-al23finite-sample-zs} as well as for unconstrained MPGs in a recent line of works~\citep{leonardos_global_2021,zhang-et-al22softmax,zhang-et-al21grad-play,ding_independent_2022,maheshwari_independent_2023}. 

However, for \textit{constrained} MPGs, existing algorithms with convergence guarantees require coordination between players.
Indeed, inspired by~\citet{song-mei-bai22}, \citet{alatur_provably_2023} recently proposed a coordinate ascent algorithm for CMPGs in which each agent updates their policy in turn.  
At each time step, the policies of other agents are fixed while the updating agent faces a constrained Markov Decision Process (CMDP) to solve. 
When this coordination is not possible as in the independent learning protocol, the problem becomes more challenging as the environment is no longer stationary from the viewpoint of each agent and the problem does not reduce to solving a CMDP at each time step. 
This motivates the following question: 
\begin{quote}
\textit{Can we design an \textbf{independent} learning algorithm for  \textbf{constrained} MPGs with non-asymptotic global convergence guarantees?}
\end{quote}
 
\newpage
In this paper, we answer this question in the affirmative. 
Our contributions are as follows: 
\begin{itemize}
\item We design an algorithm for independent learning of constrained $\epsilon$-approximate Nash equilibria (NE) in CMPGs.
Inspired by recent works in nonconvex optimization under nonconvex constraints, our algorithm implements an inexact proximal-point update augmented with a regularized constraint set. In particular, the inexact proximal step is computed using a stochastic gradient switching algorithm for solving the resulting subproblem where both the objective and the constraint functions are strongly convex. Notably, the algorithm can be run independently by the different agents without taking turns. 
\item We analyze the proposed algorithm and establish its sample complexity to converge to an $\epsilon$-~approximate NE of the CMPG with polynomial dependence on problem parameters. Our analysis requires new technical developments that do not rely on results from the CMDP literature. 
\item We illustrate the performance of our algorithm on two simple CMPG applications: a pollution tax model and a marketplace for distributed energy resources. 
\end{itemize}

\begin{table}[h]
\caption{\label{tbl:contribution-grid} Position of our work in the literature. `centralized' means that the algorithm requires coordination between players who take turns in updating their policy; for `independent' learning, see~\cref{sec:preliminaries}.}
\begin{center}
\begin{tabular}{ccc}
\toprule
 & \makecell{centralized} & \makecell{independent} \\ 
 \midrule
MPG & \makecell{Nash-CA \\ \citet{song-mei-bai22}} & \makecell{Independent PGA \\ \citet{leonardos_global_2021}\\\citet{zhang-et-al21grad-play}\\\citet{ding_independent_2022}} \\ \hline
CMPG & \makecell{CA-CMPG \\ \citet{alatur_provably_2023}} & \makecell{\cref{alg:iprox-cmpg}\\\textit{This work}}\\
\bottomrule
\end{tabular}
\end{center}
\end{table}

\paragraph{Related Works}
We refer the reader to~\cref{tbl:contribution-grid} for a schematic positioning of our work in the recent literature. We next discuss some closely related work. 

\paragraph{Markov Potential Games} 
MPGs have been introduced as a natural extension of normal form potential games~\citep{monderer-shapley96} to the dynamic setting starting with state-based potential games~\citep{marden-et-al12}
and later Markov games~\citep{macua-et-al18}. 
\citet{leonardos_global_2021} introduced a variant of MPGs and proposed independent stochastic policy gradient methods with an~$\mathcal{O}(\epsilon^{-6})$ sample complexity to reach an $\epsilon$-approximate NE. Similar results were shown in~\citet{zhang-et-al21grad-play} with model-based algorithms. 
This result was later improved to an~$O(\epsilon^{-5})$ sample complexity for large state-action spaces with linear function approximation~\citep{ding_independent_2022} and further to an~$O(\epsilon^{-4.5})$ by reducing the variance of the agent-wise stochastic policy gradients~\citep{mao-et-al22}. \citet{zhang-et-al22softmax} explored the use of the softmax policy parametrization instead of the direct parametrization. In particular, they established an~$\mathcal{O}(\epsilon^{-2})$ iteration complexity in the deterministic setting and showed the benefits of using regularization to improve the convergence rate. \citet{maheshwari_independent_2023} proposed a fully independent and decentralized two timescale algorithm for MPGs with asymptotic guarantees where players may not even know the existence of other players.   
\citet{narasimha2022multi} provided verifiable structural assumptions under which a Markov game is an MPG and further provided several algorithms for solving MPGs in the deterministic setting.  
\citet{song-mei-bai22} proposed an~$\mathcal{O}(\epsilon^{-3})$ sample complexity coordinate ascent algorithm (Nash-CA) which requires coordination between players. 
\citet{guo-et-al23alpha-mpgs} recently introduced the class of~$\alpha$-MPGs which relaxes the definition of MPGs by allowing $\alpha$-deviations with respect to (w.r.t.) the potential function. More recently, \citet{zhou-et-al23networked-mpg} introduced a class of networked MPGs for which they proposed a localized actor-critic algorithm with linear function approximation. 
All the aforementioned works focused on the unconstrained setting.

\paragraph{Constrained Markov Games and CMPGs} 
There has been a vast array of works in multi-agent RL with safety constraints in practice (see e.g., \citet{elsayedaly-et-al21aamas-safe-marl,gu-et-al23} and the references therein). \citet{altman_constrained_2000} defined constrained Markov games and provided sufficient conditions for the existence of stationary constrained NE. Nonasymptotic theoretical convergence guarantees to game theoretic solution concepts for constrained multi-agent RL are relatively scarce in the literature. \citet{chen-et-al22constr-markov-games} introduced a notion of correlated equilibria for general constrained Markov games and provided a primal-dual algorithm for learning those equilibria.
\citet{ding-et-al23l4dc} established regret guarantees for episodic two-player zero-sum constrained Markov games. \citet{alatur_provably_2023} introduced the class of constrained MPGs. Inspired by Nash-CA~\citep{song-mei-bai22}, they proposed a constrained variant of the algorithm which enjoys an~$\mathcal{O}(\epsilon^{-5})$ sample complexity. Crucially, this algorithm requires coordination between agents and cannot be implemented independently by the agents. 

\paragraph{Inexact Proximal-Point}
The idea of using inexact proximal-point methods to solve \textit{nonconvex} problems has been fruitfully exploited in the literature for a couple of decades (see e.g., \cite{hare-sagastizabal09,davis-grimmer19}). A recent line of works~(\citet{boob-et-al23mathprog,ma_quadratically_2023}; and also~\citet{jia_first-order_2023}) extended this idea in order to solve nonconvex optimization problems with nonconvex functional constraints. The initial nonconvex problem is transformed into a sequence of convex problems by adding quadratic regularization terms to \textit{both} the objective and constraints. These works also established convergence rates to Karush–Kuhn–Tucker~(KKT) points under constraint qualification conditions. Our present work is inspired by this recent line of research. We point out though that we deal with a multi-agent RL problem and we provide convergence guarantees to approximate constrained NE. In these regards, our independent algorithm design and our analysis require several new technical developments.

\section{PRELIMINARIES}
\label{sec:preliminaries}

We consider an $m$-player constrained Markov Game where the players repeatedly select actions for maximizing their individual value functions while satisfying some constraints defined as cost value function bounds. More formally, the tabular game with random stopping, which we focus on, is described by a tuple~$\mathcal{G} = (\mathcal S, \mathcal N, \{ \mathcal A_i, r_i, c_i \}_{i \in \mathcal N}, \alpha, \mu, P, \kappa)\,$ with: 
\begin{itemize}
\item A finite set of states~$\mathcal{S}$ of cardinality~$S:=|\mathcal{S}|$ and a finite set of~$m$ agents~$\mathcal N := \{1, \dots, m\}\,.$
\item A finite set of actions~$\mathcal{A}_i$ of cardinality~$A_i:=|\mathcal{A}_i|$ for all~$i \in \mathcal N$ with $A_{\max}:=\max_{i \in \mathcal N}A_i$. The joint action space is denoted by~$\mathcal{A} := \prod_{i \in \mathcal N} \mathcal{A}_i\,.$
\item A reward function~$r_i: \mathcal{S} \times \mathcal{A} \to [0,1]$ and a cost function~$c_i: \mathcal{S} \times \mathcal{A} \to [0,1]$ for each agent~$i \in \mathcal N$. 
Throughout this paper, we will suppose that all the cost functions are identical across the agents and equal to a single cost function~$c$.\footnote{The case of multiple such common costs can be addressed with our approach with minor modifications. The case where cost functions may differ between players is more challenging and left for future work. See \cref{rem:other-constraints} for details.}

\item A distribution~$\mu$ over states from which the initial state of the game is drawn.
\item A probability transition kernel~$P$: For any state~$s \in \mathcal{S}$ and any joint action~$a \in \mathcal{A}$, the game transitions from state~$s$ to a state~$s' \in \mathcal{S}$ with probability~$P(s'|s, a)$ and the game terminates with probability~$\kappa_{s,a} > 0$. We further define~$\kappa := \min_{s \in \mathcal{S}, a \in \mathcal{A}} \kappa_{s,a}$ and $\gamma:=1-\kappa\,.$
\end{itemize}
At each time step~$t \geq 0$ of a given episode of the game, all the agents observe a shared state~$s_t \in \mathcal{S}$ and choose a joint action~$a_t \in \mathcal{A}$. Then, each agent~$i \in \mathcal N$ receives a reward~$r_i(s_t, a_t)$ and incurs a cost~$c(s_t, a_t)\,.$ The game either stops at time~$t$ with probability~$\kappa_{s_t, a_t}$ or proceeds by transitioning to a state~$s_{t+1}$ drawn from the distribution~$P(\cdot|s_t,a_t)\,.$ We denote by~$T_e$ the random stopping time when the episode terminates.\footnote{The discounted infinite horizon setting can also be addressed with minor adaptations.} 
For a similar setting in the unconstrained case, see~\citet{daskalakis-foster-golowich20indep,giannou-et-al22}.

\paragraph{Policies and Value Functions} Each agent~$i \in \mathcal{N}$ chooses their actions according to a randomized stationary policy denoted by $\pi_i \in \Pi^i:=\Delta \left( \mathcal A_i \right)^{\mathcal S}$ where~$\Delta \left( \mathcal A_i \right)$ is the probability simplex over the finite action space~$\mathcal{A}_i$. 
The set of joint policies~$\pi = \left( \pi_i \right)_{i \in \mathcal N}$ is denoted by~$\Pi := \times_{i \in \mathcal{N}} \Pi^i$ and we further use the notation $\pi_{-i}=\left( \pi_j \right)_{j \in \mathcal{N} \setminus \{i\}} \in \Pi^{-i} := \times_{j \in \mathcal{N} \setminus \{i\}} \Pi^j$ for joint policies of all agents other than~$i$. For any~$u \in \left\{ r_i \mid i \in \mathcal{N} \right\} \cup \left\{ c \right\}$ and any joint policy~$\pi \in \Pi$, we define the value function~$V_u(\pi)$ for every state~$s \in \mathcal{S}$ by~$V_{u,s}(\pi) := \E[ \sum_{t=0}^{T_e} u(s_t, a_t) | s_0=s].$ The shorthand notation~$V_u(\pi)$ will stand for~$V_{u,\mu}(\pi) := \E_{s \sim \mu}[V_{u,s}(\pi)]\,.$ For any policy $\pi \in \Pi$ and $s,s' \in \mathcal S$, the state visitation distribution is defined by $d_{s}^{\pi}(s'):= \mathbb{E}[\sum_{t=0}^{T_e} \1_{\{s_t=s'\}}|s_0=s]$ where~$\1$ is the indicator function and we write $d_{\mu}^{\pi}(s')=\E_{s \sim \mu}[d_{s}^{\pi}(s')]$.

In the rest of this paper, we will aim to minimize both rewards and costs to align with conventions from the constrained optimization literature. 
The equivalence to the common RL reward maximization formulation follows from considering reward functions~$1-r_i$ instead of~$r_i$ for each~$i \in \mathcal{N}\,.$ 

\paragraph{Constrained MPGs} In this paper, we consider an $m$-player constrained MPG (CMPG)~\citep{alatur_provably_2023} which is a constrained version of a Markov Potential Game~\citep{macua-et-al18,leonardos_global_2021}. In an MPG, for each state~$s \in \mathcal{S}$, there exists a so-called potential function~$\Phi_s: \Pi \to \R$ such that for all $i \in \mathcal N$, it holds that~$V_{r_i,s}(\pi_i,\pi_{-i})-V_{r_i,s}(\pi'_i,\pi_{-i}) = \Phi_s(\pi_i,\pi_{-i})-\Phi_s(\pi'_i,\pi_{-i})$ for any policies $\left( \pi_i,\pi_{-i} \right) \in \Pi$, and $\pi'_i \in \Pi^{i}\,.$ 
We will also use the notation~$\Phi(\pi) := \E_{s \sim \mu}[\Phi_s(\pi)]$.
Notice that the fully cooperative setting when all the reward functions of the players are identical is a particular instance of an MPG. 
Note also that the potential function is typically unknown for the players interacting in the game. 
The joint policies of the agents are constrained to the set~$\Pi_c:=\left\{ \pi \in \Pi \mid V_c(\pi) \leq \alpha \right\}$  of feasible policies. The set of feasible policies for agent~$i \in \mathcal N$ when the policy of the other agents is fixed to~$\pi_{-i} \in \Pi^{-i}$ is denoted by~$\Pi^i_c(\pi_{-i}):=\left\{ \pi_i \in \Pi^i \mid \left( \pi_i,\pi_{-i} \right) \in \Pi_c \right\}$.

\paragraph{Nash Equilibria} For any~$\epsilon \geq 0$, a joint policy~$\pi^{*} \in \Pi_c$ is called an $\epsilon$-approximate constrained NE if for every~$i \in \mathcal{N}$ and any policy~$\pi'_i \in \Pi^i_c(\pi^*_{-i}),$ we have $V_{r_i}(\pi^*)-V_{r_i}(\pi'_i,\pi^*_{-i}) \leq \epsilon$\,. When~$\epsilon = 0$, such a policy~$\pi^*$ is called a constrained NE policy and no agent has an incentive to deviate unilaterally from a NE policy~$\pi^*$.
Observe that unilateral deviations are only allowed within the set of feasible policies in our constrained setting. We refer the reader to~\citet{altman_constrained_2000} for the existence of stationary constrained NEs.

\paragraph{Independent Learning Protocol} All the players interact via executing their policies for a fixed number of episodes in order to find an approximate constrained NE. Importantly, during the learning procedure, each player executes their policy at each episode of the game to sample a trajectory and exclusively observes their own trajectory~$(s_t,a_{i,t}, r_i(s_t,a_t), c(s_t,a_t))_{0 \leq t \leq T_e}\,.$ In particular, a player does not have access to the policies of other players or their chosen actions.
Such a protocol was considered for instance in two-player zero-sum Markov games in~\citet{daskalakis-foster-golowich20indep,chen-et-al23finite-sample-zs} as well as for unconstrained MPGs~\citep{leonardos_global_2021, ding_independent_2022,maheshwari_independent_2023}.

\section{INDEPENDENT ALGORITHM FOR CONSTRAINED MPGs}
\label{sec:ind-learning-algo}

In this section, we present our independent \textsf{iProxCMPG} algorithm for learning constrained NE in CMPGs.
Before describing our approach, we discuss an alternative, natural but unsuccessful, approach to motivate our algorithm design. This will allow us to highlight the challenges arising from the combination of (a) the presence of coupled constraints, (b) the multi-player setting, and (c) the independent learning protocol.

Our starting point is the known result that any maximizer of the potential function is a NE of the game. This result was initially proved by~\citet{monderer-shapley96} for normal form potential games and later generalized to MPGs by~\citet{leonardos_global_2021} and to constrained MPGs more recently~\citep{alatur_provably_2023}. Therefore, in order to find an (approximate) constrained NE for our CMPG\footnote{Approximate KKT points of this problem will be related to approximate constrained NE of our CMPG.}, we will consider solving the following constrained optimization problem: 
\begin{equation}
\label{eq:pb-phi-constrained}
\min_{\pi \in \Pi_c} \Phi(\pi)\,,
\end{equation}
where~$\Phi$ is the potential function for our CMPG using the notations introduced in section~\ref{sec:preliminaries}. 
This problem involves a nonconvex objective with a nonconvex constraint since the value function is a nonconvex function of the policy in general (see e.g., Lemma~1 in~\citet{agarwal-et-al21pg}). However, although nonconvex optimization problems with nonconvex constraints are notoriously hard, it turns out that problem~\cref{eq:pb-phi-constrained} is still tractable in the single agent setting. 
In this case, the problem boils down to a CMDP problem. Despite its nonconvexity, the problem can be recast as a linear program in the space of occupancy measures which is a convex set (see Chapter~3 in~\citet{altman99cmdps}). 
Then, strong duality permits to design primal-dual policy gradient algorithms to solve the problem with convergence guarantees (see e.g., \citet{paternain_constrained_2019}).  

Given those positive results for single agent CMDPs, a natural approach is to derive a primal-dual algorithm for our multi-agent problem~\cref{eq:pb-phi-constrained} as it was proposed by~\citet{diddigi_actor-critic_2020}. In the latter work, a primal-dual policy gradient algorithm was proposed using the Lagrangian function~$\mathcal L(\pi,\lambda):=\Phi(\pi)+\lambda(V_c(\pi)-\alpha)$\, where~$\lambda \geq 0$ is a Lagrange multiplier. This algorithm can then be run independently by the different agents using existing independent learning algorithms for the unconstrained setting~\citep{leonardos_global_2021,zhang-et-al21grad-play,ding_independent_2022}. Unfortunately, it has been recently shown by~\citet{alatur_provably_2023} that strong duality does not hold in general for the CMPG problem. As a consequence, it is not clear how to obtain guarantees for convergence to constrained NE using this duality approach. This is due to the multi-agent nature of our problem. 
In particular, since the constraint couples the agents' individual policies, the set of state-action occupancy measures induced by joint policies of the players cannot be obviously split into several convex problems involving the occupancy measures induced by each one of the players' policies. The well-known challenge of nonstationarity of the environment in multi-agent RL makes the design of independent learning algorithms difficult. 
As a remedy, \citet{alatur_provably_2023} resort to coordination among players and propose a coordinate ascent algorithm for CMDPs. 
At each time step and for every player~$i$, by fixing the policy of other players but player~$i$ to $\pi_{-i}$, player $i$ can learn a ``best-response'' policy by solving a CMDP since the environment now becomes stationary from agent~$i$'s viewpoint. 

Recall now that our main objective is to design an \textit{independent} learning algorithm in the sense of section~\ref{sec:preliminaries} in order to learn constrained NE for our CMPG. 
We now describe our approach which takes a different route. 
Our algorithm is inspired by recent work in nonconvex optimization under nonconvex constraints~\citep{boob-et-al23mathprog,ma_quadratically_2023,jia_first-order_2023}. 
Following their ideas, we consider the following proximal update with penalized constraints: 
\begin{equation}
\label{eq:prox-update}
\begin{aligned}
\pi^{(t+1)} = \argmin_{\pi \in \Pi} \Big\{ &\Phi \left( \pi \right) + \frac{1}{2\eta} \left\| \pi-\pi^{(t)} \right\|^2 \Big| \\ &V_c(\pi) + \frac{1}{2\eta} \left\| \pi-\pi^{(t)} \right\|^2 + 
\beta \leq \alpha \Big\}
\end{aligned}
\end{equation}
where~$\pi^{(0)}$ is a given initial joint policy, $\eta > 0$ is a step size and~$\beta>0$ an additional slack.
Observe that~$V_c(\pi^{(t+1)}) + \beta - \alpha \leq - \|\pi^{(t+1)} - \pi^{(t)}\|^2/2\eta\,.$ Hence, the policy~$\pi^{(t)}$ is feasible with slack $\beta$, i.e., $V_c(\pi^{(t)}) + \beta \leq \alpha$, for every~$t \geq 0$.
We introduce two additional notations for convenience. Define for any joint policies~$\pi, \pi' \in \Pi,$ and~$\eta >0$, 
\begin{align*}
\Phi_{\eta,\pi'}(\pi) &:=\Phi(\pi) + \frac{1}{2\eta} \left\| \pi-\pi' \right\|^2,\\
V_{\eta,\pi'}^c(\pi)&:=V_c(\pi)+\frac{1}{2\eta} \left\| \pi-\pi' \right\|^2,\\
\Pi^c_{\eta,\pi'}&:=\left\{ \pi \in \Pi \mid V^c_{\eta,\pi'}(\pi) + \beta \leq \alpha \right\}\,.
\end{align*}
Our update rule in~\cref{eq:prox-update} can then be rewritten as: 
\begin{align}
\label{eq:prox-update-shorthand}
\pi^{(t+1)} = \argmin_{\pi \in \Pi_{\eta, \pi^{(t)}}^{c}} \Phi_{\eta, \pi^{(t)}}(\pi)\,.
\end{align}
We immediately observe that the above update rule is well-defined since~$\Phi_{\eta, \pi^{(t)}}$ and~$V^{c}_{\eta, \pi^{(t)}}$ are strongly convex for every~$t \geq 0$ for a suitable step size~$\eta$. This is in contrast with the original problem where both the potential function~$\Phi$ and the constraint function~$V_c$ are smooth but nonconvex. 
We also remark that if $\pi^{(t)}$ converges, then the regularization term~$\left\| \pi^{(t+1)}-\pi^{(t)} \right\|$ becomes small and the surrogate feasible region~$\Pi^c_{\eta, \pi^{(t)}}$ approaches the original constraint set~$\Pi_c$ up to the additional slack $\beta$. 

Now, we discuss how to solve the proximal problem in~\cref{eq:prox-update-shorthand} defining our main update rule. To solve this strongly convex problem with strongly convex constraint, we adapt a gradient switching algorithm proposed in~\citet{lan-zhou20csa}. At each iteration~$k$, our algorithm performs a projected gradient descent step along either the gradient of the (regularized) objective or the gradient of the constraint function depending on whether an estimate of the constraint function satisfies the relaxed constraint~$V_c(\pi^{(t,k)}) + \beta - \alpha \leq \delta_k$ where $(\delta_k)$ is a decreasing sequence converging to zero and hence progressively enforcing the constraint. However, it is not immediate from the above procedure how to obtain an independent learning algorithm specifying an update rule for each player without coordination between the players.
Recall for instance that the potential function~$\Phi$ is unknown to the players in general and full gradients of both the potential and constraint functions w.r.t.\ the joint policy cannot be available to each agent since we exclude coordination and centralization.  
To obtain our independent~\textsf{iProxCMPG}, see \cref{alg:iprox-cmpg}, we propose to use agent-wise updates where each agent runs the gradient switching algorithm independently using only partial gradients of the potential and constraint functions w.r.t.\ their individual policy. Notice that our subroutine algorithm deviates from the one proposed in~\citet{lan-zhou20csa} in that we use the estimate of the constraint function~$V_c$ \textit{instead} of the \textit{regularized} constraint function~$V^c_{\eta,\pi^{(t)}}$. This is because the regularized constraint function involves the \textit{joint policy} in the regularization while the constraint value function can be estimated independently.
We further remark that for our analysis, the index $\hat k$ sampled in line~7 of \cref{alg:iprox-cmpg} is supposed to be picked the same by all the players (see also Remark~\ref{remark:index-sampling} in Appendix~\ref{sec:appendix-full-stochastic-alg} for further details).  

\begin{algorithm*}
\caption{\textsf{iProxCMPG}: \textbf{i}ndependent \textbf{Prox}imal-policy algorithm for \textbf{CMPG}s}
\label{alg:iprox-cmpg}
\begin{algorithmic}[1]
\State \textbf{initialization:} $\pi^{(0)} \in \Pi^{\xi}$ s.t.\ $V_c(\pi^{(0)})<\alpha$ and suitably chosen $\eta,\beta,T,K,\{(\nu_k,\delta_k,\rho_k)\}_{0 \leq k \leq K}$
\For{$t=0,\dots,T-1$}
\State $\pi^{(t,0)}_i=\pi^{(t)}_i$ for $i \in \mathcal N$
\For{$k=0,\dots,K-1$ and $i \in \mathcal N$ simultaneously}
\State $\pi^{(t,k+1)}_i=
\begin{cases} 
\mathcal P_{\Pi^{i,\xi}} \left[ \pi^{(t,k)}_i - \nu_k \hat{\nabla}_{\pi_i}V_{r_i}(\pi^{(t,k)})-\frac{\nu_k}{\eta} (\pi^{(t,k)}_i-\pi_i^{(t)}) \right] &\text{if } \hat{V}_c(\pi^{(t,k)}) + \beta - \alpha \leq \delta_k \\[4pt] 
\mathcal P_{\Pi^{i,\xi}} \left[ \pi^{(t,k)}_i-\nu_k \hat{\nabla}_{\pi_i}V_{c}(\pi^{(t,k)})-\frac{\nu_k}{\eta} (\pi^{(t,k)}_i-\pi_i^{(t)}) \right] &\text{otherwise}\end{cases}$
\EndFor
\State $\mathcal B^{(t)}=\{ \lfloor K/2 \rfloor \leq k \leq K \mid \hat V_c(\pi^{(t,k)}) \leq \delta_k \}$
\State $\pi_i^{(t+1)}=\pi_i^{(t,\hat k)}$ where $\hat k=1$ if $\mathcal B^{(t)}=\emptyset$ and else sampled s.t.\ for $k \in \mathcal B^{(t)}$, $\mathbb P(\hat k = k)=\left( \sum_{k \in \mathcal B^{(t)}} \rho_k \right)^{-1} \rho_k$
\EndFor
\State \textbf{output:} $\pi_i^{(T)}$ for $i\in \mathcal N$
\end{algorithmic}
\end{algorithm*}

\paragraph{Stochastic Setting} When exact gradients and value functions are not available, we estimate them using sampled trajectories. For each joint policy~$\pi^{(t,k)}$, every player~$i$ samples a trajectory~$\tau_i := (s_j^{(t,k)}, a_{i,j}^{(t,k)}, r_{i,j}^{(t,k)}, c_j^{(t,k)})_{0 \leq j \leq T_e}$ of length~$T_e+1$ by executing their own policy~$\pi_i^{(t,k)}$. Here, $s_0^{(t,k)} \sim \mu$ and $r_{i,j}^{(t,k)}, c_j^{(t,k)}$ respectively refer to the reward and cost incurred by the $i$-th player at the $j$-th step. 
The gradients $\nabla_{\pi_i}V_{r_i}(\pi^{(t,k)})$ and $\nabla_{\pi_i}V_c(\pi^{(t,k)})$ are replaced by their sample estimates
\begin{equation}
\label{eq:stochastic-pgs}
\begin{aligned}
\hat\nabla V^{r_i}_{\pi_i}(\pi^{(t,k)}) &:= R_i^{(T_e,t,k)}\, \psi_{\pi_i^{(t,k)}}^{T_e}\,,\\
\hat\nabla V^c_{\pi_i}(\pi^{(t,k)}) &:= C^{(T_e,t,k)}\, \psi_{\pi_i^{(t,k)}}^{T_e}\,,
\end{aligned}
\end{equation}
where $R_i^{(T_e,t,k)}:=\sum_{j=0}^{T_e} r_{i,j}^{(t,k)}$, $C^{(T_e,t,k)}:=\sum_{j=0}^{T_e} c_j^{(t,k)}$ and $\psi_{\pi_i^{(t,k)}}^{T_e} :=  \sum_{j=0}^{T_e} \nabla_{\pi_i} \log \pi_i^{(t,k)} \left( a_{i,j}^{(t,k)} \mid s_j^{(t,k)} \right).$
Each agent estimates~$V_c(\pi^{(t,k)})$ by~$\hat V_c(\pi^{(t,k)}):=C^{(T_e,t,k)}$ independently, using the cost feedback information they receive. Note that details of trajectory sampling are omitted in \cref{alg:iprox-cmpg} for more compact presentation (for the full version, see \cref{alg:iprox-cmpg-sto} in \cref{sec:appendix-full-stochastic-alg}).

\begin{remark}
\label{rem:other-constraints}
As potential avenues for future work, we would like to point out two possible generalizations of the considered CMPG setting in which our current \textsf{iProx-CMPG} algorithm and analysis are not directly applicable:
\begin{itemize}
\item \textbf{Potential cost constraints.} Suppose we do not require the cost functions $c_i$ to be identical across all players $i \in \mathcal N$ but instead assume that for each~$s \in \mathcal S$, there exists a so-called cost potential function $\Phi_{c,s}:\Pi \to \R$ such that for all $i \in \mathcal N$, it holds that~$V_{c_i,s}(\pi_i,\pi_{-i})-V_{c_i,s}(\pi'_i,\pi_{-i}) = \Phi_{c,s}(\pi_i,\pi_{-i})-\Phi_{c,s}(\pi'_i,\pi_{-i})$ for any policies~$\left( \pi_i,\pi_{-i} \right) \in \Pi$, and $\pi'_i \in \Pi^{i}\,.$\\
\emph{Note that in order to use the gradient switching subroutine in our algorithm, it is essential that all agents are able to estimate whether or not the constraint holds for the current joint policy. In the case of a potential cost, it is not clear how to provide such estimates unless agents have knowledge of the potential (e.g.\ as a known function of the cost). This is an interesting question that merits further investigation.}

\item \textbf{Playerwise cost thresholds.} Suppose each player $i \in \mathcal N$ has an individual feasibility threshold~$\alpha_i$. The set of feasible policies is then redefined as $\Phi_c := \left\{\pi \in \Pi \mid \forall i \in \mathcal N, V_{c_i}(\pi) \leq \alpha_i\right\}.$\\
\emph{In this case, if all the cost functions $c_i$ are identical \emph{and} if the thresholds $\alpha_i$ can be communicated among players, one can consider the hardest constraint $\alpha := \min_{i \in \mathcal N}\alpha_i$ and use our approach to find a policy solving this stricter problem (if such policy exists). Otherwise, if cost functions are not identical or if each agent has their private threshold, our algorithm and analysis need further adjustments.}
\end{itemize}
\end{remark}

\section{CONVERGENCE ANALYSIS AND SAMPLE COMPLEXITY}\label{sec:analysis-complexity}
In this section, we establish the iteration complexity of~\cref{alg:iprox-cmpg} in the deterministic setting before stating its sample complexity in the stochastic setting.
We first introduce our assumptions. The first one guarantees the existence of a strictly feasible policy that is available to the agents for initialization.
\begin{assumption}
\label{ass:init-feasible}
The initial policy~$\pi^{(0)}$ satisfies~$V_c(\pi^{(0)}) < \alpha\,.$
\end{assumption}

A few remarks are in order regarding this assumption: 
\begin{itemize}
\item Similar assumptions have been made in the related constrained optimization literature when dealing with nonconvex constraints~\citep{boob-et-al23mathprog,ma_quadratically_2023,jia_first-order_2023}. Otherwise, satisfying a constraint may require finding a global minimizer which is computationally intractable in a general nonconvex setting. In our case, this corresponds to finding the global minimizer of a potential function in a fully cooperative unconstrained MPG. While this can be achieved in a single agent setting thanks to the gradient dominance property~\citep{agarwal-et-al21pg,xiao22}, such a global optimality result is not available in the literature for our multi-agent setting to the best of our knowledge.
\item While finding a strictly feasible policy is involved in general, it may be possible to find such a policy in some special cases, such as when the state space can be factored, the probability transitions are independent across agents and the constraint cost functions are separable (see examples~1 and~2 in~\citet{alatur_provably_2023} for more details).
\end{itemize}

In addition to initial feasibility, we require that Slater's condition holds for each subproblem given by a proximal-point update. 
This is ensured by the following uniform Slater's condition. 
\begin{assumption}
\label{ass:uniform-slater}
Let $\eta=\frac{1}{2L_\Phi}$ where $L_\Phi$ is the smoothness parameter\footnote{See also \cref{lem:basics}, item~\ref{item:unregularized-smoothness} for an expression of $L_\Phi$ in terms of $m,\gamma$, and $A_{\max}.$} of $\Phi$. Then, there exists $\zeta>0$ such that for any strictly feasible $\pi' \in \Pi$, i.e., $V_c(\pi')<\alpha$, there exists $\pi \in \Pi$ with~$V^c_{\eta,\pi'}(\pi) \leq \alpha -\zeta$.
\end{assumption}
We make the following comments:
\begin{itemize}
\item First, we point out that a strictly feasible $\pi'$ satisfies $V^c_{\eta,\pi'}(\pi') = V_c(\pi') < \alpha$, i.e., existence of a strictly feasible policy for the regularized constraint function $V^c_{\eta,\pi'}$ is trivially given. \cref{ass:uniform-slater} additionally ensures that strict feasibility holds with slack $\zeta$ where $\zeta$ is independent of $\pi'$.
\item  Similar constraint qualification conditions have been widely used in the nonconvex constrained optimization literature, see~\citet{boob2022level}, Table~1 for an overview. In particular, \cref{ass:uniform-slater} is similar to the uniform Slater's condition of~\citet{ma_quadratically_2023}. Assumption 3 in \citet{boob-et-al23mathprog} is a strong feasibility assumption which implies \cref{ass:uniform-slater}, and hence could also replace it here. Strong feasibility assumes existence of a policy $\pi$ such that $V_c(\pi) \leq \alpha-\frac{\text{diam}(\Pi)^2}{\eta}$ where $\text{diam}(\Pi):=\max_{\pi,\pi' \in \Pi}\| \pi-\pi' \|$.
\item A uniform strict feasibility assumption similar to \cref{ass:uniform-slater} was used for centralized NE-learning, see~\citet{alatur_provably_2023}, Assumption 2.
\end{itemize}

\paragraph{Exact Gradients Case}
In the noiseless setting with access to exact gradients, we achieve the following iteration complexity result.
\begin{theorem}
\label{thm:main-exact-gradients}
Let \cref{ass:init-feasible,ass:uniform-slater} hold and let the distribution mismatch coefficient $D := \max_{\pi \in \Pi} \left\| d_{\mu}^{\pi}/\mu \right\|_{\infty}$ be finite. For any $\epsilon>0$, after running \textsf{iProxCMPG}, \cref{alg:iprox-cmpg}, for $\xi=0$, suitably chosen~$\eta,\beta,T,K$, and~$\{(\nu_k,\delta_k,\rho_k)\}_{0 \leq k \leq K}$, there exists~$t \in [T]$, such that $\pi^{(t)}$ is a constrained $\epsilon$-NE\@. The total iteration complexity is given by~$\mathcal O(\epsilon^{-4})$ where $\mathcal O(\cdot)$ hides polynomial dependencies in $m,S,A_{\max},D, 1-\gamma$, and~$\zeta$.
\end{theorem}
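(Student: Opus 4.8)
The plan is to work with the equivalent reformulation in \cref{eq:pb-phi-constrained}, exploiting the potential property so that a single agent's deviation changes $\Phi$ exactly as it changes that agent's value; consequently the constrained NE-gap at a policy $\pi$ equals $\max_{i}[\Phi(\pi)-\min_{\pi'_i \in \Pi^i_c(\pi_{-i})}\Phi(\pi'_i,\pi_{-i})]$. I would then establish the theorem through three nested arguments: (i) an inner-loop guarantee that the gradient-switching subroutine approximately solves each strongly convex proximal subproblem \cref{eq:prox-update-shorthand}; (ii) an outer-loop descent argument producing an iterate with a small proximal step; and (iii) a conversion lemma turning a small proximal step into a small NE-gap.

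For the inner loop, I would first record that with $\eta=\tfrac{1}{2L_\Phi}$ both $\Phi_{\eta,\pi^{(t)}}$ and $V^c_{\eta,\pi^{(t)}}$ are strongly convex and smooth, placing the subproblem in the regime where the switching scheme of \citet{lan-zhou20csa} converges at rate $O(1/K)$ in both objective suboptimality and constraint violation. The nonstandard point, which I expect to require care, is that the switching test in \cref{alg:iprox-cmpg} uses the \emph{unregularized} estimate $\hat V_c$ rather than $V^c_{\eta,\pi^{(t)}}$; I would bridge the two by noting their difference is exactly $\tfrac{1}{2\eta}\|\pi^{(t,k)}-\pi^{(t)}\|^2$, bounded by $\mathrm{diam}(\Pi)^2/(2\eta)$, and absorb this discrepancy into the slack $\beta$ and the tolerance sequence $(\delta_k)$. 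The output rule (sampling from the index set $\mathcal B^{(t)}$ on which the cost estimate is small) then yields a $\pi^{(t+1)}$ that is $O(1/K)$-optimal for $\Phi_{\eta,\pi^{(t)}}$ and feasible up to $O(1/K)$; crucially it also preserves the invariant $V_c(\pi^{(t)})+\beta\le\alpha$ approximately, so that $\pi^{(t)}$ is itself (nearly) feasible for its own subproblem.

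Using that invariant, the outer-loop descent is immediate: since $\pi^{(t)}\in\Pi^c_{\eta,\pi^{(t)}}$ and $\pi^{(t+1)}$ nearly minimizes $\Phi_{\eta,\pi^{(t)}}$ over that set, I obtain $\Phi(\pi^{(t+1)})+\tfrac{1}{2\eta}\|\pi^{(t+1)}-\pi^{(t)}\|^2\le\Phi(\pi^{(t)})+e_K$, where $e_K=O(1/K)$ collects the inner-loop error. Telescoping over $t$ and using that $\Phi$ is bounded by $O(1/(1-\gamma))$ gives $\min_{t<T}\|\pi^{(t+1)}-\pi^{(t)}\|^2=O\big(\eta/(T(1-\gamma))+\eta e_K\big)$, so some iterate has a proximal step of size $O(\epsilon)$ once $T=\Omega(\epsilon^{-2})$ and $K=\Omega(\epsilon^{-2})$. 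At that iterate, the KKT conditions of the convex subproblem, together with the uniform Slater condition \cref{ass:uniform-slater} (which bounds the multiplier $\lambda^{(t)}$ uniformly by roughly $\mathrm{diam}(\Pi)^2/(\eta\zeta)$), show that $\pi^{(t)}$ is an approximate first-order stationary point of \cref{eq:pb-phi-constrained} with residual controlled by $\tfrac{1}{\eta}\|\pi^{(t+1)}-\pi^{(t)}\|+e_K$.

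The main obstacle is the final conversion from approximate stationarity to the NE-gap, and this is where the distribution-mismatch coefficient $D$ enters. For each agent $i$ and each feasible deviation $\pi'_i\in\Pi^i_c(\pi^{(t)}_{-i})$, I would test the subproblem's variational inequality against the comparison policy $(\pi'_i,\pi^{(t)}_{-i})\in\Pi$ and invoke a constrained analogue of policy-gradient dominance for the potential: restricting to agent $i$ with the others fixed turns $\Phi(\cdot,\pi^{(t)}_{-i})$ into the value of an induced MDP whose gradient-domination constant is exactly $D=\max_\pi\|d^\pi_\mu/\mu\|_\infty$, yielding $\Phi(\pi^{(t)})-\Phi(\pi'_i,\pi^{(t)}_{-i})\le D\cdot\big(\text{weighted stationarity residual}\big)$. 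The delicate steps are (a) controlling the Lagrangian cross-term $\lambda^{(t)}\langle\nabla_{\pi_i}V_c(\pi^{(t)}),\pi'_i-\pi^{(t)}_i\rangle$ so the constraint multiplier does not spoil the bound, which I would handle via feasibility of the deviation and the uniform bound on $\lambda^{(t)}$; and (b) proving the requisite constrained gradient-dominance statement without appealing to single-agent CMDP duality, as emphasized in the paper. Collecting dependencies, choosing $\eta=\Theta(1/L_\Phi)$ with $L_\Phi=\mathrm{poly}(m,A_{\max},1/(1-\gamma))$, $\beta=\Theta(\epsilon)$, and the step/tolerance schedules $(\nu_k,\delta_k,\rho_k)$ as in the strongly convex switching analysis, the outer and inner counts $T,K=\Theta(\epsilon^{-2})$ multiply to the claimed $O(\epsilon^{-4})$ total iteration complexity with the stated polynomial dependence on $m,S,A_{\max},D,1/(1-\gamma),1/\zeta$.
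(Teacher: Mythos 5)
Your three-stage architecture (inner switching-gradient guarantee, outer telescoping to a small proximal step, KKT-plus-gradient-dominance conversion to a small Nash gap) is exactly the paper's (\cref{lem:subproblem-guarantee,lem:exact-gradients-main,lem:eps-KKT,lem:KKT-cNE}), and your outer-loop descent and $T\cdot K=\mathcal O(\epsilon^{-4})$ bookkeeping are sound. But the two steps you yourself flag as delicate contain genuine gaps. First, your bridge between the unregularized switching test and the regularized constraint fails quantitatively: with $\eta=\frac{1}{2L_\Phi}$, the discrepancy bound $\mathrm{diam}(\Pi)^2/(2\eta)=L_\Phi\,\mathrm{diam}(\Pi)^2$ is $\Theta(1)$, not $\mathcal O(\epsilon)$. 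Absorbing it into $\beta$ shrinks the surrogate feasible set by a constant, so the output would only be compared against deviations satisfying $V_c\leq\alpha-\beta$ and the Nash gap over the true set $\Pi_c^i(\pi_{-i})$ is no longer controlled (moreover the Slater slack $\zeta$ of \cref{ass:uniform-slater} would have to exceed that constant); absorbing it into $\delta_k$ instead permits constant constraint violation. The paper never bridges the two functions: it re-derives the CSA analysis (\cref{thm:csa-result}) for the asymmetric pair --- regularized objective and gradient steps, \emph{unregularized} switching test --- using only the one-sided inequality $V_c\leq V^c_{\eta,\pi^{(t)}}$ together with the randomized output rule over test-passing indices. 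The resulting guarantee is feasibility w.r.t.\ the unregularized constraint only, $\E[V_c(\pi^{(t+1)})]\leq\alpha$, with $\beta=\Theta(\epsilon)$ absorbing merely the subroutine's $\mathcal O(\epsilon)$ violation; this weaker guarantee is all the outer analysis needs, and it sustains the induction that each $\pi^{(t)}$ is strictly feasible for its own subproblem.

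Second, the crux of your conversion --- the ``constrained analogue of policy-gradient dominance'' and the control of the cross-term $\lambda^{(t)}\langle\nabla_{\pi_i}V_c(\pi^{(t)}),\pi_i'-\pi_i^{(t)}\rangle$ --- is exactly what you leave unproven, and applying gradient dominance to $\Phi(\cdot,\pi_{-i}^{(t)})$ alone and then patching in the multiplier term does not obviously close. The paper sidesteps any constrained dominance statement: freeze the Slater-bounded multiplier $\tilde\lambda$ and observe that the playerwise Lagrangian $\mathcal L_{\pi_{-i}}(\cdot,\tilde\lambda)$ is itself the value function of an ordinary \emph{unconstrained} MDP with reward $r_i+\tilde\lambda c$, so standard gradient dominance applies to the whole Lagrangian; combining this with a tailored $\widetilde{\text{KKT}}$ notion (exact primal feasibility, enabled by the slack $\beta$, plus variational stationarity) and weak duality (\cref{prop:KKT-gr-dom}) bounds the playerwise duality gap and hence the Nash gap, with $C_{\text{NE}}\lesssim D/(1-\gamma)+1$. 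The sign argument you gesture at (feasible deviations make the cross-term favorable) is implicitly what weak duality packages, but without the Lagrangian-as-value-function observation your step (b) remains a missing idea rather than a routine verification. (Minor: because of the randomized index selection $\hat k$, even the exact-gradient guarantee holds only in expectation, as the paper's restatement of \cref{thm:main-exact-gradients} makes explicit.)
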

The full proof of \cref{thm:main-exact-gradients} is deferred to \cref{sec:appendix-proof-exact-gradients}. We briefly outline the key steps below.
\begin{proof}[Proof idea]
First, we show that $K=\mathcal{O}(\epsilon^{-2})$ iterations of the inner loop yield a policy that is feasible and achieves potential value sufficiently close to the exact  proximal update \cref{eq:prox-update-shorthand}. For $T=\mathcal{O}(\epsilon^{-2})$, standard arguments then imply existence of $t \in [T]$ such that $\| \pi^{(t+1)}-\pi^{(t)} \| = \mathcal{O}(\epsilon)$. It can further be shown that such $\pi^{(t+1)}$ satisfies a particular form of approximate CMPG-specific KKT conditions for the original constrained optimization problem \cref{eq:pb-phi-constrained}. We then leverage the multi-agent structure to argue that for all $i \in \mathcal N$, similar KKT conditions also hold w.r.t.\ the playerwise problem $\min_{\pi_i \in \Pi_c^i(\pi_{-i}^{(t+1)})} V_{r_i}(\pi_i,\pi_{-i}^{(t+1)})$ where $\pi_{-i}^{(t+1)}$ is fixed.
Finally, using playerwise gradient dominance (see e.g., Lemma~D.3 in~\citet{leonardos_global_2021} or Lemma~2 in~\citet{giannou-et-al22}), one can bound the duality gap of player $i$'s constrained problem for all~$i \in \mathcal N$ which implies that $\pi^{(t+1)}$ is a constrained $\epsilon$-NE. The total iteration complexity is given by $T \cdot K = \mathcal{O}(\epsilon^{-4})$.
\end{proof}

\paragraph{Finite Sample Case} 
In the stochastic setting, when exact gradients are not available, the variance of the stochastic policy gradients in~(\ref{eq:stochastic-pgs}) can be unbounded if the policies get closer to the boundaries of the simplex (see e.g., Eq.~(13) in~\cite{giannou-et-al22}).  
Therefore, we consider exploratory $\xi$-greedy policies to address this issue as in prior work \citep{daskalakis-foster-golowich20indep,leonardos_global_2021,ding_independent_2022,giannou-et-al22}.  
Define for any $\xi \geq 0, i \in \mathcal{N}$ the subset of $\xi$-greedy policies
\begin{align*}
\Pi^{i,\xi} := \left\{ \pi \in \Pi \mid \forall s \in \mathcal S: \pi_i \left( \cdot \mid s \right) \geq \xi/A_i \right\}\,, 
\end{align*}
which is used in Algorithm~\ref{alg:iprox-cmpg}. 
We are now ready to state our sample complexity result. 

\begin{theorem}
\label{thm:main-finite-sample}
Let \cref{ass:init-feasible,ass:uniform-slater} hold, and let $D$ (as in \cref{thm:main-exact-gradients}) be finite. Then, for any $\epsilon > 0$, after running \textsf{iProxCMPG} based on finite sample estimates (see \cref{alg:iprox-cmpg-sto}) for suitably chosen $\eta,\beta,\xi,T,K,B$, and $\{(\nu_k,\delta_k,\rho_k)\}_{0 \leq k \leq K}$, there exists $t \in [T]$, such that in expectation, $\pi^{(t)}$ is a constrained $\epsilon$-NE\@. The total sample complexity is given by~$\tilde{\mathcal{O}}(\epsilon^{-7})$ where $\tilde{\mathcal{O}}(\cdot)$ hides polynomial dependencies in $m,S,A_{\max},D,1-\gamma$, and~$\zeta$, as well as logarithmic dependencies in $1/\epsilon$.
\end{theorem}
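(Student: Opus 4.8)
The plan is to re-run the argument behind \cref{thm:main-exact-gradients} while replacing every deterministic guarantee by its in-expectation counterpart, carefully tracking the three distinct sources of error introduced by sampling: (i)~the bias incurred by restricting to $\xi$-greedy policies in $\Pi^{i,\xi}$; (ii)~the variance of the stochastic policy gradients in~\cref{eq:stochastic-pgs}; and (iii)~the estimation error of the constraint value $\hat V_c$ used in the switching test of line~5 of \cref{alg:iprox-cmpg}. The overall strategy is a bias--variance balancing: $\xi$ must be small enough that restricting to $\Pi^{i,\xi}$ perturbs the target equilibrium by at most $\mathcal O(\epsilon)$, yet large enough that the score functions $\psi^{T_e}_{\pi_i}$ stay bounded, keeping the gradient variance finite. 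Since episodes terminate with probability at least $\kappa$, the lengths $T_e$ are geometrically distributed, so $R_i^{(T_e,t,k)}$, $C^{(T_e,t,k)}$ and $\psi^{T_e}_{\pi_i}$ all have sub-exponential tails with moments controlled by $1/(1-\gamma)$ and $1/\xi$; this is the starting point for all the concentration arguments.

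First I would establish a per-subproblem stochastic convergence guarantee for the inner switching loop. With a batch of size $B$, averaging the estimators of~\cref{eq:stochastic-pgs} reduces the gradient variance to $\mathcal O(\sigma^2/B)$ with $\sigma^2=\mathrm{poly}(m,S,A_{\max},1/(1-\gamma),1/\xi)$. Adapting the stochastic CSA analysis of~\citet{lan-zhou20csa} to our regularized, strongly convex objective and constraint, and using the schedule $\{(\nu_k,\delta_k,\rho_k)\}$, I would show that the index $\hat k$ sampled from $\mathcal B^{(t)}$ returns, in expectation, a policy that is $\mathcal O(\epsilon)$-feasible and $\mathcal O(\epsilon)$-optimal for the proximal subproblem~\cref{eq:prox-update-shorthand}. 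The subtle point is that the switching decisions depend on the random test $\hat V_c(\pi^{(t,k)})+\beta-\alpha\le\delta_k$ rather than the true value, so the descent direction chosen at each step is itself noisy. I would condition on the high-probability event, secured by a union bound over the $K$ inner steps (a source of the $\log(1/\epsilon)$ factor), that every test agrees with the true constraint up to tolerance $\delta_k$; on this event the analysis mirrors the deterministic one, while the complementary event contributes only negligible bias because the returns are bounded. A separate argument shows $\mathcal B^{(t)}$ is nonempty with high probability, so the fallback $\hat k=1$ is rarely triggered.

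Next I would lift these per-subproblem guarantees to the outer proximal loop exactly as in the exact-gradient case, but in expectation. Using the feasibility-with-slack property noted after~\cref{eq:prox-update} together with the in-expectation inner-loop accuracy, I would derive a telescoping bound on $\E[\Phi_{\eta,\pi^{(t)}}(\pi^{(t+1)})]$ that forces the existence of some $t\in[T]$ with $\E\|\pi^{(t+1)}-\pi^{(t)}\|=\mathcal O(\epsilon)$. The approximate CMPG-specific KKT conditions, the transfer to the playerwise problems $\min_{\pi_i\in\Pi^i_c(\pi^{(t+1)}_{-i})}V_{r_i}(\pi_i,\pi^{(t+1)}_{-i})$, and the playerwise gradient-dominance bound on each player's duality gap then go through as before, now yielding an $\epsilon$-NE in expectation. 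Finally, balancing the three error sources at level $\epsilon$ fixes $\xi=\mathrm{poly}(\epsilon)$ together with counts $T=\mathcal O(\epsilon^{-2})$, $K=\mathcal O(\epsilon^{-2})$ and a batch $B=\tilde{\mathcal O}(\epsilon^{-3})$ per inner step; since each step draws $B$ trajectories of expected length $\mathcal O(1/(1-\gamma))$, the total sample complexity is $T\cdot K\cdot B\cdot\mathcal O(1/(1-\gamma))=\tilde{\mathcal O}(\epsilon^{-7})$.

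I expect the main obstacle to be the coupling, inside the inner loop, between the randomness of the constraint test and the randomness of the gradient steps. Because the switching condition is evaluated on the noisy estimate $\hat V_c$, the event determining which gradient is used at step $k$ is correlated with the noise in all earlier steps, so the clean recursion of the deterministic CSA analysis is no longer directly available. Resolving this requires a careful conditioning and martingale decomposition together with uniform concentration of $\hat V_c$ across inner iterations, and it is precisely this step that introduces the extra sampling overhead and the logarithmic factors relative to \cref{thm:main-exact-gradients}.
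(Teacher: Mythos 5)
Your proposal is correct in outline and follows the same architecture as the paper's proof (stochastic switching-gradient inner loop for the strongly convex proximal subproblem, telescoping over the outer loop to extract $t$ with $\E\|\pi^{(t+1)}-\pi^{(t)}\|=\mathcal O(\epsilon)$, then $\widetilde{\text{KKT}}$ conditions plus playerwise gradient dominance), but it differs from the paper in two substantive ways. First, the budget allocation: you reduce the gradient variance by batching, taking $B=\tilde{\mathcal O}(\epsilon^{-3})$ per inner step with only $K=\mathcal O(\epsilon^{-2})$ inner iterations, whereas the paper (\cref{lem:subproblem-guarantee-sto}) uses the batch \emph{only} for the constraint-value test, $B=\tilde{\mathcal O}(\epsilon^{-2})$, and lets the strongly convex CSA rate $\mathcal O(M^2/(\mu K))$ absorb the $\mathcal O(1/\xi)=\mathcal O(1/\epsilon)$ second moment of the $\xi$-greedy gradient estimates by taking $K=\tilde{\mathcal O}(\epsilon^{-3})$; both allocations yield $T\cdot K\cdot B=\tilde{\mathcal O}(\epsilon^{-7})$, and yours is workable since batching drops the stochastic-gradient second moment to the $\mathcal O(1)$ poly level. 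Second, the noisy switch: you propose conditioning on a union-bound event that all $K$ tests are accurate plus a martingale decomposition, while the paper instead modifies CSA itself (\cref{alg:csa}, \cref{thm:csa-result}) — the switch is run on the \emph{unregularized} estimate $\hat V_c$, the pathwise recursion of \cref{prop:csa-1} is combined with a case analysis (either the weighted regret over $\mathcal B_s$ is nonpositive, or $|\mathcal B_s|\geq (N-s+1)/2$ with high probability) under only a sub-exponential tail assumption (\cref{ass:csa-estimators}, verified in \cref{lem:value-concentration}; note $\hat V_c\leq T_e$ is \emph{not} bounded, contrary to your "returns are bounded" remark), and, crucially, the output is a $\rho_k$-weighted \emph{sampled index} $\hat k$ rather than an averaged iterate, precisely because the relaxed constraint $V_c$ is nonconvex and its value at an average cannot be controlled — a subtlety your sketch uses implicitly but does not justify.

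One step you gloss over: you assert the playerwise KKT-to-NE transfer "goes through as before," but it does not verbatim — the $\widetilde{\text{KKT}}$ conditions now hold w.r.t.\ $\Pi^{\xi}$, not $\Pi$, so the variational stationarity must be extended from the $\xi$-greedy simplex to the full simplex. The paper devotes \cref{lem:KKT-cNE-sto} to this, using the convex-combination representation of \cref{lem:greedy-decomposition} to pay an extra $\xi\sqrt{S}(1+\lambda)M_c$ term and setting $\xi=\epsilon$, which changes the NE constant to $\hat C_{\text{NE}}\lesssim \frac{D}{1-\gamma}+\frac{m\sqrt{S}A_{\max}D}{(1-\gamma)^{4.5}}+1$. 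Your bias-accounting for $\xi$ (your source (i)) anticipates this, so it is an underspecified step rather than a fatal gap, but any complete write-up must carry it out explicitly.
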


We refer the reader to \cref{sec:appendix-proof-finite-sample} for the proof of \cref{thm:main-finite-sample}. Below, we briefly explain how we obtain our sample complexity result.
\begin{proof}[Proof idea]
As in the exact gradients case, we require $T=\mathcal{O}(\epsilon^{-2})$ iterations of the outer loop. In the stochastic setting, our independent implementation of the CSA algorithm~\citep{lan-zhou20csa} still converges at a $\mathcal{O}(1/K)$-rate due to strong convexity, but requires sampling a batch of size $B=\mathcal{O}(\epsilon^{-2})$ for estimating constraint function values at each iteration. To counteract the variance of $\xi$-greedy gradient estimates (which in our case grows as $\mathcal{O}(\epsilon^{-1})$), we need to set $K=\mathcal{O}(\epsilon^{-3})$. All in all, we end up with sample complexity $T \cdot K \cdot B = \mathcal{O}(\epsilon^{-7})$ for proving existence of $t \in [T]$ such that $\E \left[ \left\| \pi^{(t)}-\pi^{(t+1)} \right\| \right] = \mathcal{O}(\epsilon)$. Using similar arguments as for \cref{thm:main-exact-gradients}, this implies that~$\pi^{(t+1)}$ is a constrained $\epsilon$-NE in expectation.
\end{proof}
\begin{remark}
Comparing our result to the state-of-the-art in the unconstrained case ($\mathcal{O}(\epsilon^{-5})$, \citet{ding_independent_2022}), accounting for constraints comes at a cost, increasing the sample complexity by a $\mathcal{O}(\epsilon^{-2})$-factor. In the centralized setting, a similar gap can be observed between best known results for unconstrained ($\mathcal{O}(\epsilon^{-3})$, \citet{song-mei-bai22}) vs.\ constrained ($\mathcal{O}(\epsilon^{-5})$, \citet{alatur_provably_2023}) NE-learning. Whether this $\mathcal{O}(\epsilon^{-2})$-gap can be narrowed is an interesting open question for both centralized and independent learning. 
\end{remark}

\section{SIMULATIONS}
\label{sec:experiments}

We test our stochastic \textsf{iProxCMPG} algorithm in two simple applications that can be modeled as CMPGs and for which coordination among players is unrealistic. Both examples are inspired by unconstrained variants presented in~\citet{narasimha2022multi} who study MPGs. Our code is publicly available\footnote{\url{https://github.com/philip-jordan/iProx-CMPG}}.

\paragraph{Pollution Tax Model} Consider a simple environment with $m$ agents representing e.g.\ factories, two states, \emph{pollution-free} and \emph{polluted}, and two actions, \emph{clean} and \emph{dirty} corresponding to low and high production volume. Starting in the \emph{pollution-free} state, in each round, the environment transitions to the \emph{polluted} state if and only if at least one agent chooses \emph{dirty}. Each agent's reward is the sum of its profit minus a pollution tax. In either state, the profit is~$P_c$ when choosing \emph{clean} and~$P_d$ when choosing \emph{dirty}. The pollution tax is zero in the \emph{pollution-free}, and $T_p$ in the \emph{polluted} state. As pointed out by~\citet{narasimha2022multi}, due to rewards being separable in the sense that $r_i(s,a_i,a_{-i})=r'_i(s)+r''_i(a_i,a_{-i})$ and state transition probabilities being state independent, the pollution tax model satisfies a sufficient condition under which a Markov game is an MPG. For our simulations, we set $P_c=2, P_d=4$, and $T_p=4$.
Due to the lack of incentives for agents to cooperate when promoting environmental sustainability, requiring coordination is unrealistic in this example. Moreover, note that the purpose of the pollution tax is to counteract pollution by penalizing \emph{dirty} actions. However, in practice, there may be additional global requirements on the minimum total production volume. To model this as a CMPG, we charge a cost $C$ per agent that chooses \emph{clean} and impose the constraint $V_c(\pi) \leq \alpha_C$ for appropriately chosen $\alpha_C$.

\begin{figure}[h]
\vspace{.3in}
\centerline{\includegraphics[width=\columnwidth]{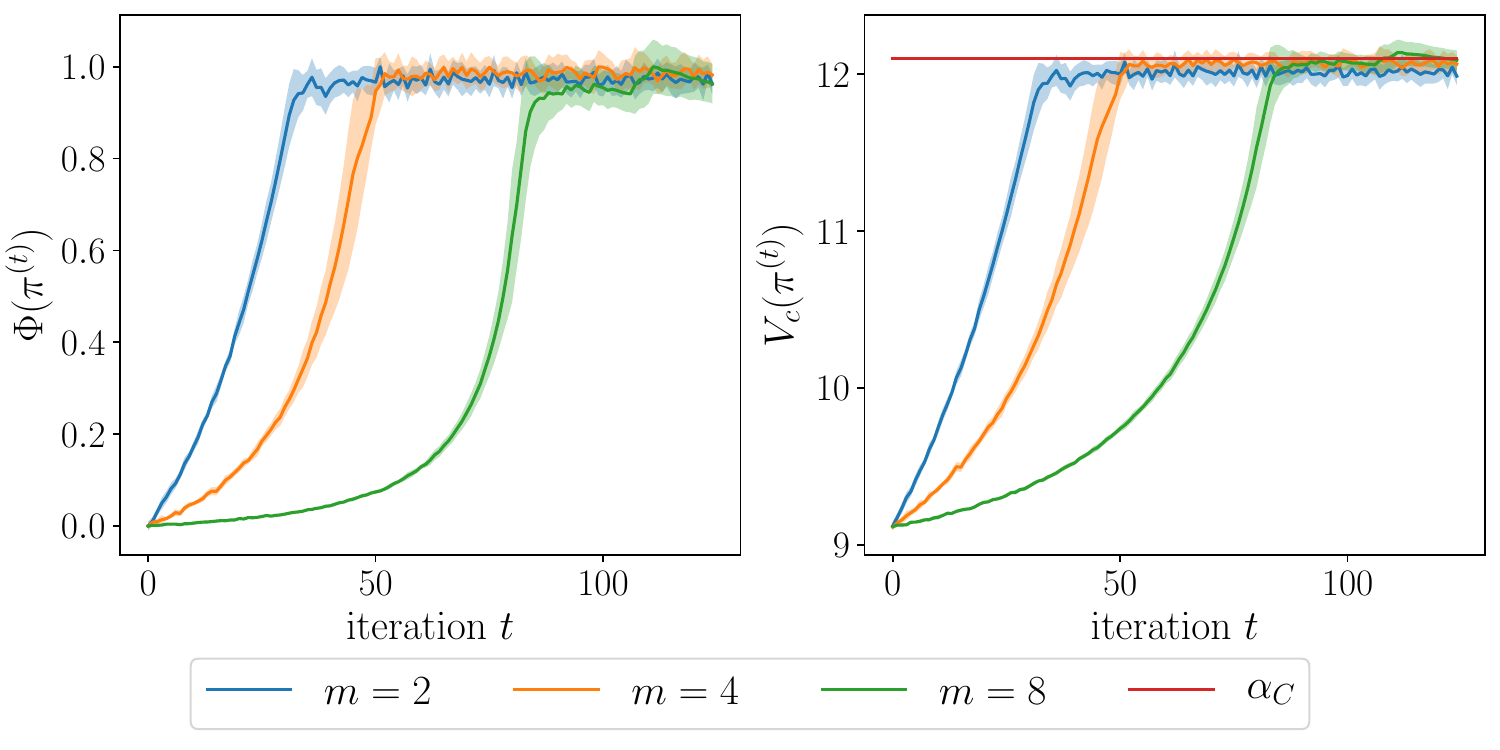}}
\vspace{.3in}
\caption{Potential (left, scaled to $[0,1]$) and constraint (right) values of \textsf{iProxCMPG} for the $m$-agent pollution tax model.}
\label{fig:pollution-sim-plot}
\end{figure}
We run \textsf{iProxCMPG} on the resulting $m$-agent CMPG for $m \in \{2,4,8\}$ and with $C=1, \alpha_C=12$. Hyperparameter choices are reported in \cref{sec:additional-experiments} and \cref{tab:params}. \cref{fig:pollution-sim-plot} shows the mean and standard deviation (shaded region) across independent runs of per-iteration potential and constraint values. Note that unlike in the theory part, we use the potential maximization perspective for experiments. We observe convergence to a constrained NE under which the minimum production requirements are approximately satisfied.

\paragraph{Marketplace for Distributed Energy Resources}
As more and more small-scale electricity producers enter the electrical grids, a marketplace emerges. Each participant needs to decide how much energy to sell given the current supply and demand. The competitive nature of such marketplaces motivates studying the convergence of independent algorithms to NEs under the constraints imposed by market rules.
The CMPG we consider has states $\mathcal S=\left\{ 0,\dots,S-1 \right\}$ indicating the grid's current energy demand from high at $0$ to low at $S-1$. Action $a_i \in \mathcal A_i=\left\{0,\dots,A_i-1\right\}$ represents the units of energy agent~$i$ contributes, for which it is rewarded with profit $r_i(s,a_i,a_{-i})=c_0 a_i^2 - c_1 a_i^2 \sum_{i \in \mathcal N}a_i - a_i c_2^s$ where $c_0,c_1,c_2$ are model parameters. State transitions are modeled by first sampling $w \sim \mathcal U(\{0,1,\dots,W\})$ which models uncertainty due to e.g.\ weather, and then setting $s'=\max\{0,\min\{S-1,\sum_{i \in \mathcal N}a_i - w\}\}$ with probability $0.9$ and $s'=w$ otherwise. For our simulations, we set $S=A=W=5, c_0=2, c_1=0.25$, and $c_2=1.25$. \citet{narasimha2022multi} show that the described game is indeed an MPG with $\Phi(\pi)=\E_{\pi,s_0 \sim \mu}[ \sum_{t=0}^{T_e} \phi_{s_t}(a_t)]$ and $\phi_s(a_i,a_{-i})=c_0\sum_{i \in \mathcal N}a_i-c_1\sum_{i \in \mathcal N}a_i^2-c_1\sum_{1 \leq i < j \leq m}a_i a_j - m c_2^s$. 

\begin{figure}[h]
\vspace{.3in}
\centerline{\includegraphics[width=\columnwidth]{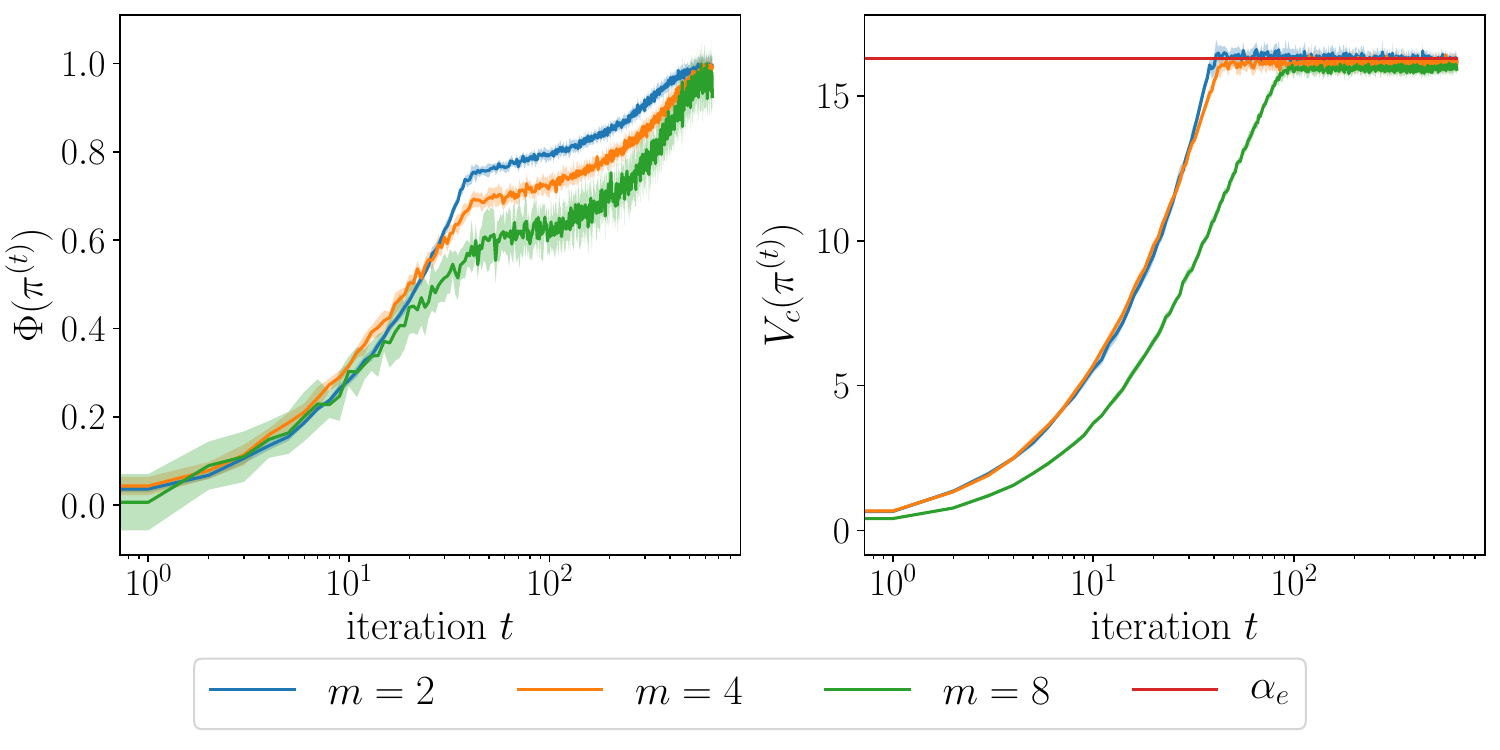}}
\vspace{.3in}
\caption{Potential (left) and constraint (right) values of \textsf{iProxCMPG} for the $m$-agent energy marketplace.}
\label{fig:der-sim-plot}
\end{figure}

We extend this game into a CMPG by having the system incur a cost per unit of energy provided to the grid, i.e., by defining $c(s,a)=\sum_{i \in \mathcal N}a_i$ for all $s \in \mathcal S$, and requiring $V_c(\pi) \leq \alpha_e$ where we set $\alpha_e=16$. \cref{fig:der-sim-plot} shows convergence to a constrained NE where players satisfy the energy provision bound on average. 

\section{CONCLUSION}
\label{sec:conclusion}

In this paper, we proposed an independent learning algorithm for learning constrained NE in CMPGs. 
Our work opens up a number of avenues for future work. 
It would be interesting to investigate whether our sample complexity can be improved to match the better sample complexity of centralized algorithms. 
Our algorithm and theoretical guarantees require the agents to run the same algorithm: This may be seen as implicit coordination between agents. Designing fully independent learning dynamics for our constrained setting, where the players may not even be aware of the existence of other players is an interesting direction.  Going beyond the class of CMPGs for learning constrained NE is another research direction that is worth exploring. 
Using function approximation to scale to large state-action spaces beyond the tabular setting is also a promising prospect for future work. 

\subsubsection*{Acknowledgements}
We would like to thank the anonymous reviewers for their valuable comments. The work is supported by ETH research grant,  Swiss National Science Foundation (SNSF) Project Funding No. 200021-207343, and SNSF Starting Grant.  A.B. acknowledges support from the ETH Foundations of Data Science (ETH-FDS) postdoctoral fellowship.

\bibliography{references}

\begin{thebibliography}{}

\bibitem[Agarwal et~al., 2021]{agarwal-et-al21pg}
Agarwal, A., Kakade, S.~M., Lee, J.~D., and Mahajan, G. (2021).
\newblock On the theory of policy gradient methods: Optimality, approximation,
  and distribution shift.
\newblock {\em Journal of Machine Learning Research}, 22(98):1--76.

\bibitem[Alatur et~al., 2023]{alatur_provably_2023}
Alatur, P., Ramponi, G., He, N., and Krause, A. (2023).
\newblock Provably learning nash policies in constrained markov potential
  games.
\newblock In {\em Sixteenth European Workshop on Reinforcement Learning}.

\bibitem[Altman, 1999]{altman99cmdps}
Altman, E. (1999).
\newblock {\em Constrained Markov decision processes}, volume~7.
\newblock CRC press.

\bibitem[Altman and Shwartz, 2000]{altman_constrained_2000}
Altman, E. and Shwartz, A. (2000).
\newblock Constrained {Markov} {Games}: {Nash} {Equilibria}.
\newblock In Filar, J.~A., Gaitsgory, V., and Mizukami, K., editors, {\em
  Advances in {Dynamic} {Games} and {Applications}}, Annals of the
  {International} {Society} of {Dynamic} {Games}, pages 213--221, Boston, MA.
  Birkhäuser.

\bibitem[Boob et~al., 2022]{boob2022level}
Boob, D., Deng, Q., and Lan, G. (2022).
\newblock Level constrained first order methods for function constrained
  optimization.
\newblock {\em arXiv preprint arXiv:2205.08011}.

\bibitem[Boob et~al., 2023]{boob-et-al23mathprog}
Boob, D., Deng, Q., and Lan, G. (2023).
\newblock Stochastic first-order methods for convex and nonconvex functional
  constrained optimization.
\newblock {\em Mathematical Programming}, 197(1):215--279.

\bibitem[Chen et~al., 2022]{chen-et-al22constr-markov-games}
Chen, Z., Ma, S., and Zhou, Y. (2022).
\newblock Finding correlated equilibrium of constrained markov game: A
  primal-dual approach.
\newblock In {\em Advances in Neural Information Processing Systems}.

\bibitem[Chen et~al., 2023]{chen-et-al23finite-sample-zs}
Chen, Z., Zhang, K., Mazumdar, E., Ozdaglar, A.~E., and Wierman, A. (2023).
\newblock A finite-sample analysis of payoff-based independent learning in
  zero-sum stochastic games.
\newblock In {\em Thirty-seventh Conference on Neural Information Processing
  Systems}.

\bibitem[Dafoe et~al., 2021]{dafoe-et-al21}
Dafoe, A., Bachrach, Y., Hadfield, G., Horvitz, E., Larson, K., and Graepel, T.
  (2021).
\newblock Cooperative ai: machines must learn to find common ground.
\newblock {\em Nature}, 593(7857):33--36.

\bibitem[Dafoe et~al., 2020]{dafoe-et-al20}
Dafoe, A., Hughes, E., Bachrach, Y., Collins, T., McKee, K.~R., Leibo, J.~Z.,
  Larson, K., and Graepel, T. (2020).
\newblock Open problems in cooperative ai.
\newblock {\em arXiv preprint arXiv:2012.08630}.

\bibitem[Daskalakis et~al., 2020]{daskalakis-foster-golowich20indep}
Daskalakis, C., Foster, D.~J., and Golowich, N. (2020).
\newblock Independent policy gradient methods for competitive reinforcement
  learning.
\newblock {\em Advances in neural information processing systems},
  33:5527--5540.

\bibitem[Davis and Grimmer, 2019]{davis-grimmer19}
Davis, D. and Grimmer, B. (2019).
\newblock Proximally guided stochastic subgradient method for nonsmooth,
  nonconvex problems.
\newblock {\em SIAM Journal on Optimization}, 29(3):1908--1930.

\bibitem[Diddigi et~al., 2020]{diddigi_actor-critic_2020}
Diddigi, R.~B., Danda, S. K.~R., J., P.~K., and Bhatnagar, S. (2020).
\newblock Actor-{Critic} {Algorithms} for {Constrained} {Multi}-agent
  {Reinforcement} {Learning}.
\newblock arXiv:1905.02907 [cs].

\bibitem[Ding et~al., 2022]{ding_independent_2022}
Ding, D., Wei, C.-Y., Zhang, K., and Jovanovic, M. (2022).
\newblock Independent {Policy} {Gradient} for {Large}-{Scale} {Markov}
  {Potential} {Games}: {Sharper} {Rates}, {Function} {Approximation}, and
  {Game}-{Agnostic} {Convergence}.
\newblock In {\em Proceedings of the 39th {International} {Conference} on
  {Machine} {Learning}}, pages 5166--5220. PMLR.
\newblock ISSN: 2640-3498.

\bibitem[Ding et~al., 2023]{ding-et-al23l4dc}
Ding, D., Wei, X., Yang, Z., Wang, Z., and Jovanovic, M. (2023).
\newblock Provably efficient generalized lagrangian policy optimization for
  safe multi-agent reinforcement learning.
\newblock In {\em Learning for Dynamics and Control Conference}, pages
  315--332. PMLR.

\bibitem[ElSayed-Aly et~al., 2021]{elsayedaly-et-al21aamas-safe-marl}
ElSayed-Aly, I., Bharadwaj, S., Amato, C., Ehlers, R., Topcu, U., and Feng, L.
  (2021).
\newblock Safe multi-agent reinforcement learning via shielding.
\newblock In {\em Proceedings of the 20th International Conference on
  Autonomous Agents and MultiAgent Systems}, AAMAS '21, page 483–491,
  Richland, SC. International Foundation for Autonomous Agents and Multiagent
  Systems.

\bibitem[Fox et~al., 2022]{fox-et-al22}
Fox, R., Mcaleer, S.~M., Overman, W., and Panageas, I. (2022).
\newblock Independent natural policy gradient always converges in markov
  potential games.
\newblock In {\em International Conference on Artificial Intelligence and
  Statistics}, pages 4414--4425. PMLR.

\bibitem[Giannou et~al., 2022]{giannou-et-al22}
Giannou, A., Lotidis, K., Mertikopoulos, P., and Vlatakis-Gkaragkounis, E.-V.
  (2022).
\newblock On the convergence of policy gradient methods to nash equilibria in
  general stochastic games.
\newblock {\em Advances in Neural Information Processing Systems},
  35:7128--7141.

\bibitem[Gu et~al., 2023]{gu-et-al23}
Gu, S., {Grudzien Kuba}, J., Chen, Y., Du, Y., Yang, L., Knoll, A., and Yang,
  Y. (2023).
\newblock Safe multi-agent reinforcement learning for multi-robot control.
\newblock {\em Artificial Intelligence}, 319:103905.

\bibitem[Guo et~al., 2023]{guo-et-al23alpha-mpgs}
Guo, X., Li, X., Maheshwari, C., Sastry, S., and Wu, M. (2023).
\newblock Markov $alpha$-potential games: Equilibrium approximation and regret
  analysis.
\newblock {\em arXiv preprint arXiv:2305.12553}.

\bibitem[Hare and Sagastiz{\'a}bal, 2009]{hare-sagastizabal09}
Hare, W. and Sagastiz{\'a}bal, C. (2009).
\newblock Computing proximal points of nonconvex functions.
\newblock {\em Mathematical Programming}, 116(1-2):221--258.

\bibitem[Jia and Grimmer, 2023]{jia_first-order_2023}
Jia, Z. and Grimmer, B. (2023).
\newblock First-{Order} {Methods} for {Nonsmooth} {Nonconvex} {Functional}
  {Constrained} {Optimization} with or without {Slater} {Points}.
\newblock arXiv:2212.00927 [math].

\bibitem[Lan and Zhou, 2020]{lan-zhou20csa}
Lan, G. and Zhou, Z. (2020).
\newblock Algorithms for stochastic optimization with function or expectation
  constraints.
\newblock {\em Computational Optimization and Applications}, 76(2):461--498.

\bibitem[Leonardos et~al., 2022]{leonardos_global_2021}
Leonardos, S., Overman, W., Panageas, I., and Piliouras, G. (2022).
\newblock Global convergence of multi-agent policy gradient in markov potential
  games.
\newblock In {\em International Conference on Learning Representations}.

\bibitem[Liu et~al., 2023]{liu-et-al23autonomous-driving}
Liu, M., Kolmanovsky, I., Tseng, H.~E., Huang, S., Filev, D., and Girard, A.
  (2023).
\newblock Potential game-based decision-making for autonomous driving.
\newblock {\em IEEE Transactions on Intelligent Transportation Systems}.

\bibitem[Ma et~al., 2020]{ma_quadratically_2023}
Ma, R., Lin, Q., and Yang, T. (2020).
\newblock Quadratically regularized subgradient methods for weakly convex
  optimization with weakly convex constraints.
\newblock pages 6554--6564. PMLR.

\bibitem[Macua et~al., 2018]{macua-et-al18}
Macua, S.~V., Zazo, J., and Zazo, S. (2018).
\newblock Learning parametric closed-loop policies for markov potential games.
\newblock In {\em International Conference on Learning Representations}.

\bibitem[Maheshwari et~al., 2023]{maheshwari_independent_2023}
Maheshwari, C., Wu, M., Pai, D., and Sastry, S. (2023).
\newblock Independent and {Decentralized} {Learning} in {Markov} {Potential}
  {Games}.
\newblock arXiv:2205.14590 [cs, eess].

\bibitem[Mao et~al., 2022]{mao-et-al22}
Mao, W., Yang, L., Zhang, K., and Basar, T. (2022).
\newblock On improving model-free algorithms for decentralized multi-agent
  reinforcement learning.
\newblock In {\em International Conference on Machine Learning}, pages
  15007--15049. PMLR.

\bibitem[Marden, 2012]{marden-et-al12}
Marden, J.~R. (2012).
\newblock State based potential games.
\newblock {\em Automatica}, 48(12):3075--3088.

\bibitem[Monderer and Shapley, 1996]{monderer-shapley96}
Monderer, D. and Shapley, L.~S. (1996).
\newblock Potential games.
\newblock {\em Games and economic behavior}, 14(1):124--143.

\bibitem[Narasimha et~al., 2022]{narasimha2022multi}
Narasimha, D., Lee, K., Kalathil, D., and Shakkottai, S. (2022).
\newblock Multi-agent learning via markov potential games in marketplaces for
  distributed energy resources.
\newblock In {\em 2022 IEEE 61st Conference on Decision and Control (CDC)},
  pages 6350--6357. IEEE.

\bibitem[Ozdaglar et~al., 2021]{ozdaglar-sayin-zhang21survey}
Ozdaglar, A., Sayin, M.~O., and Zhang, K. (2021).
\newblock Independent learning in stochastic games.
\newblock {\em Invited chapter for the International Congress of Mathematicians
  2022 (ICM 2022), arXiv preprint arXiv:2111.11743}.

\bibitem[Paternain et~al., 2019]{paternain_constrained_2019}
Paternain, S., Chamon, L., Calvo-Fullana, M., and Ribeiro, A. (2019).
\newblock Constrained reinforcement learning has zero duality gap.
\newblock {\em Advances in Neural Information Processing Systems}, 32.

\bibitem[Polyak, 1967]{polyak1967general}
Polyak, B.~T. (1967).
\newblock A general method for solving extremal problems.
\newblock In {\em Doklady Akademii Nauk}, volume 174, pages 33--36. Russian
  Academy of Sciences.

\bibitem[Sayin et~al., 2021]{sayin-et-al21decentralized}
Sayin, M., Zhang, K., Leslie, D., Basar, T., and Ozdaglar, A. (2021).
\newblock Decentralized q-learning in zero-sum markov games.
\newblock {\em Advances in Neural Information Processing Systems},
  34:18320--18334.

\bibitem[Shalev-Shwartz et~al., 2016]{shalev-shwartz-et-al16}
Shalev-Shwartz, S., Shammah, S., and Shashua, A. (2016).
\newblock Safe, multi-agent, reinforcement learning for autonomous driving.
\newblock {\em arXiv preprint arXiv:1610.03295}.

\bibitem[Song et~al., 2022]{song-mei-bai22}
Song, Z., Mei, S., and Bai, Y. (2022).
\newblock When can we learn general-sum markov games with a large number of
  players sample-efficiently?
\newblock In {\em International Conference on Learning Representations}.

\bibitem[Vershynin, 2018]{vershynin2018high}
Vershynin, R. (2018).
\newblock {\em High-dimensional probability: An introduction with applications
  in data science}, volume~47.
\newblock Cambridge university press.

\bibitem[Xiao, 2022]{xiao22}
Xiao, L. (2022).
\newblock On the convergence rates of policy gradient methods.
\newblock {\em Journal of Machine Learning Research}, 23(282):1--36.

\bibitem[Yao and Ding, 2022]{yao2022learning}
Yao, Z. and Ding, Z. (2022).
\newblock Learning distributed and fair policies for network load balancing as
  markov potential game.
\newblock {\em Advances in Neural Information Processing Systems},
  35:28815--28828.

\bibitem[Zhang et~al., 2022a]{zhang-et-al22softmax}
Zhang, R., Mei, J., Dai, B., Schuurmans, D., and Li, N. (2022a).
\newblock On the global convergence rates of decentralized softmax gradient
  play in markov potential games.
\newblock {\em Advances in Neural Information Processing Systems},
  35:1923--1935.

\bibitem[Zhang et~al., 2022b]{zhang-et-al21grad-play}
Zhang, R.~C., Ren, Z., and Li, N. (2022b).
\newblock Gradient play in stochastic games: Stationary points and local
  geometry.
\newblock {\em IFAC-PapersOnLine}, 55(30):73--78.
\newblock 25th International Symposium on Mathematical Theory of Networks and
  Systems MTNS 2022.

\bibitem[Zhou et~al., 2023]{zhou-et-al23networked-mpg}
Zhou, Z., Chen, Z., Lin, Y., and Wierman, A. (2023).
\newblock Convergence rates for localized actor-critic in networked {M}arkov
  potential games.
\newblock In Evans, R.~J. and Shpitser, I., editors, {\em Proceedings of the
  Thirty-Ninth Conference on Uncertainty in Artificial Intelligence}, volume
  216 of {\em Proceedings of Machine Learning Research}, pages 2563--2573.
  PMLR.

\end{thebibliography}
\bibliographystyle{apalike}

\section*{Checklist}

\begin{enumerate}
\item For all models and algorithms presented, check if you include:
\begin{enumerate}
\item A clear description of the mathematical setting, assumptions, algorithm, and/or model. [Yes]
\item An analysis of the properties and complexity (time, space, sample size) of any algorithm. [Yes]
\item (Optional) Anonymized source code, with specification of all dependencies, including external libraries. [Yes]
\end{enumerate}

\item For any theoretical claim, check if you include:
\begin{enumerate}
\item Statements of the full set of assumptions of all theoretical results. [Yes]
\item Complete proofs of all theoretical results. [Yes]
\item Clear explanations of any assumptions. [Yes]     
\end{enumerate}

\item For all figures and tables that present empirical results, check if you include:
\begin{enumerate}
\item The code, data, and instructions needed to reproduce the main experimental results (either in the supplemental material or as a URL). [Yes]
\item All the training details (e.g., data splits, hyperparameters, how they were chosen). [Yes]
\item A clear definition of the specific measure or statistics and error bars (e.g., with respect to the random seed after running experiments multiple times). [Yes]
\item A description of the computing infrastructure used. (e.g., type of GPUs, internal cluster, or cloud provider). [Yes]
\end{enumerate}

\item If you are using existing assets (e.g., code, data, models) or curating/releasing new assets, check if you include:
\begin{enumerate}
\item Citations of the creator If your work uses existing assets. [Yes]
\item The license information of the assets, if applicable. [Not Applicable]
\item New assets either in the supplemental material or as a URL, if applicable. [Yes]
\item Information about consent from data providers/curators. [Not Applicable]
\item Discussion of sensible content if applicable, e.g., personally identifiable information or offensive content. [Not Applicable]
\end{enumerate}

\item If you used crowdsourcing or conducted research with human subjects, check if you include:
\begin{enumerate}
\item The full text of instructions given to participants and screenshots. [Not Applicable]
\item Descriptions of potential participant risks, with links to Institutional Review Board (IRB) approvals if applicable. [Not Applicable]
\item The estimated hourly wage paid to participants and the total amount spent on participant compensation. [Not Applicable]
\end{enumerate}
\end{enumerate}

\onecolumn
\appendix
\thispagestyle{empty}
\aistatstitle{Supplementary Materials}

\addtocontents{toc}{\protect\setcounter{tocdepth}{2}}
\tableofcontents

\section{\textsf{iProxCMPG}: FULL STOCHASTIC ALGORITHM}
\label{sec:appendix-full-stochastic-alg}

In this section, for the convenience of the reader, we report the full pseudo-code of Algorithm~\ref{alg:iprox-cmpg} in the stochastic setting where exact gradients are not available. See Algorithm~\ref{alg:iprox-cmpg-sto}.

\begin{algorithm*}[h]
\caption{\textsf{iProxCMPG}: \textbf{i}ndependent \textbf{Prox}imal-policy algorithm for \textbf{CMPG}s}
\label{alg:iprox-cmpg-sto}
\begin{algorithmic}[1]
\State \textbf{initialization:} $\pi^{(0)} \in \Pi^{\xi}$ s.t.\ $V_c(\pi^{(0)})<\alpha$ and suitably chosen $\eta,\beta,\xi,T,K,\{(\nu_k,\delta_k,\rho_k)\}_{0 \leq k \leq K}$
\For{$t=0,\dots,T-1$}
\State $\pi^{(t,0)}_i=\pi^{(t)}_i$
\For{$k=0,\dots,K-1$ and $i \in \mathcal N$ simultaneously}
\State sample $B$ trajectories $\{\{( a^{(b)}_{i,j},s^{(b)}_j,r^{(b)}_{i,j},c^{(b)}_j)\}_{j=0}^{\hat T^{(b)}_e}\}_{b=1}^B$ by following $\pi_i^{(t,k)}$
\State set $\hat V_{r_i}(\pi^{(t,k)})=\frac{1}{B}\sum_{b=1}^B\sum_{j=0}^{\hat T^{(b)}_e}r^{(b)}_{i,j}$ and $\hat V_c(\pi^{(t,k)})=\frac{1}{B}\sum_{b=1}^B\sum_{j=0}^{\hat T_e^{(b)}}c^{(b)}_j$
\State $\hat\nabla V^{r_i}_{\pi_i}(\pi^{(t,k)})=\hat V_{r_i}(\pi^{(t,k)}) \cdot \frac{1}{B}\sum_{b=1}^B\sum_{j=1}^{\hat T_e^{(b)}}\nabla \log \pi_i(a^{(b)}_{i,j} \mid s_j^{(b)})$
\State $\hat\nabla V^c_{\pi_i}(\pi^{(t,k)})=\hat V_c(\pi^{(t,k)}) \cdot \frac{1}{B}\sum_{b=1}^B\sum_{j=1}^{\hat T_e^{(b)}}\nabla \log \pi_i(a^{(b)}_{i,j} \mid s_j^{(b)})$
\State $\pi^{(t,k+1)}_i= \begin{cases}
\mathcal P_{\Pi^{i,\xi}} \left[ \pi^{(t,k)}_i - \nu_k \hat{\nabla}_{\pi_i}V_{r_i}(\pi^{(t,k)})-\frac{\nu_k}{\eta} (\pi^{(t,k)}_i-\pi_i^{(t)}) \right] &\text{if } \hat{V}_c(\pi^{(t,k)}) + \beta - \alpha \leq \delta_k \\[4pt]
\mathcal P_{\Pi^{i,\xi}} \left[ \pi^{(t,k)}_i-\nu_k \hat{\nabla}_{\pi_i}V_{c}(\pi^{(t,k)})-\frac{\nu_k}{\eta} (\pi^{(t,k)}_i-\pi_i^{(t)}) \right] &\text{otherwise}\end{cases}$
\EndFor
\State $\mathcal B^{(t)}=\{ \lfloor K/2 \rfloor \leq k \leq K \mid \hat V_c(\pi^{(t,k)}) \leq \delta_k \}$
\State $\pi_i^{(t+1)}=\pi_i^{(t,\hat k)}$ where $\hat k=1$ if $\mathcal B^{(t)}=\emptyset$ and else sampled s.t.\ for $k \in \mathcal B^{(t)}$, $\mathbb P(\hat k = k)=\left( \sum_{k \in \mathcal B^{(t)}} \rho_k \right)^{-1} \rho_k$
\EndFor
\State \textbf{output:} $\pi_i^{(T)}$ for $i\in \mathcal N$
\end{algorithmic}
\end{algorithm*}

\begin{remark}
\label{remark:index-sampling}
For our analysis, the index $\hat k$ sampled in line 11 of \cref{alg:iprox-cmpg-sto} is supposed to be picked the same by all the players. This sampling step does not require any state-action-reward samples, it is just an index sampling that is useful for our proofs (even in the exact gradients case), see the constraint satisfaction inequality in the proof of Theorem~\ref{thm:csa-result}, page 28. 
\end{remark}

\section{PROOFS FOR SECTION~\ref{sec:analysis-complexity}}

\paragraph{Notation} For any integer~$n \geq 1$, we use the notation~$[n] := \{1, \dots, n \}$ throughout the proofs.

In this section, we provide complete proofs of our main results. We begin with the exact gradients case before addressing the more involved finite sample case.

\subsection{Proof of Theorem~\ref{thm:main-exact-gradients} --- Exact Gradients Case}
\label{sec:appendix-proof-exact-gradients}

First, we restate \cref{thm:main-exact-gradients}. 
\begin{T1}
Let \cref{ass:init-feasible,ass:uniform-slater} hold and let the distribution mismatch coefficient $D := \max_{\pi \in \Pi} \left\| d_{\mu}^{\pi}/\mu \right\|_{\infty}$ be finite. For any $\epsilon>0$, after running \textsf{iProxCMPG}, \cref{alg:iprox-cmpg}, with $\xi=0$, suitably chosen $\eta,\beta,T,K$, and $\{(\nu_k,\delta_k,\rho_k)\}_{0 \leq k \leq K}$, there exists $t \in [T]$, such that $\pi^{(t)}$ is a constrained $\epsilon$-NE in expectation\footnote{Notice that here we take the expectation w.r.t.\ the randomness which is induced by the sampling of $\hat k$ in line 11 of \cref{alg:iprox-cmpg-sto}.}. The total iteration complexity is given by $\mathcal O \left( \epsilon^{-4} \right)$ where $\mathcal O(\cdot)$ hides polynomial dependencies in $m,S,A_{\max},D, 1-\gamma$, and~$\zeta$.
\end{T1}

Before analyzing the outer loop of \cref{alg:iprox-cmpg}, we begin by focusing on the proximal-point update step. We first introduce some useful notation. Then, we explain how we can use the switching gradient algorithm in \cref{sec:appendix-csa} for approximately solving the proximal-point update step independently. We proceed by establishing guarantees that will be important in the analysis of the outer loop of \cref{alg:iprox-cmpg}.

\paragraph{Notation} Recall that for any policies~$\pi, \pi' \in \Pi$ and any~$\eta > 0$,  $\Phi_{\eta,\pi'}(\pi)=\Phi(\pi) + \frac{1}{2\eta} \left\| \pi-\pi' \right\|^2$, $V_{\eta,\pi'}^c(\pi)=V_c(\pi)+\frac{1}{2\eta} \left\| \pi-\pi' \right\|^2$ and $\Pi^c_{\eta,\pi'}=\left\{ \pi \in \Pi \mid V^c_{\eta,\pi'}(\pi) \leq \alpha-\beta \right\}$. Moreover, recall the following constrained optimization problem: 
\begin{align}
\label{eqn:prox-problem}
\min_{\pi \in \Pi^c_{\eta,\pi'}} \Phi_{\eta,\pi'}(\pi)\,.
\tag{ProxPb($\eta,\pi'$)}
\end{align}
In the following ``$\lesssim$'' denotes inequality up to numerical constants. Moreover, let $L_{\Phi}$ be the smoothness constant of the functions~$\Phi$ and~$V_c$ (see \cref{lem:basics}) and let $\Phi_{\max}$ be an upper bound\footnote{Such a bound is always trivially available.} on $\Phi$. Recall that under \cref{ass:init-feasible}, the initial policy~$\pi^{(0)}$ is strictly feasible. We denote the respective slack by $\bar\zeta_0>0$, i.e., $\bar\zeta_0 := \alpha - V_c(\pi^{(0)})$.

Next, we state and prove the guarantees provided by our proximal-point update subroutine.
\begin{lemma}
\label{lem:subproblem-guarantee}
Let \cref{ass:init-feasible} hold and let $0 < \bar\epsilon \leq \bar\zeta_0$. Set~$\beta=\bar\epsilon$, $\eta=\frac{1}{2L_{\Phi}}$, and $\xi=0$. Denote by~$\tilde\pi^{(t+1)}$ the unique optimal solution to (\hyperref[eqn:prox-problem]{ProxPb}($\eta,\pi^{(t)}$)). 
There exist $K = \mathcal O\left( \bar\epsilon^{\,-2} \right)$
and suitable choices of $\{(\nu_k,\delta_k,\rho_k)\}_{0 \leq k \leq K}$, such that lines 4-6 of \cref{alg:iprox-cmpg} guarantee that for any $t \in [T-1]$,
\begin{equation}
\label{eqn:subproblem-guarantee}
\begin{aligned}
\E \left[ \Phi_{\eta,\pi^{(t)}}(\pi^{(t+1)}) - \Phi_{\eta,\pi^{(t)}}(\tilde\pi^{(t+1)}) \right] &\leq \bar\epsilon^2, \\
\E \left[ V_c(\pi^{(t+1)}) \right] &\leq \alpha\,, 
\end{aligned}
\end{equation}
where the expectation is with respect to the randomness induced by the sampling of $\hat k$ in line 11 of \cref{alg:iprox-cmpg-sto}.
\end{lemma}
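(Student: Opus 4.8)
The plan is to recognize lines~4--6 of \cref{alg:iprox-cmpg}, together with the window selection in lines~7--8, as an instance of the switching-gradient (CSA) scheme of \citet{lan-zhou20csa} applied to the strongly convex subproblem (\hyperref[eqn:prox-problem]{ProxPb}$(\eta,\pi^{(t)})$), and then to invoke the generic CSA guarantee \cref{thm:csa-result} after checking its hypotheses for this particular subproblem. First I would fix the correspondence: take the objective $f := \Phi_{\eta,\pi^{(t)}}$ and the constraint $g := V^c_{\eta,\pi^{(t)}} - (\alpha-\beta)$, so that $g \leq 0$ encodes $\pi \in \Pi^c_{\eta,\pi^{(t)}}$. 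The structural fact that makes the independent, turn-free updates legitimate is the MPG identity $\nabla_{\pi_i}\Phi(\pi) = \nabla_{\pi_i}V_{r_i}(\pi)$: each agent's objective step uses $\hat\nabla_{\pi_i}V_{r_i}(\pi^{(t,k)}) + \frac{1}{\eta}(\pi^{(t,k)}_i - \pi^{(t)}_i)$, and concatenating over $i \in \mathcal N$ assembles exactly the (projected) gradient step on $\Phi_{\eta,\pi^{(t)}}$; likewise the constraint branch assembles the gradient step on $V^c_{\eta,\pi^{(t)}}$. Thus the agent-wise iteration reproduces the centralized CSA iteration on $f$ and $g$.

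Next I would verify the regularity inputs required by \cref{thm:csa-result}. With $\eta = \frac{1}{2L_\Phi}$ and $L_\Phi$ the common smoothness constant of $\Phi$ and $V_c$ from \cref{lem:basics}, both $\Phi_{\eta,\pi^{(t)}}$ and $V^c_{\eta,\pi^{(t)}}$ are $(\frac{1}{\eta}-L_\Phi) = L_\Phi$-strongly convex and $(\frac{1}{\eta}+L_\Phi) = 3L_\Phi$-smooth; their gradients are uniformly bounded on the compact set $\Pi$, whose diameter is also controlled through \cref{lem:basics}. The strong convexity is what yields the $\mathcal O(1/K)$ rate and what lets the randomized output (the $\rho_k$-weighted draw of $\hat k$) turn the weighted-average CSA bound into the in-expectation statements claimed; indeed $\E_{\hat k}[\,f(\pi^{(t,\hat k)})\,]$ is precisely the $\rho_k$-weighted average over $\mathcal B^{(t)}$ that CSA bounds.

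The main obstacle — and the reason we cannot cite \citet{lan-zhou20csa} verbatim — is that the switching test in line~5 and the window $\mathcal B^{(t)}$ in line~7 are driven by the \emph{un}regularized estimate $\hat V_c$ (which each agent can form independently), whereas the constraint branch steps along $\nabla V^c_{\eta,\pi^{(t)}}$. I would reconcile this through the pointwise inequality $V_c(\pi) \leq V^c_{\eta,\pi^{(t)}}(\pi)$. On any constraint step the test $\hat V_c(\pi^{(t,k)}) + \beta - \alpha \leq \delta_k$ fails, and since here $\hat V_c = V_c$, this forces $g(\pi^{(t,k)}) = V^c_{\eta,\pi^{(t)}}(\pi^{(t,k)}) + \beta - \alpha \geq V_c(\pi^{(t,k)}) + \beta - \alpha > \delta_k$, which is exactly the lower bound on the constraint value that the CSA telescoping needs to discard the constraint-step terms. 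Conversely, every $k \in \mathcal B^{(t)}$ passes the test, so $V_c(\pi^{(t,k)})$ lies below $\alpha$ for the prescribed $\delta_k$; because $\pi^{(t+1)}$ is drawn from $\mathcal B^{(t)}$, the feasibility conclusion $\E[V_c(\pi^{(t+1)})] \leq \alpha$ follows \emph{directly}, without ever appealing to the regularized constraint. The delicate part is ensuring that $\mathcal B^{(t)}$ carries enough $\rho_k$-weight (in particular is nonempty): this is where a uniform-in-$t$ strict-feasibility slack $\zeta_0>0$ for the subproblems enters, obtained from \cref{ass:init-feasible} with the reference point $\pi^{(t)}$ (for which $V^c_{\eta,\pi^{(t)}}(\pi^{(t)}) = V_c(\pi^{(t)})$) and the choice $\beta = \bar\epsilon \leq \bar\zeta_0$, with the restriction of the window to the second half $[\lfloor K/2\rfloor,K]$ guaranteeing that by then sufficiently many objective steps have accrued. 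Tracking this slack across the outer iterations is the bookkeeping I expect to be the crux.

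Finally, I would assemble the two displayed bounds. Specializing \cref{thm:csa-result} to the $L_\Phi$-strongly convex $f$ and $g$ with step sizes $\nu_k \asymp 1/k$ and averaging weights $\rho_k$ gives an objective optimality gap over the feasible window of order $1/K$; setting $K = \mathcal O(\bar\epsilon^{-2})$ then drives $\E[\Phi_{\eta,\pi^{(t)}}(\pi^{(t+1)}) - \Phi_{\eta,\pi^{(t)}}(\tilde\pi^{(t+1)})] \leq \bar\epsilon^2$, while the feasibility bound is as argued above. The constants absorbed into $\mathcal O(\cdot)$ depend on $L_\Phi$, the uniform gradient bound, the diameter of $\Pi$, and $1/\zeta_0$, all of which are polynomial in $m,S,A_{\max},D$, and $(1-\gamma)^{-1}$ by \cref{lem:basics}.
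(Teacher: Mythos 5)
Your proposal is correct and follows essentially the same route as the paper's proof: Step~1 reduces the agent-wise updates to a centralized switching-gradient iteration via $\nabla_{\pi_i}\Phi(\pi)=\nabla_{\pi_i}V_{r_i}(\pi)$ and separability of the projection, and Step~2 invokes the modified CSA guarantee (\cref{thm:csa-result}) with $\mu=L_\Phi$, an induction over $t$ to propagate feasibility of the relaxed constraint using $\beta=\bar\epsilon\leq\bar\zeta_0$, and $K=\mathcal O(\bar\epsilon^{-2})$ from the $\mathcal O(1/K)$ strongly convex rate. Your handling of the unregularized switching test via $V_c(\pi)\leq V^c_{\eta,\pi^{(t)}}(\pi)$ is exactly the paper's modification of CSA (carried out in \cref{sec:appendix-csa}), with the only cosmetic caveat that the feasibility bound $\E[V_c(\pi^{(t+1)})]\leq\alpha$ comes from the expected-$\delta_{\hat k}$ bound of \cref{thm:csa-result} absorbed by the slack $\beta$ (and nonemptiness of $\mathcal B^{(t)}$ is handled by the CSA dichotomy and the fallback $\hat k=1$, not by a Slater-type slack).
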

\begin{proof}
We divide the proof into two steps.
\begin{itemize}
\item\textbf{Step 1: Equivalent centralized update rule for our algorithm.} 
First, we argue that independently running the subroutine given by the inner loop of \cref{alg:iprox-cmpg}, i.e., lines 4-6, is equivalent to a centralized execution of the stochastic switching subgradient algorithm (see \cref{alg:csa}) applied to our proximal-point update problem. Crucially, as observed by \cite{leonardos_global_2021}, Proposition B.1, for any $i \in \mathcal N$ and $\pi \in \Pi$, it holds that $\nabla_{\pi_i} \Phi(\pi) = \nabla_{\pi_i} V_{r_i}(\pi)$.
We can extend this observation to our regularized potential and value functions, namely for any $\pi' \in \Pi$,
\begin{align*}
\nabla_{\pi_i} \Phi_{\eta,\pi'}(\pi) &= \nabla_{\pi_i} \Phi(\pi) + \frac{1}{\eta}\left( \pi_i-\pi_i' \right) \\
&= \nabla_{\pi_i} V_{r_i}(\pi) + \frac{1}{\eta}\left( \pi_i-\pi_i' \right),
\end{align*}
which is an expression that can be evaluated independently by player~$i$, since access to the joint policy $\pi$ is not required. Together with separability of the projection operator $\mathcal P_{\Pi^{\xi}}$, see e.g. \cite{leonardos_global_2021}, Lemma~D.1, we have 
\begin{align*}
\left( \mathcal P_{\Pi^{i,\xi}}\left[ \pi_i^{(t,k)} - \nu_k\nabla_{\pi_i}V^{r_i}_{\eta,\pi^{(t,k)}}(\pi^{(t,k)}) \right] \right)_{i \in \mathcal N} = \mathcal P_{\Pi^{\xi}} \left[ \pi^{(t,k)}-\nu_k\nabla_{\pi}\Phi_{\eta,\pi^{(t,k)}}(\pi^{(t,k)}) \right],
\end{align*}
and similarly, for the constraint value function,
\begin{align*}
\left( \mathcal P_{\Pi^{i,\xi}}\left[ \pi_i^{(t,k)} - \nu_k\nabla_{\pi_i}V^c_{\eta,\pi^{(t,k)}}(\pi^{(t,k)}) \right] \right)_{i \in \mathcal N} = \mathcal P_{\Pi^{\xi}} \left[ \pi^{(t,k)}-\nu_k\nabla_{\pi}V^c_{\eta,\pi^{(t,k)}}(\pi^{(t,k)}) \right].
\end{align*}
Moreover, since $V_c(\pi^{(t,k)})$ can be estimated equally by each player due to the cooperative nature of our constraint, we can conclude that \cref{alg:iprox-cmpg} is equivalent to a centralized version where the independent, simultaneous update in line 5 is replaced by the following centralized version:
\begin{align*}
\pi^{(t,k+1)}=
\begin{cases}
\mathcal P_{\Pi^{\xi}} \left[ \pi^{(t,k)} - \nu_k \hat{\nabla}_{\pi}\Phi_{\eta,\pi^{(t,k)}}(\pi^{(t,k)}) \right] &\text{if } \hat{V}_c(\pi^{(t,k)}) + \beta - \alpha \leq \delta_k, \\[6pt]
\mathcal P_{\Pi^{\xi}} \left[ \pi^{(t,k)}-\nu_k \hat{\nabla}_{\pi}V^c_{\eta,\pi^{(t,k)}}(\pi^{(t,k)}) \right] &\text{otherwise.}\end{cases}
\end{align*}

\item\textbf{Step 2: Induction on~$t$.}
Next, to prove the claimed guarantee for all $t \in [T-1]$, we proceed by induction on $t$. We will invoke results on the stochastic switching gradient algorithm (see CSA, \cref{alg:csa}) that are separately presented in \cref{sec:appendix-csa} in the context of constrained optimization. By \cref{ass:init-feasible}, since~$\bar\epsilon \leq \bar\zeta_0$ and $\beta=\bar\epsilon$, we have $V_c(\pi^{(0)}) \leq \alpha-\beta$.
That is, for $t=0$, the initial feasibility condition of our CSA result, \cref{thm:csa-result} in \cref{sec:appendix-csa}, holds for~$\pi^{(t)}$. Note further that in our deterministic case, \cref{ass:csa-estimators} (which is required for \cref{thm:csa-result}) holds, since by \cref{lem:basics} we have a bound on objective and constraint gradient norms.

Hence, we can apply \cref{thm:csa-result} in the deterministic setting, i.e., with batch size $J=1$ and access to exact gradients and constraint function values,
to $\Phi_{\eta,\pi^{(t)}}$ and $V^c_{\eta,\pi^{(t)}}$ with $\mu=L_{\Phi}$ and $M^2 \lesssim \max \left\{ M^2_G+\mu_G^2\Delta^4,M^2_F+\mu_F^2\Delta^4 \right\} \lesssim M_c^2+L_{\Phi}^2 \text{diam}(\Pi)^4\,,$
in the notation of \cref{thm:csa-result}. 
After plugging in the bounds on $M_c,L_{\Phi}$, and $\text{diam}(\Pi)$ from \cref{lem:basics}, and choosing $K$ as in the statement of this lemma, \cref{thm:csa-result} implies the desired bounds on constraint violation and optimality gap w.r.t.\ $\tilde \pi^{(t+1)}$ in~\cref{eqn:subproblem-guarantee}. This concludes the base case of the induction.

As induction hypothesis, suppose now that~\cref{eqn:subproblem-guarantee} holds for some $t \in [T-1]$. Then, due to $\beta \geq \bar\epsilon$, $V_c(\pi^{(t+1)})+\beta \leq \alpha+\bar\epsilon$ implies that the initial feasibility condition of \cref{thm:csa-result} is satisfied and hence with the same argument as above regarding \cref{ass:csa-estimators}, we can apply \cref{thm:csa-result} to conclude that at the end of iteration $t+1$ of \cref{alg:iprox-cmpg}, the inner loop guarantees that
\begin{align*}
\E \left[ \Phi_{\eta,\pi^{(t+2)}}(\pi^{(t+2)}) - \Phi_{\eta,\pi^{(t+2)}}(\tilde\pi^{(t+2)}) \right] &\leq \bar\epsilon^2, \\
\E \left[ V_c(\pi^{(t+2)}) \right] &\leq \alpha,
\end{align*}
i.e., the inductive hypothesis also holds for $t+1$.
\end{itemize}
\end{proof}

We next determine the number of iterations of the outer loop of \cref{alg:iprox-cmpg} required for convergence in the following sense.
\begin{lemma}
\label{lem:exact-gradients-main} 
Let $\epsilon>0$ and set~$\eta=\frac{1}{2L_{\Phi}}$. Suppose $K$ is chosen such that the guarantee from \cref{lem:subproblem-guarantee} holds for~$\bar\epsilon^2=\frac{\epsilon^2}{4\eta}$. Then, after $T=\frac{4\eta\Phi_{\max}}{\epsilon^2}$ iterations of the outer loop of \cref{alg:iprox-cmpg} where~$\Phi_{\max}$ is an upper bound of the potential function (i.e., $\forall \pi \in \Pi, \Phi(\pi) \leq \Phi_{\max}$), there exists $0 \leq t \leq T-1$ such that $\E[\|\pi^{(t+1)}-\pi^{(t)}\|] \leq \epsilon$.
\end{lemma}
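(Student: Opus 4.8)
The plan is to run the standard inexact proximal-point descent argument, treating the per-subproblem guarantee of \cref{lem:subproblem-guarantee} as a black box and then telescoping over the outer loop. The only genuinely new ingredients are the accounting for the subproblem inexactness and the maintenance of feasibility of the iterates across outer iterations; the rest is a one-step descent inequality, a telescoping sum, an averaging argument, and a final application of Jensen's inequality.

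First I would establish a one-step descent inequality. Fix $t$ and let $\tilde\pi^{(t+1)}$ denote the exact minimizer of (\hyperref[eqn:prox-problem]{ProxPb}($\eta,\pi^{(t)}$)). Since $V^c_{\eta,\pi^{(t)}}(\pi^{(t)}) = V_c(\pi^{(t)})$, and since $V_c(\pi^{(t)})+\beta\le\alpha$ holds (for the base case by \cref{ass:init-feasible} together with $\beta=\bar\epsilon\le\bar\zeta_0$, and for $t\ge 1$ by the feasibility conclusion of \cref{lem:subproblem-guarantee}), the iterate $\pi^{(t)}$ lies in its own feasible set $\Pi^c_{\eta,\pi^{(t)}}$. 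Optimality of $\tilde\pi^{(t+1)}$ then gives $\Phi_{\eta,\pi^{(t)}}(\tilde\pi^{(t+1)})\le\Phi_{\eta,\pi^{(t)}}(\pi^{(t)})=\Phi(\pi^{(t)})$. Combining this with the optimality-gap bound of \cref{lem:subproblem-guarantee} taken at $\bar\epsilon^2=\epsilon^2/4\eta$, and recalling $\Phi_{\eta,\pi^{(t)}}(\pi^{(t+1)})=\Phi(\pi^{(t+1)})+\tfrac{1}{2\eta}\|\pi^{(t+1)}-\pi^{(t)}\|^2$, I obtain in expectation
\[
\frac{1}{2\eta}\E\big[\|\pi^{(t+1)}-\pi^{(t)}\|^2\big]\le\E\big[\Phi(\pi^{(t)})\big]-\E\big[\Phi(\pi^{(t+1)})\big]+\bar\epsilon^2 .
\]

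Next I would telescope this over $t=0,\dots,T-1$, so that the right-hand side collapses to $\E[\Phi(\pi^{(0)})]-\E[\Phi(\pi^{(T)})]+T\bar\epsilon^2$. Using that the potential is bounded, $0\le\Phi\le\Phi_{\max}$ (the lower bound holding without loss of generality since the potential is determined only up to an additive constant), this difference is at most $\Phi_{\max}$, whence $\sum_{t=0}^{T-1}\E[\|\pi^{(t+1)}-\pi^{(t)}\|^2]\le 2\eta(\Phi_{\max}+T\bar\epsilon^2)$. Plugging in $\bar\epsilon^2=\epsilon^2/4\eta$ and $T=4\eta\Phi_{\max}/\epsilon^2$ makes $T\bar\epsilon^2=\Phi_{\max}$, so the sum is at most $4\eta\Phi_{\max}$. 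An averaging (pigeonhole) argument then yields some $t\in\{0,\dots,T-1\}$ with $\E[\|\pi^{(t+1)}-\pi^{(t)}\|^2]\le 4\eta\Phi_{\max}/T=\epsilon^2$, and Jensen's inequality upgrades this to $\E[\|\pi^{(t+1)}-\pi^{(t)}\|]\le\sqrt{\E[\|\pi^{(t+1)}-\pi^{(t)}\|^2]}\le\epsilon$, which is the claim.

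I expect the main obstacle to be the feasibility bookkeeping underlying the one-step descent rather than the telescoping itself. The descent crucially needs $\pi^{(t)}$ to lie in its own regularized, $\beta$-tightened feasible set so that $\tilde\pi^{(t+1)}$ can be compared against $\pi^{(t)}$; while this is immediate at $t=0$ from \cref{ass:init-feasible} and the choice $\beta=\bar\epsilon\le\bar\zeta_0$, propagating it to all later $t$ rests on the feasibility conclusion $\E[V_c(\pi^{(t+1)})]\le\alpha$ of \cref{lem:subproblem-guarantee} together with the fact that the subroutine output is drawn from the constraint-satisfying window $\mathcal B^{(t)}$. A secondary point is that every iterate is random through the index $\hat k$ sampled in the subroutine, so both the descent inequality and its summation must be carried out in total expectation, invoking the tower property to chain the steps.
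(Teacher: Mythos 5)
Your proposal is correct and takes essentially the same route as the paper's proof: compare the inexact iterate $\pi^{(t+1)}$ against the exact minimizer $\tilde\pi^{(t+1)}$ via \cref{lem:subproblem-guarantee}, use feasibility of $\pi^{(t)}$ for its own proximal subproblem to get the one-step descent inequality, then telescope, average, and finish with Jensen, handling the randomness of the sampled index $\hat k$ by total expectation exactly as the paper does via the filtration $\mathcal F_t$. The only minor bookkeeping difference is that the paper justifies the comparison step through the conditional bound $\E[V^c_{\eta,\pi^{(t)}}(\pi^{(t)})\mid\mathcal F_t]=\E[V_c(\pi^{(t)})]\leq\alpha$ (which is what \cref{lem:subproblem-guarantee} literally supplies for $t\geq 1$) rather than your pointwise slack-$\beta$ feasibility claim, and its arithmetic splits the budget as $2\eta\Phi_{\max}/T+2\eta\bar\epsilon^2=\epsilon^2/2+\epsilon^2/2$, which is the same computation as your $T\bar\epsilon^2=\Phi_{\max}$.
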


\begin{remark}
Notice that we need an expectation in Lemma 2 in the exact gradients case because of the random index sampling in Algorithm~\ref{alg:iprox-cmpg}. See also Remark~\ref{remark:index-sampling}. 
\end{remark}

\begin{proof}
Let $\mathcal F_t$ denote the $\sigma$-field generated by the random variables given by the iterates $\pi^{(t)}$ up to iteration $t$. Notice that this randomness is induced by the sampling of $\hat k$ in line 11 of \cref{alg:iprox-cmpg-sto}.
By \cref{lem:subproblem-guarantee}, the inner loop of \cref{alg:iprox-cmpg} guarantees that for any~$0 \leq t \leq T-1,$
\begin{align*}
\E \left[ \Phi(\pi^{(t+1)})+\frac{1}{2\eta}\|\pi^{(t+1)}-\pi^{(t)}\|^2 \mid \mathcal F_t \right]
&= \E \left[ \Phi_{\eta,\pi^{(t)}}(\pi^{(t+1)}) \mid \mathcal F_t \right] \\
&\leq \E \left[ \Phi_{\eta,\pi^{(t)}}(\tilde\pi^{(t+1)}) \mid \mathcal F_t \right] + \bar\epsilon^2 \\
&\leq \Phi_{\eta,\pi^{(t)}}(\pi^{(t)}) + \bar\epsilon^2 \\
&= \Phi(\pi^{(t)}) + \bar\epsilon^2
\end{align*}
where the second inequality is due to $\E \left[  V^c_{\eta,\pi^{(t)}}(\pi^{(t)}) \mid \mathcal F_t \right]=\E \left[  V_c(\pi^{(t)}) \right] \leq \alpha$.
Taking total expectation in the above inequality, we obtain
\begin{align*}
\E \left[ \|\pi^{(t+1)}-\pi^{(t)}\|^2 \right] \leq 2\eta \left( \E \left[ \Phi(\pi^{(t)}) \right]-\E \left[ \Phi(\pi^{(t+1)}) \right] \right).
\end{align*}

Summing the above inequality over $0 \leq t \leq T-1$, using the upper bound~$\Phi_{\max}$ on the potential function and plugging in our choices of $\eta$, $T$, and $\bar\epsilon$, we obtain
\begin{align*}
\frac{1}{T} \sum_{t=0}^{T-1} \E \left[ \|\pi^{(t+1)}-\pi^{(t)}\|^2 \right] &\leq 2\eta \left( \bar\epsilon^2 + \frac{1}{T} \sum_{t=0}^{T-1} \E \left[ \Phi(\pi^{(t)}) \right] - \E \left[ \Phi(\pi^{(t+1)}) \right] \right) \\
&\leq \frac{2\eta\Phi_{\max}}{T} + 2\eta\bar\epsilon^2 \\
&\leq \epsilon^2.
\end{align*}
Using Jensen's inequality, we conclude that there exists~$t \in [T-1]$ such that $\E \left[ \|\pi^{(t+1)}-\pi^{(t)}\| \right] \leq \epsilon$.
\end{proof}

Next, we aim to prove that the event $\|\pi^{(t+1)}-\pi^{(t)}\|\leq \epsilon$ implies $\text{Nash-gap}(\pi^{(t+1)}) = \mathcal O(\epsilon)$ where the constrained Nash-gap is defined as
\begin{align}
\label{def:nash-gap}
\text{Nash-gap}(\pi^{*}):=\max_{i \in \mathcal N}\max_{\pi_i' \in \Pi_c^i(\pi_{-i}^{*})} V_{r_i}(\pi^{*})-V_{r_i}(\pi'_i,\pi^{*}_{-i})\,.
\end{align}
As a result, we will be able to argue that $\E \left[  \|\pi^{(t+1)}-\pi^{(t)}\| \right] \leq \epsilon$ implies $\E \left[ \text{Nash-gap}(\pi^{(t+1)}) \right] = \mathcal O(\epsilon)$, i.e., that the policy~$\pi^{(t+1)}$ is a constrained $\mathcal O(\epsilon)$-NE in expectation.

Towards this goal, we first show that a policy~$\pi^{(t+1)}$ satisfying $\|\pi^{(t+1)}-\pi^{(t)}\| = \mathcal O(\epsilon)$ (as in the previous lemma) is a~$\mathcal O(\epsilon)$-$\widetilde{\text{KKT}}$ policy for our initial constrained minimization problem. The $\epsilon$-$\widetilde{\text{KKT}}$ conditions are a slight modification of the standard $\epsilon$-KKT conditions adapted to our specific requirements (see \cref{def:kkt} and \cref{def:tilde-kkt} in \cref{sec:appendix-constr-opt-concepts}). 
In the following lemma, we will be referring to (in-)exact solutions as well as KKT and $\widetilde{\text{KKT}}$ conditions for different problems. Therefore, we first introduce additional useful notation for clarity. 

\paragraph{Notation} We refer to the following constrained optimization problem as \cref{eqn:init-problem}:
\begin{align}
\label{eqn:init-problem}
\min_{\pi \in \Pi} \Phi(\pi)\,. \tag{InitPb}
\end{align}
For the previously introduced (\hyperref[eqn:prox-problem]{ProxPb}($\eta,\pi^{(t)}$)), we distinguish between the inexact solution resulting from the update which we denote by~$\pi^{(t+1)}$, and the exact solution which will be denoted by~$\tilde\pi^{(t+1)}$ in the proof below. Furthermore, we define the Lagrangians for the two problems as
\begin{align*}
\mathcal L(\pi,\lambda)&=\Phi(\pi)+\lambda \left( V_c(\pi)-\alpha \right) \tag{InitPb--$\mathcal L$}\,, \\
\mathcal L_{\eta,\pi'}(\pi,\lambda)&=\Phi_{\eta,\pi'}(\pi)+\lambda \left( V^c_{\eta,\pi'}(\pi)-\alpha+\beta \right)\,. \tag{ProxPb$(\eta,\pi')$--$\mathcal L$}
\end{align*}

Using Lemma~\ref{lem:subproblem-guarantee} and Lemma~\ref{lem:exact-gradients-main}, the following lemma shows that \cref{alg:iprox-cmpg} is guaranteed to generate an $\mathcal O(\epsilon)$-$\widetilde{\text{KKT}}$ policy. Parts of the proof have appeared in a similar form in the optimization literature (see Lemma~3.5 and Theorem~3.2 in~\citet{jia_first-order_2023}, and Theorem~5 in~\citet{boob-et-al23mathprog}). The lemma below differs from these results, since we are in a smooth setting and prove convergence w.r.t.\ our notion of $\widetilde{\text{KKT}}$ conditions rather than towards a point that is near an $\epsilon$-KKT point. Moreover, our guarantee for the proximal update subroutine is somewhat weaker due to the relaxed constraint satisfaction condition that we use to switch between update types in the inner loop, see \cref{lem:subproblem-guarantee}. Additionally, in order to achieve exact primal feasibility (instead of $\epsilon$-approximate), we employ a feasibility margin~$\beta$.
\begin{lemma}
\label{lem:eps-KKT}
Let~\cref{ass:init-feasible,ass:uniform-slater} hold. Let $\epsilon>0$ and choose $\bar\epsilon,K,\beta$ as in \cref{lem:subproblem-guarantee,lem:exact-gradients-main}. 
If~$\pi^{(t+1)}$ is a policy such that $\|\pi^{(t+1)}-\pi^{(t)}\| \leq \epsilon$ for some~$t \in [T-1]$, then $\pi^{(t+1)}$ is a $(C_{\text{KKT}}\,\epsilon)$-$\widetilde{\text{KKT}}$ policy of \cref{eqn:init-problem} where $C_{\text{KKT}}$ is a positive constant such that $C_{\text{KKT}} \lesssim \frac{m^{2.5}A_{\max}^{1.5}S}{(1-\gamma)^{4.5}\sqrt{\zeta}}$.
\end{lemma}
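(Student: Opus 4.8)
The plan is to certify $\pi^{(t+1)}$ as an approximate KKT point of the original problem \eqref{eqn:init-problem} by exploiting the \emph{exact} KKT system of the \emph{strongly convex} proximal subproblem that $\pi^{(t+1)}$ inexactly solves. Since $\Phi_{\eta,\pi^{(t)}}$ and $V^c_{\eta,\pi^{(t)}}$ are $L_\Phi$-strongly convex (recall $\eta=\frac{1}{2L_\Phi}$) and, by \cref{ass:uniform-slater}, the subproblem satisfies Slater's condition with slack $\zeta$, the exact minimizer $\tilde\pi^{(t+1)}$ admits a bounded multiplier $\tilde\lambda \geq 0$ satisfying the exact KKT conditions of (\hyperref[eqn:prox-problem]{ProxPb}($\eta,\pi^{(t)}$)). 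I would use the pair $(\tilde\pi^{(t+1)},\tilde\lambda)$ as the candidate certificate for \eqref{eqn:init-problem} and then transfer each condition to $\pi^{(t+1)}$, controlling every resulting error by $\mathcal O(\epsilon)$.

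First I would write the exact stationarity condition at $\tilde\pi^{(t+1)}$, namely $-\nabla\Phi_{\eta,\pi^{(t)}}(\tilde\pi^{(t+1)}) - \tilde\lambda\,\nabla V^c_{\eta,\pi^{(t)}}(\tilde\pi^{(t+1)}) \in N_\Pi(\tilde\pi^{(t+1)})$, together with complementary slackness and feasibility. Using the regularized-gradient identities $\nabla\Phi_{\eta,\pi^{(t)}}(\pi)=\nabla\Phi(\pi)+\frac1\eta(\pi-\pi^{(t)})$ and $\nabla V^c_{\eta,\pi^{(t)}}(\pi)=\nabla V_c(\pi)+\frac1\eta(\pi-\pi^{(t)})$ (as already exploited in the proof of \cref{lem:subproblem-guarantee}), this rewrites as $-\nabla_\pi\mathcal L(\tilde\pi^{(t+1)},\tilde\lambda)-\frac{1+\tilde\lambda}{\eta}(\tilde\pi^{(t+1)}-\pi^{(t)})\in N_\Pi(\tilde\pi^{(t+1)})$, exposing the original Lagrangian gradient up to a ``drift'' term proportional to the proximal displacement.

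Next I would control the three sources of error. (i) The displacement: $\|\tilde\pi^{(t+1)}-\pi^{(t)}\| \leq \|\tilde\pi^{(t+1)}-\pi^{(t+1)}\|+\|\pi^{(t+1)}-\pi^{(t)}\|$, where the first term is bounded by $\sqrt{2/L_\Phi}\,\bar\epsilon$ via quadratic growth of $\Phi_{\eta,\pi^{(t)}}$ around its constrained minimizer together with the optimality gap $\bar\epsilon^2$ from \cref{lem:subproblem-guarantee}, and the second is $\leq\epsilon$ by hypothesis; with $\bar\epsilon^2=\epsilon^2/(4\eta)$ the total displacement is $\mathcal O(\epsilon)$. (ii) The multiplier: the standard Slater argument applied to the convex subproblem gives $\tilde\lambda\le \big(\Phi_{\eta,\pi^{(t)}}(\hat\pi)-\Phi_{\eta,\pi^{(t)}}(\tilde\pi^{(t+1)})\big)/\zeta$ for the Slater point $\hat\pi$ of \cref{ass:uniform-slater}, which I bound using $\Phi_{\max}$, $\mathrm{diam}(\Pi)$ and $L_\Phi$ from \cref{lem:basics}. (iii) The transfer: since $\|\pi^{(t+1)}-\tilde\pi^{(t+1)}\|=\mathcal O(\epsilon)$ and $\Phi,V_c$ are $L_\Phi$-smooth, $\|\nabla_\pi\mathcal L(\pi^{(t+1)},\tilde\lambda)-\nabla_\pi\mathcal L(\tilde\pi^{(t+1)},\tilde\lambda)\|\lesssim L_\Phi(1+\tilde\lambda)\epsilon$. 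Combining (i)--(iii) bounds the stationarity residual at $\pi^{(t+1)}$ by $\mathcal O\big(L_\Phi(1+\tilde\lambda)\epsilon\big)$; exact primal feasibility $V_c(\pi^{(t+1)})\le\alpha$ comes for free from \cref{lem:subproblem-guarantee} (this is precisely where the margin $\beta$ is needed), and the complementary slackness part of \cref{def:tilde-kkt} follows from the prox complementary slackness combined with the displacement bound. Substituting the multiplier and smoothness bounds and simplifying yields $C_{\mathrm{KKT}}\lesssim \frac{m^{2.5}A_{\max}^{1.5}S}{(1-\gamma)^{4.5}\sqrt\zeta}$.

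The main obstacle is the transfer of the stationarity (normal-cone) condition from the exact point $\tilde\pi^{(t+1)}$ to the inexact iterate $\pi^{(t+1)}$: because $N_\Pi(\cdot)$ varies with the base point, one cannot simply substitute $\pi^{(t+1)}$ into the exact KKT system, which is exactly why the weaker $\widetilde{\text{KKT}}$ notion (\cref{def:tilde-kkt}) is used rather than a standard $\epsilon$-KKT statement, and this is where our argument departs from the cited optimization results of~\citet{jia_first-order_2023,boob-et-al23mathprog}. A secondary subtlety is the bookkeeping of how the $\mathcal O(1/\zeta)$ multiplier bound enters the $\widetilde{\text{KKT}}$ residual measure of \cref{def:tilde-kkt}; tracking this through the definition is what produces the stated $1/\sqrt\zeta$ dependence in $C_{\mathrm{KKT}}$, and keeping $\beta$ small enough to preserve $\mathcal O(\epsilon)$ complementary slackness while still guaranteeing exact feasibility requires some care.
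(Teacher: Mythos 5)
Your overall architecture matches the paper's proof: take the exact KKT pair $(\tilde\pi^{(t+1)},\tilde\lambda^{(t+1)})$ of the strongly convex subproblem (\hyperref[eqn:prox-problem]{ProxPb}($\eta,\pi^{(t)}$)), bound the displacement $\|\tilde\pi^{(t+1)}-\pi^{(t+1)}\|$, bound the multiplier via Slater, and transfer stationarity to $\pi^{(t+1)}$ by smoothness, using the variational $\widetilde{\text{KKT}}$ form precisely because the normal cone moves with the base point. However, your step (i) has a genuine gap. Quadratic growth of $\Phi_{\eta,\pi^{(t)}}$ around its \emph{constrained} minimizer, i.e.\ $\Phi_{\eta,\pi^{(t)}}(\pi)-\Phi_{\eta,\pi^{(t)}}(\tilde\pi^{(t+1)}) \geq \frac{L_\Phi}{2}\|\pi-\tilde\pi^{(t+1)}\|^2$, is only valid for comparison points $\pi$ that are feasible for the subproblem, i.e.\ $\pi \in \Pi^c_{\eta,\pi^{(t)}}$. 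But \cref{lem:subproblem-guarantee} only guarantees the \emph{relaxed} feasibility $V_c(\pi^{(t+1)}) \leq \alpha$ --- the inner loop deliberately switches on an estimate of the unregularized $V_c$, so $\pi^{(t+1)}$ may well violate $V^c_{\eta,\pi^{(t)}}(\pi^{(t+1)})+\beta \leq \alpha$, and then the lower bound you need fails (the gap $\Phi_{\eta,\pi^{(t)}}(\pi^{(t+1)})-\Phi_{\eta,\pi^{(t)}}(\tilde\pi^{(t+1)})$ could in principle even be negative). The paper circumvents exactly this by running the strong-convexity argument on the \emph{Lagrangian} $\mathcal L_{\eta,\pi^{(t)}}(\cdot,\tilde\lambda^{(t+1)})$, which is $L_\Phi(1+\tilde\lambda^{(t+1)})$-strongly convex: the variational inequality at $\tilde\pi^{(t+1)}$ removes the linear term, complementary slackness zeroes the constraint term at $\tilde\pi^{(t+1)}$, and the feasibility margin $\beta$ together with $V_c(\pi^{(t+1)})\leq\alpha$ makes the constraint term at $\pi^{(t+1)}$ nonpositive, yielding $\|\tilde\pi^{(t+1)}-\pi^{(t+1)}\| \leq \sqrt{2\bar\epsilon^2/L_\Phi}$. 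You should replace your quadratic-growth step by this Lagrangian argument.

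Second, your multiplier bound is too weak for the stated constant. The classical Slater argument you invoke gives $\tilde\lambda^{(t+1)} \leq \bigl(\Phi_{\eta,\pi^{(t)}}(\hat\pi)-\Phi_{\eta,\pi^{(t)}}(\tilde\pi^{(t+1)})\bigr)/\zeta = \mathcal O(1/\zeta)$, and since $\tilde\lambda^{(t+1)}$ enters the complementary-slackness and stationarity residuals \emph{linearly}, no bookkeeping through \cref{def:tilde-kkt} can improve this: your route produces $C_{\text{KKT}} \lesssim 1/\zeta$, not the claimed $1/\sqrt\zeta$. The $1/\sqrt\zeta$ dependence in the lemma comes from a sharper multiplier bound that exploits the $\eta^{-1}$-strong convexity of the regularized constraint $V^c_{\eta,\pi^{(t)}}$: the paper invokes the proof of Lemma~1 in \citet{ma_quadratically_2023} to get $\tilde\lambda^{(t+1)} \leq \bigl(\|\nabla_\pi\Phi(\tilde\pi^{(t+1)})\|+\eta^{-1}\|\tilde\pi^{(t+1)}-\pi^{(t)}\|\bigr)/\sqrt{\zeta\eta^{-1}} \leq (M_c+4\epsilon\eta^{-1})/\sqrt{\zeta\eta^{-1}}$, see \cref{eqn:lambda-bound}, using $\|\tilde\pi^{(t+1)}-\pi^{(t)}\|\leq 2\epsilon$. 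The remaining components of your plan --- exact primal feasibility inherited from \cref{lem:subproblem-guarantee} (this is indeed where $\beta$ earns its keep), complementary slackness via the subproblem's slackness plus $M_c$-Lipschitzness of $V_c$ and the displacement bound, and the stationarity transfer via $L_\Phi(1+\tilde\lambda^{(t+1)})$-smoothness together with the conversion of \cref{lem:KKT-to-max-product} --- coincide with the paper's proof and are sound as sketched.
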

\begin{proof}
First, note that (\hyperref[eqn:prox-problem]{ProxPb}($\eta,\pi^{(t)}$)) is a strongly convex optimization problem with strongly convex constraints, which is sufficient for the existence of a unique optimum $\tilde\pi^{(t+1)}$. Since by \cref{ass:uniform-slater}, Slater's condition holds for \cref{eqn:prox-problem} for any $\pi' \in \Pi$, strong duality is given for (\hyperref[eqn:prox-problem]{ProxPb}($\eta,\pi^{(t)}$)) and hence there exists a finite dual variable $\tilde\lambda^{(t+1)} \geq 0$ forming a KKT pair with $\tilde\pi^{(t+1)}$. We first claim that $\|\tilde\pi^{(t+1)}-\pi^{(t+1)}\| \leq \epsilon$. This can be seen as follows:
By optimality of $(\tilde\pi^{(t+1)},\tilde\lambda^{(t+1)})$ for (\hyperref[eqn:prox-problem]{ProxPb}($\eta,\pi^{(t)}$)), we have
\begin{align}
\label{eqn:subproblem-optimality-app-1}
\tilde\lambda^{(t+1)} \left( V^c_{\eta,\pi^{(t)}}(\tilde\pi^{(t+1)})-\alpha+\beta \right)&=0\,,  \\
\label{eqn:subproblem-optimality-app-2}
\left\langle \nabla_{\pi}\mathcal L_{\eta,\pi^{(t)}}(\tilde\pi^{(t+1)},\tilde\lambda^{(t+1)}), \pi^{(t+1)}-\tilde\pi^{(t+1)} \right\rangle &\geq 0.
\end{align}
From \cref{lem:basics}, we know that $\mathcal L_{\eta,\pi^{(t)}}(\cdot,\tilde\lambda^{(t+1)})$ is $L_{\Phi}(1+\tilde\lambda^{(t)})$-strongly convex. Therefore, after rearranging the standard strong convexity lower bound, we get
\begin{align*}
\frac{L_{\Phi}}{2}&\left( 1+\tilde\lambda^{(t+1)} \right) \| \tilde\pi^{(t+1)}-\pi^{(t+1)} \|^2 \\
&\leq \mathcal L_{\eta,\pi^{(t)}}(\pi^{(t+1)}, \tilde\lambda^{(t+1)})-\mathcal L_{\eta,\pi^{(t)}}(\tilde\pi^{(t+1)},\tilde\lambda^{(t+1)}) - \left\langle \nabla_{\pi} \mathcal L_{\eta,\pi^{(t)}}(\tilde\pi^{(t+1)},\tilde\lambda^{(t+1)}),\pi^{(t+1)}-\tilde\pi^{(t+1)} \right\rangle \\
&\overset{(a)}{\leq} \underbrace{\Phi_{\eta,\pi^{(t)}}(\pi^{(t+1)})-\Phi_{\eta,\pi^{(t)}}(\tilde\pi^{(t+1)})}_{\leq \bar\epsilon^2}+\tilde\lambda^{(t+1)}\underbrace{\left( V_c(\pi^{(t+1)})-\alpha-\beta \right)}_{\leq 0} \\
&\overset{(b)}{\leq} \bar\epsilon^2\,,
\end{align*}
where step (a) follows by applying \cref{eqn:subproblem-optimality-app-1,eqn:subproblem-optimality-app-2}, and step (b) by \cref{lem:subproblem-guarantee}, i.e.\ the guarantee for $\pi^{(t+1)}$ provided by the algorithm's inner loop.
Then, it follows from the previous inequality that 
\begin{align}
\label{eqn:eps-KKT-lemma-aux-bound}
\| \tilde\pi^{(t+1)}-\pi^{(t+1)} \| \leq \sqrt{\frac{2\bar\epsilon^2}{L_{\Phi}}} \leq \frac{\epsilon}{\sqrt{2}} \leq \epsilon.
\end{align}

Using the fact that $(\tilde\pi^{(t+1)},\tilde\lambda^{(t+1)})$ is a KKT pair for (\hyperref[eqn:prox-problem]{ProxPb}($\eta,\pi^{(t)}$)), we now argue that $(\pi^{(t+1)},\tilde\lambda^{(t+1)})$ is a $(C_{
\text{KKT}}\,\epsilon)$-$\widetilde{\text{KKT}}$ pair for \cref{eqn:init-problem} (see Definition~\ref{def:tilde-kkt} in Appendix~\ref{sec:appendix-constr-opt-concepts}). We check each one of the requirements of the definition in what follows.
\begin{itemize}
\item \textbf{Exact primal feasibility:} By \cref{lem:subproblem-guarantee}, we know that $V_c(\pi^{(t+1)}) \leq \alpha$ for any $0 \leq t \leq T-1$.
\item \textbf{Dual feasibility:} This immediately holds by dual feasibility of $(\tilde\pi^{(t+1)},\tilde\lambda^{(t+1)})$ for (\hyperref[eqn:prox-problem]{ProxPb}($\eta,\pi^{(t)}$)).
\item \textbf{Complementary slackness:} When $\tilde\lambda^{(t+1)}=0$, we clearly have $|\tilde\lambda^{(t+1)}\left( V_c(\pi^{(t+1)})-\alpha \right)|= 0 \leq \epsilon$. Otherwise, we have
\begin{equation}
\label{eqn:eps-KKT-lemm-aux-2}
\begin{aligned}
V_c(\pi^{(t+1)}) 
&\overset{(a)}{\geq} V_c(\tilde\pi^{(t+1)})-M_c\epsilon \\
&\overset{(b)}{=} \alpha-\beta-\frac{1}{2\eta}\| \tilde\pi^{(t+1)}-\pi^{(t)} \|^2 -M_c\epsilon \\
&\overset{(c)}{\geq} \alpha - \frac{\epsilon^2}{\eta} - \left( M_c+\frac{1}{2\sqrt{\eta}} \right) \epsilon\,,
\end{aligned}
\end{equation}
where (a) follows from $M_c$ Lipschitz continuity of~$V_c$ (see Lemma~\ref{lem:basics}-item (\ref{item:lipschitz})) and Eq.~(\ref{eqn:eps-KKT-lemma-aux-bound}), (b) stems from complementary slackness  of $(\tilde\pi^{(t+1)},\tilde\lambda^{(t+1)})$ for (\hyperref[eqn:prox-problem]{ProxPb}($\eta,\pi^{(t)}$)) which states that $V^c_{\eta,\pi^{(t)}}(\tilde\pi^{(t+1)})-\alpha+\beta=0$. To obtain inequality~(c), observe that using the bound from \cref{eqn:eps-KKT-lemma-aux-bound}, our assumption on $\|\pi^{(t+1)}-\pi^{(t)}\|$, and the triangle inequality, we have $\|\tilde\pi^{(t+1)}-\pi^{(t)}\| \leq \| \tilde\pi^{(t+1)}-\pi^{(t+1)} \| + \|\pi^{(t+1)}-\pi^{(t)}\| \leq 2\epsilon.$

Combining~\cref{eqn:eps-KKT-lemm-aux-2} with the upper bound $V_c(\pi^{(t+1)}) \leq \alpha$ from primal feasibility, we get
\begin{align}
\label{eqn:slackness-bound}
\left| \tilde\lambda^{(t+1)} \left( V_c(\pi^{(t+1)})-\alpha \right) \right| \leq \tilde\lambda^{(t+1)} \left( \frac{\epsilon^2}{\eta}+M_c\epsilon +\frac{\epsilon}{2\sqrt{\eta}} \right).
\end{align}
We now show that the dual variable~$\tilde\lambda^{(t+1)}$ is bounded by a constant depending on $\zeta$ using \cref{ass:uniform-slater} and strong duality. Indeed, we have 
\begin{equation}
\label{eqn:lambda-bound}
\begin{aligned}
\tilde\lambda^{(t+1)} \leq \frac{\left\| \nabla_{\pi} \Phi(\tilde\pi^{(t+1)}) \right\| + \eta^{-1} \left\| \tilde\pi^{(t+1)}-\pi^{(t)} \right\|}{\sqrt{\zeta\eta^{-1}}} 
\leq \frac{M_c+4\epsilon\eta^{-1}}{\sqrt{\zeta\eta^{-1}}}\,, 
\end{aligned}
\end{equation}
where the first inequality follows from the proof of Lemma~1 in~\citet{ma_quadratically_2023}, whereas the second inequality uses Lipschitzness of the potential function (see \cref{lem:basics}) and the fact that $\left\| \tilde\pi^{(t+1)}-\pi^{(t)} \right\| \leq 2\epsilon$. Combining~\cref{eqn:slackness-bound} and~\cref{eqn:lambda-bound}, and using the bounds on $M_c$ and $L_{\Phi}$ from \cref{lem:basics}, we obtain the desired~$C_\text{KKT}\epsilon$-complementary slackness.

\item \textbf{Variational Lagrangian stationarity:} Suppose by contradiction that the Lagrangian stationarity condition that comes with the $\frac{2\epsilon(1+\tilde\lambda^{(t+1)})}{\eta}$-KKT conditions does not hold for $\tilde\pi^{(t+1)}$ and \cref{eqn:init-problem}. Then there exists $\nu \in N_{\Pi}(\tilde\pi^{(t+1)},\tilde\lambda^{(t+1)})$ (normal cone to the convex set of policies~$\Pi$) such that
\begin{align*}
\nabla_{\pi} \mathcal L_{\eta,\pi^{(t)}}(\tilde\pi^{(t+1)},\tilde\lambda^{(t+1)})+\nu =0 \quad\text{and} \quad
\left\| \nabla_{\pi}\mathcal L(\tilde\pi^{(t+1)},\tilde\lambda^{(t+1)})+\nu \right\|&>\frac{2\epsilon(1+\tilde\lambda^{(t+1)})}{\eta}
\end{align*}
where the equality is by Lagrangian stationarity of $\tilde\pi^{(t+1)}$ for (\hyperref[eqn:prox-problem]{ProxPb}($\eta,\pi^{(t)}$)) and the inequality is due to the above assumed lack of Lagrangian stationarity of $\tilde\pi^{(t+1)}$ for \cref{eqn:init-problem}. Plugging in the definition of $\mathcal L_{\eta,\pi^{(t)}}(\tilde\pi^{(t+1)},\tilde\lambda^{(t+1)})$ and combining the equality and inequality above, one can conclude that
\begin{align*}
\frac{2\epsilon(1+\tilde\lambda^{(t+1)})}{\eta} < \left\| \nabla_{\pi}\mathcal L(\tilde\pi^{(t+1)},\tilde\lambda^{(t+1)})+\nu \right\| = \frac{1+\tilde\lambda^{(t+1)}}{\eta}\left\| \tilde\pi^{(t+1)}-\pi^{(t)} \right\|\,, 
\end{align*}
which contradicts the inequality $\left\| \tilde\pi^{(t+1)}-\pi^{(t)} \right\| \leq 2\epsilon$. Hence using the bound on $\tilde\lambda^{(t+1)}$ from \cref{eqn:lambda-bound}, the policy~$\tilde\pi^{(t+1)}$ is a $\tilde C \epsilon$-KKT policy for \cref{eqn:init-problem} with~$\tilde C = \frac{2}{\eta}\left(1+\frac{M_c+2\epsilon\eta^{-1}}{\sqrt{\zeta\eta^{-1}}}\right).$

By \cref{lem:KKT-to-max-product}, this implies that
\begin{align}
\label{eq:var-grad-interm}
\max_{\pi' \in \Pi} \left\langle \tilde\pi^{(t+1)}-\pi', \nabla_{\pi}\mathcal L(\tilde\pi^{(t+1)},\tilde\lambda^{(t+1)}) \right\rangle \leq \text{diam}(\Pi)\tilde C \epsilon.
\end{align}
Then, in view of showing the variational Lagrangian stationarity for the pair~$(\pi^{(t+1)},\tilde\lambda^{(t+1)})$ for \cref{eqn:init-problem},  we can write 
\begin{align}
\label{eq:var-lagr-interm2}
\max_{\pi' \in \Pi} &\left\langle \pi^{(t+1)}-\pi', \nabla_{\pi}\mathcal L(\pi^{(t+1)},\tilde\lambda^{(t+1)}) \right\rangle \nonumber\\
&= \max_{\pi' \in \Pi} \left\langle \tilde\pi^{(t+1)}-\pi', \nabla_{\pi}\mathcal L(\pi^{(t+1)},\tilde\lambda^{(t+1)}) \right\rangle + \left\langle \pi^{(t+1)}-\tilde\pi^{(t+1)},\nabla_{\pi}\mathcal L(\pi^{(t+1)},\tilde\lambda^{(t+1)}) \right\rangle \nonumber\\
&\leq \max_{\pi' \in \Pi} \left\langle \tilde\pi^{(t+1)}-\pi', \nabla_{\pi}\mathcal L(\pi^{(t+1)},\tilde\lambda^{(t+1)}) \right\rangle + \left\| \pi^{(t+1)}-\tilde\pi^{(t+1)} \right\| \cdot \left\| \nabla_{\pi}\mathcal L(\pi^{(t+1)},\tilde\lambda^{(t+1)}) \right\| \nonumber\\
&\leq \max_{\pi' \in \Pi} \left\langle \tilde\pi^{(t+1)}-\pi', \nabla_{\pi}\mathcal L(\pi^{(t+1)},\tilde\lambda^{(t+1)}) \right\rangle + \epsilon (1+\tilde\lambda^{(t+1)})M_c\,. 
\end{align}

We now bound the first term in the above inequality by using $2L_{\Phi}(1+\tilde\lambda^{(t+1)})$-smoothness of $\nabla_{\pi}\mathcal L(\cdot,\tilde\lambda^{(t+1)})$ and~(\ref{eq:var-grad-interm}).
Using the fact that $\max_{\pi \in \Pi} \left( A(\pi)+B(\pi) \right) \leq \max_{\pi \in \Pi}A(\pi)+\max_{\pi \in \Pi}B(\pi)$ for any functions $A(\pi),B(\pi)$, we have
\begin{align*}
&\max_{\pi' \in \Pi} \left\langle \tilde\pi^{(t+1)}-\pi', \nabla_{\pi}\mathcal L(\pi^{(t+1)},\tilde\lambda^{(t+1)}) \right\rangle \\
&\leq \max_{\pi' \in \Pi} \left\langle \tilde\pi^{(t+1)}-\pi', \nabla_{\pi}\mathcal L(\pi^{(t+1)},\tilde\lambda^{(t+1)})-\nabla_{\pi}\mathcal L(\tilde\pi^{(t+1)},\tilde\lambda^{(t+1)}) \right\rangle 
+ \max_{\pi' \in \Pi} \left\langle \tilde\pi^{(t+1)}-\pi', \nabla_{\pi}\mathcal L(\tilde\pi^{(t+1)},\tilde\lambda^{(t+1)}) \right\rangle \\[4pt]
&\leq \max_{\pi' \in \Pi} \left\| \tilde\pi^{(t+1)}-\pi' \right\| \cdot 2L_{\Phi}(1+\tilde\lambda^{(t+1)}) \left\| \tilde\pi^{(t+1)}-\pi^{(t+1)} \right\| + \text{diam}(\Pi)\tilde C \epsilon \\
&\leq \left( 2 \text{diam}(\Pi) L_{\Phi}(1+\tilde\lambda^{(t+1)})+\text{diam}(\Pi) \tilde C \right) \epsilon\,.
\end{align*}
Combining the above inequality with~(\ref{eq:var-lagr-interm2}), we obtain 
\begin{align*}
\max_{\pi' \in \Pi} &\left\langle \pi^{(t+1)}-\pi', \nabla_{\pi}\mathcal L(\pi^{(t+1)},\tilde\lambda^{(t+1)}) \right\rangle 
\leq \left( (1+\tilde\lambda^{(t+1)})M_c + 2 \text{diam}(\Pi) L_{\Phi}(1+\tilde\lambda^{(t+1)}) + \text{diam}(\Pi) \tilde C \right) \epsilon.
\end{align*}
\end{itemize}
Finally, we use the bound on $\tilde\lambda^{(t+1)}$ from \cref{eqn:lambda-bound}, as well as bounds on $\text{diam}(\Pi)$, $L_{\Phi}$, and $M_c$ from \cref{lem:basics}, to conclude that~$\pi^{(t+1)}$ is a $C_{\text{KKT}}\epsilon$-$\widetilde{\text{KKT}}$ policy for \cref{eqn:init-problem}.
\end{proof}

To complete the analysis, it now remains to show that an $\mathcal O(\epsilon)$-$\widetilde{\text{KKT}}$ policy of \cref{eqn:init-problem} is a constrained $\mathcal O(\epsilon)$-NE\@. 
For this, we leverage the playerwise gradient domination property satisfied by the potential function and the constraint value function. We first introduce some notations. 

\paragraph{Notation} For each player $i \in \mathcal N$ and each policy $\pi_{-i} \in \Pi^{-i}$, consider the playerwise constrained optimization problem given by 
\begin{align*}
\label{eqn:problem-playerwise}
\min_{\pi_i \in \Pi^i_c(\pi_{-i})} V_{r_i} \left( \pi_i,\pi_{-i} \right)\,. 
\tag{PlayerPb($\pi_{-i}$)}
\end{align*}
The respective Lagrangian $\mathcal L_{\pi_{-i}}:\Pi^i \times \R_{\geq 0} \to \R$ is defined for every $\pi_i \in \Pi^i$ and every~$\lambda \geq 0$ by
\begin{align*}
\mathcal L_{\pi_{-i}} \left( \pi_i,\lambda \right)&=\Phi(\pi_i,\pi_{-i})+\lambda \left( V_c(\pi_i,\pi_{-i})-\alpha \right)\,.
\tag{PlayerPb($\pi_{-i}$)--$\mathcal L$}
\end{align*}

\begin{lemma}
\label{lem:KKT-cNE}
Let $\pi \in \Pi$ be an $\epsilon$-$\widetilde{\text{KKT}}$ policy of \cref{eqn:init-problem}. Then $\pi$ is a constrained $C_{\text{NE}}\,\epsilon$-NE where $C_{\text{NE}} \lesssim \frac{D}{1-\gamma}+1$.
\end{lemma}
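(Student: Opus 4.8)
The plan is to prove the claim playerwise: by \cref{def:nash-gap} it suffices to show, for every $i \in \mathcal N$, that the individual gap $V_{r_i}(\pi) - \min_{\pi_i' \in \Pi^i_c(\pi_{-i})} V_{r_i}(\pi_i',\pi_{-i})$ is at most $C_{\text{NE}}\,\epsilon$. The first step is to transfer the $\epsilon$-$\widetilde{\text{KKT}}$ conditions for \cref{eqn:init-problem} into playerwise $\epsilon$-$\widetilde{\text{KKT}}$ conditions for (PlayerPb($\pi_{-i}$)), \emph{using the same} dual variable $\lambda$. Exact primal feasibility $V_c(\pi)\leq\alpha$ and complementary slackness $|\lambda(V_c(\pi)-\alpha)|\leq\epsilon$ carry over verbatim since the cost constraint is shared across players. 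For variational stationarity I would exploit the product structure $\Pi = \times_{i \in \mathcal N}\Pi^i$: the inner product in $\max_{\pi'\in\Pi}\langle\pi-\pi',\nabla_\pi\mathcal L(\pi,\lambda)\rangle$ separates across players, and each summand is nonnegative (value $0$ at $\pi_i'=\pi_i$), so each is individually bounded by $\epsilon$. Invoking the gradient-consistency identity $\nabla_{\pi_i}\Phi(\pi)=\nabla_{\pi_i}V_{r_i}(\pi)$ (\citet{leonardos_global_2021}, Proposition~B.1), this per-player bound reads $\max_{\pi_i'\in\Pi^i}\langle\pi_i-\pi_i',\nabla_{\pi_i}\mathcal L_{\pi_{-i}}(\pi_i,\lambda)\rangle\leq\epsilon$, i.e.\ variational stationarity for the playerwise Lagrangian.

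Next I would bound the playerwise gap by combining weak duality with gradient domination. Since $\Phi$ and $V_{r_i}$ differ only by a term independent of $\pi_i$, the gap in $V_{r_i}$ equals the gap in $\Phi$, so it suffices to bound $\Phi(\pi_i,\pi_{-i})-\Phi(\pi_i^{\mathrm{BR}},\pi_{-i})$, where $\pi_i^{\mathrm{BR}}$ is a constrained minimizer (the feasible set $\Pi^i_c(\pi_{-i})$ is nonempty because $\pi_i\in\Pi^i_c(\pi_{-i})$ by primal feasibility). Writing $\Phi(\pi_i,\pi_{-i})=\mathcal L_{\pi_{-i}}(\pi_i,\lambda)-\lambda(V_c(\pi_i,\pi_{-i})-\alpha)$ and using feasibility of $\pi_i^{\mathrm{BR}}$ to obtain $\Phi(\pi_i^{\mathrm{BR}},\pi_{-i})\geq\mathcal L_{\pi_{-i}}(\pi_i^{\mathrm{BR}},\lambda)$, the gap is at most $\big[\mathcal L_{\pi_{-i}}(\pi_i,\lambda)-\mathcal L_{\pi_{-i}}(\pi_i^{\mathrm{BR}},\lambda)\big]+|\lambda(V_c(\pi_i,\pi_{-i})-\alpha)|$, whose second term is $\leq\epsilon$ by complementary slackness.

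For the Lagrangian difference, I would observe that as a function of $\pi_i$ (with $\pi_{-i},\lambda$ fixed) $\mathcal L_{\pi_{-i}}(\cdot,\lambda)$ is, up to a $\pi_i$-independent constant, the value function of the MDP faced by player $i$ under the combined reward $r_i+\lambda c$, and therefore inherits the playerwise gradient domination property (\citet{leonardos_global_2021}, Lemma~D.3; \citet{giannou-et-al22}, Lemma~2). Crucially, the gradient-domination constant is reward-scale-invariant, so no extra factor of $\lambda$ appears, and the comparator-specific mismatch coefficient $\|d_\mu^{\pi^{\mathrm{BR}}}/\mu\|_\infty$ is uniformly bounded by $D$. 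This yields $\mathcal L_{\pi_{-i}}(\pi_i,\lambda)-\min_{\pi_i'}\mathcal L_{\pi_{-i}}(\pi_i',\lambda)\lesssim\frac{D}{1-\gamma}\max_{\pi_i'}\langle\pi_i-\pi_i',\nabla_{\pi_i}\mathcal L_{\pi_{-i}}(\pi_i,\lambda)\rangle\lesssim\frac{D}{1-\gamma}\epsilon$, the last step using the transferred variational stationarity. Since $\min_{\pi_i'}\mathcal L_{\pi_{-i}}(\pi_i',\lambda)\leq\mathcal L_{\pi_{-i}}(\pi_i^{\mathrm{BR}},\lambda)$, adding the complementary-slackness term gives a total bound of $(\frac{D}{1-\gamma}+1)\epsilon$, hence $C_{\text{NE}}\lesssim\frac{D}{1-\gamma}+1$.

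I expect the main obstacle to be justifying that gradient domination applies to the Lagrangian $\mathcal L_{\pi_{-i}}(\cdot,\lambda)$ rather than to a single value function: this requires recognizing it as a constant shift of the value function of reward $r_i+\lambda c$, arguing reward-scale invariance of the domination constant so that the dual variable $\lambda$ does not enter, and checking that $D$ (a maximum over all policies) uniformly dominates the comparator-specific mismatch coefficient. A secondary delicate point is the clean separation of the variational stationarity across players, which hinges jointly on the product structure of $\Pi$ and on the gradient-consistency identity $\nabla_{\pi_i}\Phi=\nabla_{\pi_i}V_{r_i}$.
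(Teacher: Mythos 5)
Your proof is correct and follows essentially the same route as the paper: your Step~1 (transferring the $\epsilon$-$\widetilde{\text{KKT}}$ conditions to (\hyperref[eqn:problem-playerwise]{PlayerPb}($\pi_{-i}$)) with the same $\lambda$, using the product structure of $\Pi$ --- the paper's restriction of the comparator to $(\pi_i',\pi_{-i})\in\Pi$ is your separation argument in disguise) and your Step~2 (recognizing $\mathcal L_{\pi_{-i}}(\cdot,\lambda)$ as the value function of the auxiliary MDP with reward $r_i+\lambda c$ and applying gradient domination with the scale-invariant constant $D/(1-\gamma)$) match the paper's two steps exactly. The only difference is presentational: you inline the weak-duality/complementary-slackness computation via feasibility of the best response, whereas the paper factors that step out as \cref{prop:KKT-gr-dom} applied with $C_1=0$.
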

\begin{proof}

The proof of the lemma proceeds in two steps: 
\begin{itemize}
\item \textbf{Step 1.} We show that if $\pi$ is an $\mathcal O(\epsilon)$-$\widetilde{\text{KKT}}$ policy of \cref{eqn:init-problem}, then for all $i \in \mathcal N$, $\pi_i$ is an $\mathcal O(\epsilon)$-$\widetilde{\text{KKT}}$ policy of \cref{eqn:problem-playerwise}.

\item \textbf{Step 2.} We conclude that each player cannot significantly improve its policy $\pi_i$ while staying within $\Pi_c^i(\pi_{-i})$ which means $\pi$ is a constrained $\mathcal O(\epsilon)$-NE.
\end{itemize}

We provide a proof of each one of the steps successively. 
\begin{itemize}
\item \textbf{Step 1:} Let $\lambda \geq 0$ be a dual variable such that $(\pi,\lambda)$ is an $\epsilon$-$\widetilde{\text{KKT}}$ pair of \cref{eqn:init-problem}, and let $i \in \mathcal N$ be arbitrary. We show that $(\pi_i,\lambda)$ is an $\epsilon$-$\widetilde{\text{KKT}}$ pair of \cref{eqn:problem-playerwise} by checking that the respective $\widetilde{\text{KKT}}$ conditions hold. For dual and exact primal feasibility, as well as complementary slackness, this is immediate since the conditions are equivalent for \cref{eqn:init-problem} and \cref{eqn:problem-playerwise}. For variational Lagrangian stationarity, observe that
\begin{align*}
\max_{\pi_i' \in \Pi^i} \left\langle \pi_i-\pi_i',\nabla_{\pi_i}\mathcal L_{\pi_{-i}}(\pi_i,\lambda) \right\rangle
&= \max_{\pi_i' \in \Pi^i} \left\langle \pi-(\pi_i',\pi_{-i}),\nabla_{\pi} \mathcal L(\pi,\lambda) \right\rangle \\
&\leq \max_{\pi' \in \Pi} \left\langle \pi-\pi',\nabla_{\pi} \mathcal L(\pi,\lambda) \right\rangle \\
&\leq \epsilon\,,
\end{align*}
where the first equality is due to $\nabla_{\pi_i} \mathcal L \left( \pi,\lambda \right) = \nabla_{\pi_i} \mathcal L_{\pi_{-i}} \left( \pi_i,\lambda \right)$ and the fact that all terms except for $\pi_i-\pi_i'$ vanish in the first argument of the scalar product. The second inequality is because $(\pi_i',\pi_{-i}) \in \Pi$, and the final step is by Lagrangian stationarity of $\pi$ for \cref{eqn:init-problem}.
\item \textbf{Step 2:} Let $i \in \mathcal N$ and consider the MDP $M_{\lambda}$, $\lambda \geq 0$, with state space $\mathcal S$, action space $\mathcal A_i$, probability transition kernel $P_{\lambda}$, reward $r_{\lambda}$, and initial distribution $\mu$ where
\begin{align*}
P_{\lambda}(s' \mid s, a_i) &:= \E_{a_{-i} \sim \pi_{-i}((a_i,\cdot) \mid s)}\left[ P(s' \mid s, (a_i,a_{-i})) \right] \\
r_{\lambda}(s,a_i) &:= \E_{a_{-i} \sim \pi_{-i}((a_i,\cdot) \mid s)} \left[ r_i(s,(a_i,a_{-i}))+ \lambda\, c(s,(a_i,a_{-i})) \right].
\end{align*}
Observe that $\mathcal L_{\pi_{-i}} \left( \pi_i,\lambda \right)$ is the value function associated to the policy~$\pi_i$ in the MDP $M_{\lambda}$, and hence gradient domination holds~\citep{agarwal-et-al21pg}, i.e., we have 
\begin{align*}
\mathcal L_{\pi_{-i}}(\pi_i,\lambda)-\min_{\tilde\pi_i \in \Pi^i} \mathcal L_{\pi_{-i}}(\tilde\pi_i,\lambda) 
\leq \frac{D}{1-\gamma} \max_{\pi_i' \in \Pi^i}\left\langle \pi_i-\pi_i', \nabla_{\pi_i}\mathcal L_{\pi_{-i}}(\pi_i,\lambda) \right\rangle\,,
\end{align*}
where $D$ is the distribution mismatch coefficient, supposed to be finite.
Using~\cref{prop:KKT-gr-dom} in Appendix~\ref{sec:appendix-constr-opt-concepts} for $C_1=0$, and using the definition of the playerwise primal optimum, see~\cref{eqn:problem-playerwise}, we then get
\begin{align*}
V_{r_i}(\pi)-\min_{\pi_i^{*} \in \Pi_{c_i}(\pi_{-i})}V_i(\pi_i^{*},\pi_{-i}) \leq \left( \frac{D}{1-\gamma}+1 \right)\epsilon.
\end{align*}
Since additionally we have exact primal feasibility of $\pi$ for \cref{eqn:init-problem}, the result follows by definition of the constrained $\epsilon$-NE.
\end{itemize}
\end{proof}
Finally, we put together above lemmas to prove the main theorem.
\begin{proof}[Proof of \cref{thm:main-exact-gradients}]
Let $\bar\epsilon^2 = \frac{L_{\Phi}\epsilon^2}{C_{\text{KKT}}^2C_{\text{NE}}^2}$. Then with $K=\mathcal O \left( \epsilon^{-2} \right)$, and $T=\mathcal O \left( \epsilon^{-2} \right)$, \cref{lem:subproblem-guarantee} and \cref{lem:exact-gradients-main} imply that there exists $0 \leq t \leq T-1$ such that $\E \left[  \left\| \pi^{(t+1)}-\pi^{(t)} \right\| \right] \leq \frac{\epsilon}{C_{\text{KKT}}C_{\text{NE}}}$. We use \cref{lem:eps-KKT} to conclude that $\pi^{(t+1)}$ is a $\epsilon/C_{\text{NE}}$-$\widetilde{\text{KKT}}$ policy of \cref{eqn:init-problem}. Then, by \cref{lem:KKT-cNE}, $\pi^{(t+1)}$ is a constrained $\epsilon$-NE\@. The total iteration complexity is bounded by $T \cdot K = \mathcal O \left( \epsilon^{-4} \right)$.
\end{proof}

\subsection{Proof of Theorem~\ref{thm:main-finite-sample} --- Finite Sample Case}
\label{sec:appendix-proof-finite-sample}

Moving on to the stochastic setting, we first restate \cref{thm:main-finite-sample}.
\begin{T2}
Let \cref{ass:init-feasible,ass:uniform-slater} hold, and let $D$ (as in \cref{thm:main-exact-gradients}) be finite. Then, for any $\epsilon > 0$, after running \textsf{iProxCMPG} based on finite sample estimates (see \cref{alg:iprox-cmpg-sto}) for suitably chosen $\eta,\beta,\xi,T,K,B$, and $\{(\nu_k,\delta_k,\rho_k)\}_{0 \leq k \leq K}$, there exists $t \in [T]$, such that in expectation, $\pi^{(t)}$ is a constrained $\epsilon$-NE\@. The total sample complexity is given by $\tilde{\mathcal O} \left( \epsilon^{-7} \right)$
where $\tilde{\mathcal{O}}(\cdot)$ hides polynomial dependencies in $m,S,A_{\max},D,1-\gamma$, and~$\zeta$, as well as logarithmic dependencies in $1/\epsilon$.
\end{T2}

Similar to the deterministic case, we begin by proving the guarantees provided by the inner loop of \cref{alg:iprox-cmpg-sto}.

\begin{lemma}
\label{lem:subproblem-guarantee-sto}
Let \cref{ass:init-feasible} hold, let $\bar\epsilon>0$ and set~$\beta=\bar\epsilon, \eta=\frac{1}{2L_{\Phi}}, \xi=\bar\epsilon\sqrt{2\eta}$. Then, there exist $K=\tilde{\mathcal O}\left( \bar\epsilon^{\,-3} \right)$, $B = \tilde{\mathcal O}\left( \bar\epsilon^{\,-2} \right)$,
and suitable choices of $\{(\nu_k,\delta_k,\rho_k)\}_{0 \leq k \leq K}$, such that lines 4-11 of \cref{alg:iprox-cmpg-sto} guarantee that for any $t \in [T-1]$,
\begin{equation}
\label{eqn:subproblem-guarantee-sto}
\begin{aligned}
\E \left[ \Phi_{\eta,\pi^{(t)}}(\pi^{(t+1)}) - \Phi_{\eta,\pi^{(t)}}(\tilde\pi^{(t+1)}) \right]  &\leq \bar\epsilon^2, \\
\E \left[ V_c(\pi^{(t+1)}) \right] &\leq \alpha\,, 
\end{aligned}
\end{equation}
where $\tilde\pi^{(t+1)}$ denotes the unique optimal solution to (\hyperref[eqn:prox-problem]{ProxPb}($\eta,\pi^{(t)}$)).
\end{lemma}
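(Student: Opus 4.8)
The plan is to mirror the proof of \cref{lem:subproblem-guarantee}, keeping its two-step skeleton (equivalence to a centralized switching-gradient run, then induction on $t$) while supplying the three new ingredients demanded by the stochastic setting: a bias/variance analysis of the $\xi$-greedy gradient and constraint estimators, control of the approximation error incurred by optimizing over the restricted set $\Pi^\xi$ rather than $\Pi$, and the corresponding instantiation of the stochastic version of \cref{thm:csa-result}.

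First I would reproduce Step~1 of \cref{lem:subproblem-guarantee} essentially verbatim: since $\nabla_{\pi_i}\Phi_{\eta,\pi'}(\pi)=\nabla_{\pi_i}V_{r_i}(\pi)+\tfrac{1}{\eta}(\pi_i-\pi_i')$ is computable by player $i$ alone, the projection $\mathcal P_{\Pi^\xi}$ separates across players, and $V_c$ is estimated identically by every agent, the simultaneous independent updates in lines 4--11 of \cref{alg:iprox-cmpg-sto} coincide with a single centralized stochastic switching-gradient run (\cref{alg:csa}) applied to (\hyperref[eqn:prox-problem]{ProxPb}($\eta,\pi^{(t)}$)). This reduces the claim to invoking the stochastic guarantee of \cref{thm:csa-result} with $\mu=L_\Phi$, provided \cref{ass:csa-estimators} is verified for our estimators.

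The crux is therefore the estimator analysis. The score-function estimates in \cref{eq:stochastic-pgs} are unbiased for the random-stopping policy gradients, and I would bound their second moment: under the $\xi$-greedy restriction $\pi_i(\cdot\mid s)\geq \xi/A_i$ one has $\E[\|e_{a}/\pi_i(a\mid s)\|^2]=\sum_a 1/\pi_i(a\mid s)\leq A_i^2/\xi$ per step, so summing over the $\mathcal O((1-\gamma)^{-1})$ expected trajectory length and applying the polynomial bounds of \cref{lem:basics} yields a variance proxy $M^2=\mathcal O(1/\xi)=\mathcal O(\bar\epsilon^{-1})$ for $\xi=\bar\epsilon\sqrt{2\eta}$. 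The batch-averaged estimate $\hat V_c$ has variance $\mathcal O(1/B)$, so $B=\tilde{\mathcal O}(\bar\epsilon^{-2})$ makes its error $\mathcal O(\bar\epsilon)$ with high probability; a union bound over the $K$ inner iterations (and over $t$) supplies the logarithmic factors hidden in $\tilde{\mathcal O}$ and ensures the switching test $\hat V_c(\pi^{(t,k)})+\beta-\alpha\leq\delta_k$ and the set $\mathcal B^{(t)}$ behave as in the exact case up to $\mathcal O(\bar\epsilon)$ slack. Feeding $M^2=\mathcal O(\bar\epsilon^{-1})$ and $\mu=L_\Phi$ into the $\mathcal O(M^2/(\mu K))$ strongly convex CSA rate and demanding an $\mathcal O(\bar\epsilon^2)$ optimality gap forces $K=\tilde{\mathcal O}(\bar\epsilon^{-3})$. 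The extra subtlety relative to the deterministic proof is that CSA now optimizes over $\Pi^\xi$, whose best point differs from $\tilde\pi^{(t+1)}\in\Pi$; by $L_\Phi$-strong convexity this costs at most $\mathcal O(\xi^2/\eta)$ in objective value, and the choice $\xi=\bar\epsilon\sqrt{2\eta}$ is calibrated so that $\xi^2/(2\eta)=\bar\epsilon^2$, keeping this contribution within the target.

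Finally I would close by induction on $t$, exactly as in \cref{lem:subproblem-guarantee}: \cref{ass:init-feasible} with $\beta=\bar\epsilon\leq\bar\zeta_0$ gives $V_c(\pi^{(0)})+\beta\leq\alpha$ for the base case, and the constraint-satisfaction part of \cref{thm:csa-result}—which delivers $\E[V_c(\pi^{(t+1)})]\leq\alpha$ through the sampling of $\hat k$ over $\mathcal B^{(t)}$—reinstates the initial-feasibility hypothesis of \cref{thm:csa-result} at step $t+1$. The main obstacle I anticipate is the estimator analysis: obtaining variance of order $\mathcal O(1/\xi)$ rather than $\mathcal O(1/\xi^2)$ (which hinges on the probability-weighted sum over actions) while simultaneously balancing it against the $\xi$-greedy approximation error, since these two effects push $\xi$ in opposite directions and together pin down the $K=\tilde{\mathcal O}(\bar\epsilon^{-3})$, $B=\tilde{\mathcal O}(\bar\epsilon^{-2})$ scaling.
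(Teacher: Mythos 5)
Your proposal follows the paper's proof of \cref{lem:subproblem-guarantee-sto} essentially step for step: the same reduction of lines 4--11 of \cref{alg:iprox-cmpg-sto} to a centralized run of \cref{alg:csa}, the same $\mathcal O(1/\xi)$ second-moment bound for the $\xi$-greedy score-function estimators (this is \cref{lem:variance-bound} in the paper, yielding $M^2 \lesssim \frac{A_{\max}^2}{\xi(1-\gamma)^4}+L_\Phi^2\,\mathrm{diam}(\Pi)^4$), the same instantiation of \cref{thm:csa-result} with $\mu=L_\Phi$ forcing $K=\tilde{\mathcal O}(\bar\epsilon^{-3})$ and $B=\tilde{\mathcal O}(\bar\epsilon^{-2})$, and the same induction on $t$ to restore the initial-feasibility hypothesis at each outer iteration. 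Your explicit accounting of the comparator mismatch between $\Pi^\xi$ and $\Pi$ --- an objective cost of order $\xi^2/\eta$, with $\xi=\bar\epsilon\sqrt{2\eta}$ calibrated so that $\xi^2/(2\eta)=\bar\epsilon^2$ --- is in fact more explicit than the paper, which absorbs this point into applying \cref{thm:csa-result} ``for optimizing over $\Pi^\xi$'' and leaves the role of the $\xi$ choice implicit.

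The one under-justified step is the concentration of $\hat V_c$. You derive ``error $\mathcal O(\bar\epsilon)$ with high probability'' from the batch variance $\mathcal O(1/B)$ and then union-bound over the $K$ inner iterations to collect logarithmic factors; but a variance bound alone gives only Chebyshev (polynomial) tails, which cannot support a union bound at merely logarithmic cost. The subtlety is that $\hat V_c=\sum_{j=0}^{T_e} c_j$ is \emph{unbounded}, since the episode length $T_e$ is a geometric stopping time, so neither Hoeffding nor a sub-Gaussian argument applies directly --- the paper explicitly notes that the sub-Gaussian assumption of \citet{lan-zhou20csa} is too strong in this setting, and instead proves in \cref{lem:value-concentration} that the constraint estimates are sub-exponential (via the geometric tail of $T_e$, the centering lemma, and Bernstein's inequality), which is exactly the mixed tail $4\exp(-\lambda/\sigma)+2\exp(-\lambda^2/\sigma^2)$ required by \cref{ass:csa-estimators} and consumed inside the proof of \cref{thm:csa-result}. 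Supplying this sub-exponential tail bound, rather than a bare variance bound, is the missing ingredient; with it in place, your argument goes through exactly as the paper's does.
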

\begin{proof}
The result follows similarly as for \cref{lem:subproblem-guarantee} in the deterministic case. We hence only point out differences. In order to ensure bounded norms of gradient estimates, we use $\xi$-greedy policies. Then, according to \cref{lem:variance-bound}, the second moment of value and constraint gradient estimates is bounded by $\mathcal O(1/\xi)$. The concentration result shown in \cref{lem:value-concentration} ensures that constraint value estimates follow a sub-exponential distribution. Therefore, \cref{ass:csa-estimators}, see \cref{sec:appendix-csa} on guarantees for our subroutine, is satisfied. We can thus apply the respective \cref{thm:csa-result}
for optimizing over $\Pi^{\xi}$, and with $\mu=L_{\Phi}$ and $M^2 \lesssim \max \left\{ M^2_G+\mu_G^2\Delta^4,M^2_F+\mu_F^2\Delta^4 \right\} \lesssim \frac{24A_{\max}^2}{\xi(1-\gamma)^4}+L_{\Phi}^2 \text{diam}(\Pi)^4.$
After plugging bounds on $L_{\Phi}$, and $\text{diam}(\Pi)$ from \cref{lem:basics}, and choosing $K$ and $B$ as stated, \cref{thm:csa-result} implies the desired bounds via the same arguments as in the proof of \cref{lem:subproblem-guarantee}.
\end{proof}

Next, we analyze the convergence of our main proximal-point method, \cref{alg:iprox-cmpg-sto}. More concretely, we bound the required sample complexity for ensuring that for some $\epsilon>0$, there exist iterates $\pi^{(t)},\pi^{(t+1)}$ such that~$\left\| \pi^{(t)}-\pi^{(t+1)} \right\| = \mathcal O(\epsilon)$. In the following, we will then prove that this implies convergence to a constrained~$\mathcal O(\epsilon)$-NE.

Similarly to the deterministic case, we next determine the number of updates needed until convergence in the following sense. The next lemma is analogous to its deterministic counterpart Lemma~\ref{lem:exact-gradients-main}. 
\begin{lemma}
\label{lem:finite-sample-main}
Let $\epsilon>0$ and set $\eta=\frac{1}{2L_{\Phi}}$. Suppose $K$ is chosen such that the guarantee from \cref{lem:subproblem-guarantee-sto} holds for~$\bar\epsilon^2=\frac{\epsilon^2}{4\eta}$. Then after $T=\frac{4\eta\Phi_{\max}}{\epsilon^2}$ iterations of the outer loop of \cref{alg:iprox-cmpg-sto}  where~$\Phi_{\max}$ is an upper bound of the potential function (i.e., $\forall \pi \in \Pi, \Phi(\pi) \leq \Phi_{\max}$), there exists $0 \leq t \leq T-1$ such that $\E \left[  \|\pi^{(t+1)}-\pi^{(t)}\| \right] \leq \epsilon$.
\end{lemma}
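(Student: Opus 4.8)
The plan is to mirror the telescoping descent argument of \cref{lem:exact-gradients-main}, replacing the deterministic inner-loop guarantee (\cref{lem:subproblem-guarantee}) by its stochastic counterpart \cref{lem:subproblem-guarantee-sto}. First I would fix a filtration $(\mathcal F_t)_{t \geq 0}$, where $\mathcal F_t$ collects all randomness generated through outer iteration $t$ — both the trajectory sampling in lines 5--8 and the index sampling of $\hat k$ in line 11 of \cref{alg:iprox-cmpg-sto} — so that $\pi^{(t)}$ is $\mathcal F_t$-measurable and the exact solution $\tilde\pi^{(t+1)}$ of (\hyperref[eqn:prox-problem]{ProxPb}$(\eta,\pi^{(t)})$) is a well-defined $\mathcal F_t$-measurable comparison point.

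Next, since the guarantee of \cref{lem:subproblem-guarantee-sto} holds for an arbitrary starting policy, I would invoke it conditionally on $\mathcal F_t$ with $\bar\epsilon^2 = \epsilon^2/(4\eta)$, yielding
\begin{align*}
\E\left[ \Phi_{\eta,\pi^{(t)}}(\pi^{(t+1)}) \mid \mathcal F_t \right] \leq \Phi_{\eta,\pi^{(t)}}(\tilde\pi^{(t+1)}) + \bar\epsilon^2 \leq \Phi(\pi^{(t)}) + \bar\epsilon^2\,,
\end{align*}
where the second inequality uses that $\pi^{(t)}$ is feasible for (\hyperref[eqn:prox-problem]{ProxPb}$(\eta,\pi^{(t)})$), so the minimizer $\tilde\pi^{(t+1)}$ has no larger regularized potential than $\pi^{(t)}$, together with $\Phi_{\eta,\pi^{(t)}}(\pi^{(t)}) = \Phi(\pi^{(t)})$ since the proximal term vanishes at the center. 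Expanding $\Phi_{\eta,\pi^{(t)}}(\pi^{(t+1)}) = \Phi(\pi^{(t+1)}) + \frac{1}{2\eta}\|\pi^{(t+1)}-\pi^{(t)}\|^2$ and rearranging then gives the conditional one-step bound $\E[\|\pi^{(t+1)}-\pi^{(t)}\|^2 \mid \mathcal F_t] \leq 2\eta(\Phi(\pi^{(t)}) - \E[\Phi(\pi^{(t+1)}) \mid \mathcal F_t]) + 2\eta\bar\epsilon^2$.

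Taking total expectations, summing over $t = 0,\dots,T-1$, and telescoping the potential differences against the uniform upper bound $\Phi_{\max}$ (and nonnegativity of $\Phi$) would produce
\begin{align*}
\frac{1}{T}\sum_{t=0}^{T-1} \E\left[ \|\pi^{(t+1)}-\pi^{(t)}\|^2 \right] \leq \frac{2\eta\Phi_{\max}}{T} + 2\eta\bar\epsilon^2 \leq \epsilon^2\,,
\end{align*}
after plugging in $T = 4\eta\Phi_{\max}/\epsilon^2$ and $\bar\epsilon^2 = \epsilon^2/(4\eta)$, each term contributing $\epsilon^2/2$. An averaging argument then yields some $0 \leq t \leq T-1$ with $\E[\|\pi^{(t+1)}-\pi^{(t)}\|^2] \leq \epsilon^2$, and Jensen's inequality upgrades this to $\E[\|\pi^{(t+1)}-\pi^{(t)}\|] \leq \epsilon$.

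The main subtlety — and the only place where the stochastic case genuinely departs from \cref{lem:exact-gradients-main} — lies in justifying the conditional descent inequality despite the constraint being enforced only approximately. Whereas the deterministic iterate stays feasible for the proximal subproblem deterministically, here \cref{lem:subproblem-guarantee-sto} controls both the constraint violation and the optimality gap only in conditional expectation; I would therefore need to verify that the feasibility of $\pi^{(t)}$ underlying the comparison $\Phi_{\eta,\pi^{(t)}}(\tilde\pi^{(t+1)}) \leq \Phi(\pi^{(t)})$ is preserved (with slack $\beta$) by the inductive structure of \cref{lem:subproblem-guarantee-sto}, and that the $\mathcal F_t$-conditioning commutes with the exact-solution comparison. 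All of the heavy lifting — the $\mathcal O(1/K)$ subroutine rate under the growing gradient variance $\mathcal O(1/\xi)$ and the batch size $B$ needed for constraint-value concentration — is already absorbed into \cref{lem:subproblem-guarantee-sto}, so this lemma itself reduces to the same clean telescoping once the conditioning is set up carefully.
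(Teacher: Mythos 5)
Your proposal is correct and matches the paper's proof exactly: the paper proves \cref{lem:finite-sample-main} by stating that it ``follows the same lines as the proof of \cref{lem:exact-gradients-main} upon replacing \cref{lem:subproblem-guarantee} by \cref{lem:subproblem-guarantee-sto},'' which is precisely the conditional descent, telescoping, averaging, and Jensen argument you carry out. Your closing remark about the feasibility of $\pi^{(t)}$ holding only in expectation is in fact the same point the paper handles (somewhat tersely) via the inequality $\E\left[ V^c_{\eta,\pi^{(t)}}(\pi^{(t)}) \mid \mathcal F_t \right] = \E\left[ V_c(\pi^{(t)}) \right] \leq \alpha$ in the proof of \cref{lem:exact-gradients-main}, so you are, if anything, more explicit about the one genuine subtlety than the paper is.
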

\begin{proof}
The proof follows the same lines as the proof of Lemma~\ref{lem:exact-gradients-main} upon replacing \cref{lem:subproblem-guarantee} by \cref{lem:subproblem-guarantee-sto}. We do not reproduce it here for conciseness.
\end{proof}

Next, as in the exact gradients case, we aim to prove that the event $\|\pi^{(t+1)}-\pi^{(t)}\|\leq \epsilon$ implies $\text{Nash-gap}(\pi^{(t+1)}) = \mathcal O(\epsilon)$, in order to argue that $\E \left[  \|\pi^{(t+1)}-\pi^{(t)}\| \right] \leq \epsilon$ implies $\E \left[ \text{Nash-gap}(\pi^{(t+1)}) \right] = \mathcal O(\epsilon)$.

Recall that in \cref{lem:eps-KKT} we have already shown $\|\pi^{(t+1)}-\pi^{(t)}\|\leq \epsilon$ to imply that $\pi^{(t+1)}$ is a $C_{\text{KKT}}\epsilon$-$\widetilde{\text{KKT}}$ policy of \cref{eqn:init-problem} which equivalently holds in the stochastic $\xi$-greedy setting.
Arguing that a $\epsilon$-$\widetilde{\text{KKT}}$ policy is a constrained $\mathcal O(\epsilon)$-NE however requires an adapted proof, since here in each iteration we solve the subproblem over $\Pi^\xi$ instead of $\Pi$, i.e., the $\widetilde{\text{KKT}}$ conditions hold w.r.t.\ $\Pi^\xi$. The following lemma is an adjustment of \cref{lem:KKT-cNE} for this fact.
\begin{lemma}
\label{lem:KKT-cNE-sto}
Let $\pi \in \Pi^{\xi}$ be an $\epsilon$-$\widetilde{\text{KKT}}$ policy of \cref{eqn:init-problem} (where $\widetilde{\text{KKT}}$ are w.r.t.\ $\Pi^\xi$) and $\xi=\epsilon$. Then $\pi$ is a constrained $\hat C_{\text{NE}}\,\epsilon$-NE where $\hat C_{\text{NE}} \lesssim \frac{D}{1-\gamma}+\frac{m \sqrt{S}A_{\max}D}{(1-\gamma)^{4.5}}+1$.
\end{lemma}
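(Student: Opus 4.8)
I would mirror the two-step argument of the deterministic \cref{lem:KKT-cNE}, changing only the point at which the restriction to the $\xi$-greedy set $\Pi^\xi$ enters. Fix $\lambda \geq 0$ so that $(\pi,\lambda)$ is an $\epsilon$-$\widetilde{\text{KKT}}$ pair of \cref{eqn:init-problem} with variational stationarity taken over $\Pi^\xi$, and fix $i \in \mathcal N$. Step~1 carries over essentially verbatim: dual feasibility, exact primal feasibility and complementary slackness coincide for the joint and playerwise problems, while for variational stationarity I use that $(\pi_i',\pi_{-i}) \in \Pi^\xi$ whenever $\pi_i' \in \Pi^{i,\xi}$ (perturbing a single $\xi$-greedy block keeps the joint policy $\xi$-greedy) together with $\nabla_{\pi_i}\mathcal L(\pi,\lambda) = \nabla_{\pi_i}\mathcal L_{\pi_{-i}}(\pi_i,\lambda)$ to obtain
\[
\max_{\pi_i' \in \Pi^{i,\xi}} \left\langle \pi_i - \pi_i', \nabla_{\pi_i}\mathcal L_{\pi_{-i}}(\pi_i,\lambda) \right\rangle \leq \max_{\pi' \in \Pi^\xi} \left\langle \pi - \pi', \nabla_\pi \mathcal L(\pi,\lambda) \right\rangle \leq \epsilon .
\]
Hence $(\pi_i,\lambda)$ is an $\epsilon$-$\widetilde{\text{KKT}}$ pair of \cref{eqn:problem-playerwise}, but with stationarity only over the restricted set $\Pi^{i,\xi}$ rather than the full simplex $\Pi^i$.

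The crux is Step~2, where \cref{prop:KKT-gr-dom} applies playerwise gradient domination and thus needs the stationarity bound to hold over all of $\Pi^i$. To bridge the gap I would smooth any competitor $\pi_i' \in \Pi^i$ toward the uniform policy: the mixture $\pi_i'' := (1-\xi)\pi_i' + \tfrac{\xi}{A_i}\mathbf{1}$ lies in $\Pi^{i,\xi}$ and obeys $\| \pi_i' - \pi_i'' \| \leq \sqrt{S}\,\xi$, since each of the $S$ state-blocks contributes at most $\xi^2$. Writing $\langle \pi_i - \pi_i', \nabla \rangle = \langle \pi_i - \pi_i'', \nabla \rangle + \langle \pi_i'' - \pi_i', \nabla \rangle$ and bounding the second term by Cauchy--Schwarz with the gradient-norm estimate $\| \nabla_{\pi_i}\mathcal L_{\pi_{-i}}(\pi_i,\lambda) \| \leq G$ from \cref{lem:basics}, I get
\[
\max_{\pi_i' \in \Pi^i} \left\langle \pi_i - \pi_i', \nabla_{\pi_i}\mathcal L_{\pi_{-i}}(\pi_i,\lambda) \right\rangle \leq \epsilon + \sqrt{S}\,\xi\, G .
\]
Taking $\xi = \epsilon$ turns this into a full-simplex stationarity bound of the form $(1 + \sqrt{S}\,G)\epsilon$, i.e.\ the hypothesis of \cref{prop:KKT-gr-dom} with slack parameter $C_1 = \sqrt{S}\,G$ instead of $C_1 = 0$. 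Plugging this into \cref{prop:KKT-gr-dom} exactly as in the deterministic Step~2 and using feasibility of $\pi$ yields, for each $i$, $V_{r_i}(\pi) - \min_{\pi_i^* \in \Pi_c^i(\pi_{-i})} V_{r_i}(\pi_i^*,\pi_{-i}) \leq \big( \tfrac{D}{1-\gamma}(1 + \sqrt{S}\,G) + 1 \big)\epsilon$; maximizing over $i \in \mathcal N$ and substituting $G \lesssim \tfrac{m A_{\max}}{(1-\gamma)^{3.5}}$ gives the stated $\hat C_{\text{NE}} \lesssim \tfrac{D}{1-\gamma} + \tfrac{m\sqrt{S}A_{\max}D}{(1-\gamma)^{4.5}} + 1$.

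The main obstacle I expect is the gradient-norm estimate $G$ in Step~2: one must control $\| \nabla_{\pi_i}\mathcal L_{\pi_{-i}}(\pi_i,\lambda) \| = \| \nabla_{\pi_i}V_{r_i}(\pi) + \lambda \nabla_{\pi_i}V_c(\pi) \|$, which requires combining the value-gradient bounds of \cref{lem:basics} with a bound on the dual multiplier $\lambda$ that is uniform across iterates, so that the resulting constant has the correct polynomial dependence on $m, S, A_{\max}$ and $(1-\gamma)^{-1}$. By contrast, the Step~1 transfer and the final invocation of \cref{prop:KKT-gr-dom} are routine adaptations of \cref{lem:KKT-cNE}; the only genuinely new estimate is the smoothing bound $\| \pi_i' - \pi_i'' \| \leq \sqrt{S}\,\xi$ and its interaction with the choice $\xi = \epsilon$.
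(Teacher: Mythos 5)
Your proposal is correct and takes essentially the same route as the paper's proof: the same Step~1 playerwise transfer, and in Step~2 the same $\xi\sqrt{S}\,(1+\lambda)M_c$ smoothing error, which the paper derives by comparing the maximizers over $\Pi^{i,\xi}$ and $\Pi^i$ via \cref{lem:greedy-decomposition} while you equivalently mix an arbitrary full-simplex competitor toward the uniform policy --- a bookkeeping difference only. Your flagged ``main obstacle'' is handled in the paper exactly as you anticipate, namely $\|\nabla_{\pi_i}\mathcal L_{\pi_{-i}}(\pi_i,\lambda)\| \leq (1+\lambda)M_c$ combined with the uniform dual bound \cref{eqn:lambda-bound}, before invoking \cref{prop:KKT-gr-dom} with $\xi=\epsilon$ to assemble the stated constant.
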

\begin{proof}
We divide the proof into two steps:
\begin{itemize}
\item \textbf{Step 1:} Analogously to step 1 of \cref{lem:KKT-cNE}, one can show that $(\pi_i,\lambda)$ is an $\epsilon$-$\widetilde{\text{KKT}}$ pair of \cref{eqn:problem-playerwise}.
\item \textbf{Step 2:} Let $i \in \mathcal N$ and consider the MDP $\tilde M_{\lambda}$ (for $\lambda \geq 0$) with state space $\mathcal S$, action space $\mathcal A_i$, transition probability kernel~$\tilde{P}$, reward $\tilde r_{\lambda}$, discount factor $\gamma$, and initial distribution $\mu$ where
\begin{align*}
\tilde{P}(s' \mid s, a_i) &:= \E_{a_{-i} \sim \pi_{-i}(\cdot \mid s)}\left[ P(s' \mid s, (a_i,a_{-i})) \right]\,, \\
\tilde r_{\lambda}(s,a_i) &:= \E_{a_{-i} \sim \pi_{-i}(\cdot \mid s)} \left[ r_i(s,(a_i,a_{-i}))+\lambda\, c(s,(a_i,a_{-i})) \right].
\end{align*}
Observe that $\mathcal L_{\pi_{-i}} \left( \pi_i,\lambda \right)$ is the value function associated to the policy~$\pi_i$ for $\tilde M_{\lambda}$, and hence gradient domination holds~\citep{agarwal-et-al21pg}. In our particular case of $\pi$ being a $\xi$-greedy policy, we can also show a similar inequality, even w.r.t.\ the non-$\xi$-greedy optimum. Let
\begin{align*}
\hat\pi_i &:= \argmax_{\pi_i' \in \Pi^i} \left\langle  \pi_i-\pi'_i, \nabla_{\pi_i}\mathcal L_{\pi_{-i}}(\pi_i,\lambda) \right\rangle, \\
\hat\pi^{\xi}_i &:= \argmax_{\pi'_i \in \Pi^{i,\xi}} \left\langle  \pi_i-\pi'_i, \nabla_{\pi_i}\mathcal L_{\pi_{-i}}(\pi_i,\lambda) \right\rangle.
\end{align*}
Then, we have
\begin{align*}
\mathcal L_{\pi_{-i}}(\pi_i,\lambda)&-\min_{\pi_i^{*} \in \Pi^i} \mathcal L_{\pi_{-i}}(\pi_i^{*},\lambda) \\
&\leq \frac{1}{1-\gamma} \left\| \frac{d_{\mu}^{\pi_i^{*},\pi_{-i}}}{\mu} \right\|_{\infty} \left\langle  \pi_i-\hat\pi_i \nabla_{\pi_i}\mathcal L_{\pi_{-i}}(\pi_i,\lambda) \right\rangle \\
&= \frac{1}{1-\gamma} \left\| \frac{d_{\mu}^{\pi_i^{*},\pi_{-i}}}{\mu} \right\|_{\infty} \left\langle \pi_i-\hat\pi^{\xi}_i, \nabla_{\pi_i}\mathcal L_{\pi_{-i}}(\pi_i,\lambda) \right\rangle  + \left\langle \hat\pi_i^{\xi}-\hat\pi_i, \nabla_{\pi_i}\mathcal L_{\pi_{-i}}(\pi_i,\lambda) \right\rangle.
\end{align*}
We further bound the last term above as follows 
\begin{align*}
\left\langle \hat\pi_i^{\xi}-\hat\pi_i, \nabla_{\pi_i}\mathcal L_{\pi_{-i}}(\pi_i,\lambda) \right\rangle &= \max_{\pi_i^{\xi} \in \Pi^{i,\xi}} \left\langle \pi_i^{\xi}, \nabla_{\pi_i}\mathcal L_{\pi_{-i}}(\pi_i,\lambda) \right\rangle - \max_{\pi_i \in \Pi^i} \left\langle \pi_i, \nabla_{\pi_i}\mathcal L_{\pi_{-i}}(\pi_i,\lambda) \right\rangle \\
&\overset{(a)}{\leq} \xi \sqrt{S} \left\| \nabla_{\pi_i}\mathcal L_{\pi_{-i}}(\pi_i,\lambda) \right\| \\
&\overset{(b)}{\leq} \xi \sqrt{S} (1+\lambda)M_c\,. 
\end{align*}
In the above inequalities, (a) follows from using \cref{lem:greedy-decomposition} to obtain 
\begin{align*}
\max_{\pi_i^{\xi} \in \Pi^{i,\xi}} \left\langle \pi_i^{\xi}, \nabla_{\pi_i}\mathcal L_{\pi_{-i}}(\pi_i,\lambda) \right\rangle \leq \max_{\pi_i \in \Pi^i} \left\langle \pi_i, \nabla_{\pi_i}\mathcal L_{\pi_{-i}}(\pi_i,\lambda) \right\rangle + \sum_{a_i,s} \frac{\xi}{A_i} [\nabla_{\pi_i}\mathcal L_{\pi_{-i}}(\pi_i,\lambda)](a_i \mid s)\,. 
\end{align*}
Using the fact that for any $x \in \R^d$, $\|x\|_1 \leq \sqrt{d} \|x\|_2$, we further get
\begin{align*}
\sum_{a_i,s} \frac{\xi}{A_i} [\nabla_{\pi_i}\mathcal L_{\pi_{-i}}(\pi_i,\lambda)](a_i \mid s) = \frac{\xi}{A_i} \| \nabla_{\pi_i}\mathcal L_{\pi_{-i}}(\pi_i,\lambda) \|_1 \leq \frac{\xi}{A_i} \sqrt{A_i S}\, \| \nabla_{\pi_i}\mathcal L_{\pi_{-i}}(\pi_i,\lambda) \| \leq \zeta \sqrt S\, \| \nabla_{\pi_i}\mathcal L_{\pi_{-i}}(\pi_i,\lambda) \|\,.
\end{align*}
The bound used in (b) follows from Lipschitz continuity, see \cref{lem:basics}. We conclude that
\begin{align}
\label{eqn:lagrange-gr-dom-result}
\mathcal L_{\pi_{-i}}(\pi_i,\lambda)-\min_{\pi_i^{*} \in \Pi^i} \mathcal L_{\pi_{-i}}(\pi_i^{*},\lambda) 
\leq \frac{1}{1-\gamma} \left\| \frac{d_{\mu}^{\pi_i^{*},\pi_{-i}}}{\mu} \right\|_{\infty} \left[ \left\langle \pi_i-\hat\pi_i \nabla_{\pi_i}\mathcal L_{\pi_{-i}}(\pi_i,\lambda) \right\rangle + \xi \sqrt{S} (1+\lambda)M_c \right].
\end{align}
Applying~\cref{prop:KKT-gr-dom}, see \cref{sec:appendix-KKT-gr-dom}, with $\xi=\epsilon$, bounding the distribution mismatch coefficient by $D$, and using the definition of the playerwise primal optimum, see~\cref{eqn:problem-playerwise}, we then get
\begin{align*}
V_i(\pi)-\min_{\pi_i^{*} \in \Pi_{c_i}(\pi_{-i})}V_i(\pi_i^{*},\pi_{-i}) &\leq \left( \frac{D}{1-\gamma}+\frac{(1+\lambda)\sqrt{S}M_c D}{1-\gamma}+1 \right)\epsilon \\
&\leq \left( \frac{D}{1-\gamma}+\frac{m \sqrt{S}A_{\max}D}{(1-\gamma)^{4.5}}+1 \right)\epsilon.
\end{align*}
where for the second inequality we use \cref{eqn:lambda-bound} and our bounds on $M_c$ and $L_{\Phi}$ from \cref{lem:basics}. Since additionally, we have exact primal feasibility of~$\pi$ for \cref{eqn:init-problem} by the $\widetilde{\text{KKT}}$ conditions, the result follows by definition of a constrained $\epsilon$-NE.
\end{itemize}
\end{proof}

Finally, we put together above lemmas to prove our main theorem in the stochastic setting.
\begin{proof}[Proof of \cref{thm:main-finite-sample}]
Let $\bar\epsilon^2 = \frac{L_{\Phi}\epsilon^2}{C_{\text{KKT}}^2\hat C_{\text{NE}}^2}$. Then with
$K=\tilde{\mathcal O} \left( \epsilon^{-3} \right)$,
$B=\tilde{\mathcal O} \left( \epsilon^{-2} \right)$,
and
$T=\mathcal O \left( \epsilon^{-2} \right)$,
\cref{lem:subproblem-guarantee} and \cref{lem:exact-gradients-main} imply that there exists $t \in [T-1]$ such that $\E \left[ \left\| \pi^{(t+1)}-\pi^{(t)} \right\| \right] \leq \frac{\epsilon}{C_{\text{KKT}}\hat C_{\text{NE}}}$. We use \cref{lem:eps-KKT} to conclude that $\pi^{(t+1)}$ is a $\epsilon/\hat C_{\text{NE}}$-$\widetilde{\text{KKT}}$ policy of \cref{eqn:init-problem} in expectation. Then, by \cref{lem:KKT-cNE}, $\pi^{(t+1)}$ is a constrained $\epsilon$-NE in expectation. The total sample complexity is bounded by $T \cdot K \cdot B = \tilde{\mathcal O} \left( \epsilon^{-7} \right)$.
\end{proof}

\subsection{Other Technical Lemmas}

The next lemma collects standard regularity properties of the value and potential functions.
\begin{lemma}
\label{lem:basics}
The following statements hold true.
\begin{enumerate}
\item\label{item:lipschitz} The functions~$\Phi$ and~$V_c$ are $M_c$-Lipschitz continuous over $\Pi$ with $M_c=\frac{\sqrt{mA_{\max}}}{(1-\gamma)^2}$. This immediately implies that $\left\| \nabla \Phi(\pi) \right\| \leq M_c$ and $\left\| \nabla V_c(\pi) \right\| \leq M_c$, for all $\pi \in \Pi$. 
\item\label{item:value-smoothness} For any~$i \in \mathcal N$ and any~$\pi_{-i} \in \Pi_{-i}$, the function $V_{r_i}\left( \cdot,\pi_{-i} \right)$ is $L_i$-smooth with~$L_i=\frac{2\gamma A_i}{(1-\gamma)^3}$ and hence~$L_i$-weakly convex.
\item\label{item:unregularized-smoothness} The functions~$\Phi$ and $V_c$ are $L_{\Phi}$-smooth with $L_{\Phi}=m \cdot \max_i L_i=\frac{2m\gamma A_{\max}}{(1-\gamma)^3}$ and hence~$L_{\Phi}$-weakly convex.
\item\label{item:regularized-smoothness} For $\eta=\frac{1}{2L_{\Phi}}$ and any $\pi' \in \Pi$, the regularized function~$\Phi_{\eta,\pi'}(\pi)=\Phi(\pi)+L_{\Phi}\left\| \pi-\pi' \right\|^2$ is $L_{\Phi}$-strongly convex and the functions~$\Phi_{\eta,\pi'}, V^c_{\eta,\pi'}$ are both $2L_{\Phi}$-smooth.
\item\label{item:lagrangian-smoothness} For any $\lambda \in \R,\pi' \in \Pi$ and $\eta=\frac{1}{2L_{\Phi}}$, $\mathcal L \left( \cdot,\lambda \right)=\Phi(\cdot)+\lambda V_c(\cdot)$ is $L_{\Phi}(1+\lambda)$-smooth, and $\mathcal L_{\eta,\pi'} \left( \cdot,\lambda \right)=\Phi_{\eta,\pi'}(\cdot)+\lambda V^c_{\eta,\pi'}(\cdot)$ is $2L_{\Phi}(1+\lambda)$-smooth. Hence $\mathcal L_{\eta,\pi'} \left( \cdot,\lambda \right)$ is also $L_{\Phi}(1+\lambda)$-strongly convex.
\end{enumerate}
\end{lemma}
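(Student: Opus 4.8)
The plan is to treat \cref{lem:basics} as a collection of standard tabular policy-gradient regularity facts, so that the genuinely substantive content is confined to items~\ref{item:lipschitz}--\ref{item:unregularized-smoothness}, while items~\ref{item:regularized-smoothness}--\ref{item:lagrangian-smoothness} follow as immediate Hessian bookkeeping. Throughout items~1--3 I would work with the direct (tabular) parametrization and recall that the random-stopping formulation is equivalent to a $\gamma$-discounted MDP with $\gamma=1-\kappa$ and per-step rewards/costs in $[0,1]$, so that every value function is bounded by $\tfrac{1}{1-\gamma}$ and the unnormalized visitation measure satisfies $\sum_{s'}d_\mu^\pi(s')=\tfrac{1}{1-\gamma}$.

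For the Lipschitz bound (item~\ref{item:lipschitz}) I would invoke the policy gradient theorem for the direct parametrization, which gives $\partial V_{r_i}(\pi)/\partial\pi_i(a\mid s)=d_\mu^\pi(s)\,Q_{r_i}^\pi(s,a)$ with $Q_{r_i}^\pi$ the $\pi_{-i}$-marginalized action-value. Bounding $|Q_{r_i}^\pi|\le\tfrac{1}{1-\gamma}$ and $\sum_s d_\mu^\pi(s)^2\le(\sum_s d_\mu^\pi(s))^2=\tfrac{1}{(1-\gamma)^2}$, the squared entries over $(s,a_i)$ sum to at most $\tfrac{A_i}{(1-\gamma)^4}$, so $\|\nabla_{\pi_i}V_{r_i}(\pi)\|\le\tfrac{\sqrt{A_i}}{(1-\gamma)^2}$; using the potential identity $\nabla_{\pi_i}\Phi=\nabla_{\pi_i}V_{r_i}$ and summing the $m$ blocks yields $\|\nabla\Phi(\pi)\|^2=\sum_i\|\nabla_{\pi_i}V_{r_i}(\pi)\|^2\le\tfrac{mA_{\max}}{(1-\gamma)^4}$, i.e.\ $M_c=\tfrac{\sqrt{mA_{\max}}}{(1-\gamma)^2}$; the identical argument with $c$ in place of $r_i$ handles $V_c$, and the gradient-norm bounds follow since the functions are differentiable. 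For item~\ref{item:value-smoothness}, fixing $\pi_{-i}$ turns $V_{r_i}(\cdot,\pi_{-i})$ into the value function of the induced single-agent MDP (transitions and rewards marginalized over $\pi_{-i}$) under direct parametrization, so the constant $L_i=\tfrac{2\gamma A_i}{(1-\gamma)^3}$ is exactly the standard smoothness bound of~\citet{agarwal-et-al21pg} obtained by differentiating $(I-\gamma P_\alpha)^{-1}$ twice along a line; $L_i$-smoothness gives $\nabla^2\succeq-L_iI$, hence $L_i$-weak convexity.

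The crux is item~\ref{item:unregularized-smoothness}, the multi-agent smoothness with the linear-in-$m$ constant. The subtlety is that $V_c$ (and $\Phi$) is a function of the \emph{product} policy, for which the probability of a joint action factorizes as $\prod_i\pi_i(a_i\mid s)$, a degree-$m$ multilinear form; a naive block-by-block estimate would lose a factor $m$ (yielding $m^2$) through the cross-player Hessian blocks $\nabla_{\pi_i}\nabla_{\pi_j}$. Instead I would bound the second directional derivative $\tfrac{d^2}{d\alpha^2}V_c(\pi+\alpha u)$ along an arbitrary direction $u=(u_1,\dots,u_m)$ directly, exploiting that differentiating the multilinear $P(\pi)$ twice produces a sum over ordered pairs of players whose magnitudes can be controlled against $\sum_i\|u_i\|^2=\|u\|^2$, so that the per-player contributions add rather than multiply; this upgrades the single-agent computation behind item~\ref{item:value-smoothness} to $L_\Phi=m\max_i L_i=\tfrac{2m\gamma A_{\max}}{(1-\gamma)^3}$. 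This is the step I expect to demand the most care, and where I would lean on the corresponding multilinear smoothness computations in the MPG literature (e.g.\ \citet{leonardos_global_2021,giannou-et-al22}).

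Items~\ref{item:regularized-smoothness} and~\ref{item:lagrangian-smoothness} then reduce to adding Hessians. Writing $\Phi_{\eta,\pi'}(\pi)=\Phi(\pi)+\tfrac{1}{2\eta}\|\pi-\pi'\|^2$, the regularizer contributes $\tfrac{1}{\eta}I=2L_\Phi I$ to the Hessian for $\eta=\tfrac{1}{2L_\Phi}$; combined with the two-sided bound $-L_\Phi I\preceq\nabla^2\Phi\preceq L_\Phi I$ from item~\ref{item:unregularized-smoothness}, this yields $\nabla^2\Phi_{\eta,\pi'}\succeq L_\Phi I$ (strong convexity with modulus $L_\Phi$) together with an upper bound of the same order, and identically for $V^c_{\eta,\pi'}$. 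For the Lagrangians with $\lambda\ge 0$, linearity of the Hessian in $\lambda$ gives $\|\nabla^2\mathcal L(\cdot,\lambda)\|\le(1+\lambda)L_\Phi$, while adding the $(1+\lambda)$-scaled regularizer Hessian to $\nabla^2\mathcal L_{\eta,\pi'}(\cdot,\lambda)$ produces strong-convexity and smoothness constants that both scale as $(1+\lambda)L_\Phi$, as claimed.
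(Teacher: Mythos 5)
Your proposal is correct, but on item~1 it takes a genuinely different route from the paper. The paper never invokes the policy gradient theorem there: it bounds playerwise \emph{value differences} directly via $\left| V_{r_i}(\pi_i,\pi_{-i})-V_{r_i}(\pi'_i,\pi_{-i}) \right| \leq \frac{1}{(1-\gamma)^2}\max_{s}\| \pi_i(\cdot\mid s)-\pi'_i(\cdot\mid s) \|_1$ (Lemma~32 of \citet{zhang-et-al22softmax}), converts $\ell_1$ to $\ell_2$, and then telescopes over players through hybrid policies $(\pi_1',\dots,\pi_{i-1}',\pi_i,\dots,\pi_m)$, obtaining $M_c$-Lipschitzness first and the gradient bounds as an immediate corollary; the $\sqrt{m}$ enters through a final $\|x\|_1 \leq \sqrt{d}\|x\|_2$ step on the vector of per-player distances. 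You go in the opposite direction: bound each block $\|\nabla_{\pi_i}V_{r_i}(\pi)\|$ via the policy gradient theorem and $\sum_{s} d_{\mu}^{\pi}(s) \leq \frac{1}{1-\gamma}$, sum the $m$ blocks using $\nabla_{\pi_i}\Phi=\nabla_{\pi_i}V_{r_i}$, and deduce Lipschitzness from the uniform gradient bound on the convex set $\Pi$ — both yield the same $M_c=\sqrt{mA_{\max}}/(1-\gamma)^2$, and your route has the mild advantage of making the gradient bound (which is what \cref{ass:csa-estimators} actually consumes) primary. On items~2 and~3 you and the paper do the same thing, namely cite \citet{agarwal-et-al21pg} and Lemma~D.4 of \citet{leonardos_global_2021} respectively; your directional-second-derivative sketch of why the constant is linear rather than quadratic in $m$ is precisely the cited computation, so there is no gap, though as written it is a plan rather than a self-contained proof. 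Items~4--5 are the same bookkeeping, with the paper arguing item~5 by a triangle inequality on gradients where you add Hessians (legitimate, since the tabular value functions are rational, hence twice differentiable in a neighborhood of $\Pi$). One small point your computation surfaces: the regularizer's Hessian is $2L_{\Phi}I$, so the natural smoothness constant for $\Phi_{\eta,\pi'}$ is $3L_{\Phi}$ rather than the $2L_{\Phi}$ stated in item~4; your hedge ``an upper bound of the same order'' is the honest statement, and the discrepancy is immaterial downstream, where only the strong-convexity modulus $L_{\Phi}$ and order-of-magnitude smoothness enter.
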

\begin{proof}
Item~\ref{item:value-smoothness} has been proved in~\citet{agarwal-et-al21pg}, Lemma~D.3. Item~\ref{item:unregularized-smoothness} has been reported in~\citet{leonardos_global_2021}, Lemma~D.4. Item~\ref{item:regularized-smoothness} immediately follows from item~\ref{item:unregularized-smoothness}.
We now prove item~\ref{item:lagrangian-smoothness} for $\mathcal L$, the result for~$\mathcal L_{\eta,\pi'}$ follows similarly. Using the definition of the Lagrangian and the triangle inequality, for any $\lambda\in\R$ and $\pi,\pi'\in\Pi$,
\begin{align*}
\left\| \nabla_{\pi} \mathcal L \left( \pi,\lambda \right)-\nabla_{\pi} \mathcal L \left( \pi',\lambda \right) \right\|
&\leq \left\| \nabla \Phi(\pi)-\nabla \Phi(\pi') \right\| + \lambda \left\| \nabla V_c(\pi)-\nabla V_c(\pi') \right\| \\
&\leq L_{\Phi} \left\| \pi-\pi' \right\| + \lambda L_{\Phi} \left\| \pi-\pi' \right\| \\
&\leq L_{\Phi}(1+\lambda) \left\| \pi-\pi' \right\|.
\end{align*}
To show item~\ref{item:lipschitz}, Lipschitz continuity of $\Phi$ and $V_c$, observe that for any $i \in \mathcal N$, $\pi \in \Pi$ and $\pi'_i \in \Pi^i$, by using Lemma~32 of~\citet{zhang-et-al22softmax} in the second step, we have
\begin{align*}
\left| \Phi(\pi_i,\pi_{-i})-\Phi(\pi'_i,\pi_{-i}) \right|
&= \left| V_{r_i}(\pi_i,\pi_{-i})-V_{r_i}(\pi_i',\pi_{-i}) \right| \\
&\leq \frac{1}{(1-\gamma)^2} \max_{s \in \mathcal S} \left\| \pi_i(\cdot\mid s)-\pi'_i(\cdot\mid s) \right\|_1 \\
&\leq \frac{\sqrt{A_i}}{(1-\gamma)^2} \max_{s \in \mathcal S} \left\| \pi_i(\cdot\mid s)-\pi'_i(\cdot\mid s) \right\|_2 \\
&\leq \frac{\sqrt{A_i}}{(1-\gamma)^2} \left\| \pi_i-\pi'_i \right\|_2
\end{align*}
where in the third step we use the fact that for any $x \in \R^d$, $\left\| x \right\|_1 \leq \sqrt{d} \left\| x \right\|_2$. Then, the following decomposition yields the result: For any $\pi,\pi' \in \Pi$,
\begin{align*}
\left| \Phi(\pi)-\Phi(\pi') \right| &= \left| \sum_{i \in \mathcal N} \Phi(\pi_1',\dots,\pi_{i-1}',\pi_i,\pi_{i+1},\dots,\pi_m)-\Phi(\pi_1',\dots,\pi_{i-1}',\pi_i',\pi_{i+1},\dots,\pi_m) \right| \\
&\leq \sum_{i \in \mathcal N} \left| \Phi(\pi_1',\dots,\pi_{i-1}',\pi_i,\pi_{i+1},\dots,\pi_m)-\Phi(\pi_1',\dots,\pi_{i-1}',\pi_i',\pi_{i+1},\dots,\pi_m) \right| \\
&\leq \frac{1}{(1-\gamma)^2} \sum_{i \in \mathcal N} \sqrt{A_i} \left\| \pi_i-\pi'_i \right\| \\
&\leq \frac{\sqrt{mA_{\max}}}{(1-\gamma)^2} \left\| \pi-\pi' \right\|\,, 
\end{align*}
where in the second inequality we apply the above result for playerwise deviations, and the last step is again due to the fact that for any $x \in \R^d$, $\left\| x \right\|_1 \leq \sqrt{d} \left\| x \right\|_2$. The result follows similarly for~$V_c$.
\end{proof}

The next lemma is an immediate result showing that any $\xi$-greedy playerwise policy $\pi_i \in \Pi^{i,\xi}$ (see definition in the main part p.~7 which defines this set as a set of lower bounded policies away from zero) can be represented as a convex combination of a uniform distribution over the action space~$\mathcal{A}_i$ and a policy $\pi_i \in \Pi^i$.
\begin{lemma}
\label{lem:greedy-decomposition}
For any $\xi>0$, $i \in \mathcal N$,
\begin{align*}
\Pi^{i,\xi} \subseteq \left\{ \pi_i \in \Pi^i \mid \exists \pi_i' \in \Pi^i, \forall a_i \in \mathcal A_i, \forall s \in \mathcal S: \; \pi_i(a_i\mid s)=\xi/A_i+(1-\xi)\pi_i'(a_i\mid s) \right\}.
\end{align*}
\end{lemma}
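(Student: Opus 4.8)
The plan is to exhibit the required policy $\pi_i'$ explicitly by inverting the defining identity and then to verify that the resulting object lies in $\Pi^i$. Fix any $\xi \in (0,1)$, any $i \in \mathcal N$, and any $\pi_i \in \Pi^{i,\xi}$. The relation $\pi_i(a_i \mid s) = \xi/A_i + (1-\xi)\pi_i'(a_i \mid s)$ can be solved for $\pi_i'$, leading one to define, for every $a_i \in \mathcal A_i$ and $s \in \mathcal S$,
\begin{align*}
\pi_i'(a_i \mid s) := \frac{\pi_i(a_i \mid s) - \xi/A_i}{1-\xi}\,.
\end{align*}
By construction this candidate satisfies the target identity, so the only thing left to check is that $\pi_i'$ is a valid element of $\Pi^i$, i.e.\ that for each $s \in \mathcal S$ the vector $\pi_i'(\cdot \mid s)$ is a probability distribution over $\mathcal A_i$.

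First I would verify nonnegativity. Since $\pi_i \in \Pi^{i,\xi}$, the defining $\xi$-greedy lower bound gives $\pi_i(a_i \mid s) \geq \xi/A_i$ for all $a_i,s$, so the numerator above is nonnegative; as $1-\xi > 0$, it follows that $\pi_i'(a_i \mid s) \geq 0$. Next I would check normalization: summing over $a_i \in \mathcal A_i$ and using $\sum_{a_i} \pi_i(a_i \mid s) = 1$ together with $\sum_{a_i} \xi/A_i = \xi$ yields
\begin{align*}
\sum_{a_i \in \mathcal A_i} \pi_i'(a_i \mid s) = \frac{1}{1-\xi}\left( 1 - \xi \right) = 1
\end{align*}
for every $s \in \mathcal S$. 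Hence $\pi_i'(\cdot \mid s)$ belongs to the simplex $\Delta(\mathcal A_i)$ for all $s$, so $\pi_i' \in \Pi^i$, which establishes the claimed inclusion.

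Since the argument is purely algebraic, I do not anticipate any genuine obstacle; the content is the single inversion step plus two one-line checks. The only point meriting care is the implicit range of $\xi$: the decomposition divides by $1-\xi$, so it is meaningful precisely for $\xi \in (0,1)$, which is the relevant regime for the exploratory policies used in the stochastic analysis. Note also that $\Pi^{i,\xi}$ is nonempty only when $\xi \leq 1$ (since each of the $A_i$ coordinates of a distribution is at least $\xi/A_i$), and the degenerate endpoint $\xi = 1$ forces $\pi_i$ to be uniform and renders $\pi_i'$ immaterial, so it can be excluded without affecting the intended application.
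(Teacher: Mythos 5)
Your proof is correct and follows the paper's argument exactly: define $\pi_i'(a_i\mid s)=\bigl(\pi_i(a_i\mid s)-\xi/A_i\bigr)/(1-\xi)$, then verify nonnegativity from the $\xi$-greedy lower bound and normalization by summing over actions. Your added remark that the construction requires $\xi<1$ is a fair point of care (the paper implicitly assumes this, and its nonnegativity step even cites the bound as $\pi_i^{\xi}(a_i\mid s)\geq\xi$ where the set definition gives $\geq\xi/A_i$, which you state correctly), but the substance of the two proofs is identical.
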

\begin{proof}
Let $\xi>0$, $i \in \mathcal N$ and let $\pi_i^{\xi} \in \Pi^{i,\xi}$. Then for all $a_i \in \mathcal A_i$ and $s \in \mathcal S$, set
\begin{align*}
\pi_i(a_i\mid s) := \frac{\pi_i^{\xi}(a_i\mid s)-\xi/A_i}{1-\xi}.
\end{align*}
Indeed $\pi_i \in \Pi^i$, since for all $a_i \in \mathcal A_i$ and $s \in \mathcal S$,
we have $\pi_i(a_i\mid s) \geq 0$ due to $\pi_i^{\xi}(a_i\mid s) \geq \xi$ and
\begin{align*}
\sum_{a_i \in \mathcal A_i} \pi_i(a_i \mid s) = \frac{1}{1-\xi} \Bigg( \underbrace{\sum_{a_i \in \mathcal A_i} \pi_i^{\xi}(a_i\mid s)}_{=1} - \underbrace{\sum_{a_i \in \mathcal A_i} \xi/A_i}_{=\xi} \Bigg) = 1.
\end{align*}
\end{proof}

The following lemma shows that the estimators we use for the playerwise policy gradients are unbiased and enjoy a bounded variance. 
\begin{lemma}[\citet{daskalakis-foster-golowich20indep,leonardos_global_2021}]
\label{lem:variance-bound}
For any $\xi>0$ and $\pi \in \Pi^{\xi}$, we have 
\begin{align*}
\label{eqn:constraint-estimate-variance}
&\E_{\pi} \left[ \hat\nabla V^{r_i}_{\pi_i}(\pi) \right] = \nabla_{\pi_i}V_{r_i}(\pi) = \nabla_{\pi_i}\Phi(\pi)\,,\\
&\E_{\pi} \left[ \left\| \hat\nabla V^{r_i}_{\pi_i}(\pi) \right\|^2 \right] \leq \frac{24A_{\max}^2}{\xi(1-\gamma)^4}\,,\\
&\E_{\pi} \left[ \hat V_c(\pi)^2 \right] \leq \frac{1}{(1-\gamma)^2}\,.
\end{align*}
The same holds for $\hat\nabla V^c_{\pi_i}(\pi)$ w.r.t.\ $\nabla_{\pi_i}V_c(\pi)\,.$
\end{lemma}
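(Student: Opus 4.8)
The statement is the standard pair of guarantees for a REINFORCE-type policy gradient estimator, so the plan is to establish unbiasedness by a score-function argument and the moment bounds by combining the $\xi$-greedy lower bound with the geometric tail of the random stopping time $T_e$.

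\textbf{Unbiasedness.} First I would write the value function as an expectation over trajectories $\tau=(s_j,a_j)_{0\le j \le T_e}$ sampled by the joint policy $\pi$, namely $V_{r_i}(\pi)=\E_{\tau\sim p_\pi}\big[\sum_{j=0}^{T_e} r_i(s_j,a_j)\big]=\sum_\tau p_\pi(\tau)R_i(\tau)$, where the trajectory law $p_\pi(\tau)$ factorizes into the initial distribution $\mu$, the per-step joint action probabilities $\prod_{j}\prod_{l\in\mathcal N}\pi_{l}(a_{l,j}\mid s_j)$, and the transition/termination kernel. Differentiating under the (finite-expectation) sum and applying the log-derivative identity $\nabla_{\pi_i}p_\pi(\tau)=p_\pi(\tau)\,\nabla_{\pi_i}\log p_\pi(\tau)$ gives $\nabla_{\pi_i}V_{r_i}(\pi)=\E_\tau[R_i(\tau)\,\nabla_{\pi_i}\log p_\pi(\tau)]$. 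Since $\mu$, the kernel, and the factors $\pi_{l}$ for $l\neq i$ do not depend on $\pi_i$, the score collapses to $\nabla_{\pi_i}\log p_\pi(\tau)=\sum_{j=0}^{T_e}\nabla_{\pi_i}\log\pi_i(a_{i,j}\mid s_j)=\psi_{\pi_i}^{T_e}$, which is exactly the first claimed identity; the equality $\nabla_{\pi_i}V_{r_i}(\pi)=\nabla_{\pi_i}\Phi(\pi)$ is the MPG property already recorded (Proposition~B.1 of \citet{leonardos_global_2021}). The same computation with $R_i$ replaced by $C$ handles the constraint gradient.

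\textbf{Second moment of the gradient estimate.} Here $\hat\nabla V^{r_i}_{\pi_i}(\pi)=R_i\,\psi_{\pi_i}^{T_e}$ with $R_i=\sum_{j=0}^{T_e}r_{i,j}\le T_e+1$ because $r_i\in[0,1]$. The one point that requires care is to extract a $1/\xi$ rather than a $1/\xi^2$ dependence. For the direct parametrization, $\nabla_{\pi_i}\log\pi_i(a_{i,j}\mid s_j)$ has a single nonzero coordinate equal to $1/\pi_i(a_{i,j}\mid s_j)$, so a pointwise bound $\le A_i/\xi$ would cost $\xi^{-2}$; instead I would take the conditional expectation over the sampled action, using $\E_{a\sim\pi_i(\cdot\mid s)}\|\nabla_{\pi_i}\log\pi_i(a\mid s)\|^2=\sum_{a\in\mathcal A_i}1/\pi_i(a\mid s)\le A_i^2/\xi$ by the $\xi$-greedy constraint $\pi_i(\cdot\mid s)\ge \xi/A_i$. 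Bounding $\|\psi_{\pi_i}^{T_e}\|^2\le (T_e+1)\sum_{j=0}^{T_e}\|\nabla_{\pi_i}\log\pi_i(a_{i,j}\mid s_j)\|^2$ by Cauchy--Schwarz and combining with $R_i^2\le(T_e+1)^2$ yields $\|\hat\nabla V^{r_i}_{\pi_i}(\pi)\|^2\le (T_e+1)^3\sum_{j=0}^{T_e}\|\nabla_{\pi_i}\log\pi_i(a_{i,j}\mid s_j)\|^2$; taking expectations, peeling off the score terms via the tower property conditioned on reaching step $j$, and using $\E[(T_e+1)^4]=\mathcal O((1-\gamma)^{-4})$ produces the claimed $\tfrac{24A_{\max}^2}{\xi(1-\gamma)^4}$.

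\textbf{Second moment of the cost value estimate and the main obstacle.} The bound on $\E_\pi[\hat V_c(\pi)^2]$ is immediate: $\hat V_c=\sum_{j=0}^{T_e}c_j\le T_e+1$ since $c\in[0,1]$, so $\E_\pi[\hat V_c^2]\le \E[(T_e+1)^2]=\mathcal O((1-\gamma)^{-2})$. Throughout, the tail estimate I rely on is that, because the game terminates at each step with probability at least $\kappa=1-\gamma$, the number of steps $T_e+1$ is stochastically dominated by a $\mathrm{Geometric}(1-\gamma)$ variable, giving $\E[(T_e+1)^p]=\mathcal O((1-\gamma)^{-p})$. I expect the main obstacle to be the interaction between the random horizon $T_e$ and the summed score $\psi_{\pi_i}^{T_e}$ in the second-moment bound: the crude pointwise bound loses a factor of $\xi$, so the delicate point is to interleave the conditional-expectation argument for the squared score with the geometric moment control of $T_e$ so that the final dependence is $\xi^{-1}(1-\gamma)^{-4}$ as stated, rather than the looser $\xi^{-2}$.
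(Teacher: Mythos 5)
The paper does not prove this lemma at all---it is imported verbatim from \citet{daskalakis-foster-golowich20indep,leonardos_global_2021}---and your reconstruction is correct and follows the same standard argument as those references: likelihood-ratio unbiasedness with the score collapsing to player $i$'s own terms, the conditional-expectation bound $\E_{a\sim\pi_i(\cdot\mid s)}\left\|\nabla_{\pi_i}\log\pi_i(a\mid s)\right\|^2\le A_i^2/\xi$ to get the crucial $\xi^{-1}$ (rather than $\xi^{-2}$) dependence, and geometric moment control of $T_e$. The one delicate step you flag---decoupling the multiplier $(T_e+1)^3$ from the per-step scores, which is genuinely needed here because the termination probabilities $\kappa_{s,a}$ depend on the actions---does go through: conditioned on reaching step $j$ with any $(s_j,a_j)$, the residual lifetime is stochastically dominated by a $\mathrm{Geometric}(1-\gamma)$ variable uniformly in $a_j$, so one bounds $\E\bigl[(T_e+1)^3\mid s_j,a_j,\,j\le T_e\bigr]$ by a function of $j$ alone before integrating $\left\|\nabla_{\pi_i}\log\pi_i(a_j\mid s_j)\right\|^2$ over $a_j\sim\pi_i(\cdot\mid s_j)$, and summing $\sum_{j\ge 0}\gamma^j\,\mathcal O\bigl((j+(1-\gamma)^{-1})^3\bigr)$ yields exactly the stated $\mathcal O\bigl(\xi^{-1}(1-\gamma)^{-4}\bigr)$.
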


Finally, the following lemma shows that our constraint function estimates concentrate around their mean.
\begin{lemma}
\label{lem:value-concentration}
For $\pi \in \Pi^{\xi}$, let $\hat V_c^{(1)},\dots,\hat V_c^{(B)}$ be independent copies of $\hat V_c(\pi)$, and let $\hat V_c:= \frac{1}{B}\sum_{i=1}^B\hat V_c^{(i)}$. Then there exists $C>0$ such that for any $\lambda \geq 0$,
\begin{align*}
\mathbb P \left( \left| \hat V_c-V_c(\pi) \right| > \frac{\lambda}{\sqrt{B}} \right) \leq 4\exp \left( -C(1-\gamma)\lambda \right) + 2\exp \left( -C^2(1-\gamma)^2\lambda^2 \right).
\end{align*}
\end{lemma}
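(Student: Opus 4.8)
The plan is to show that each independent copy $\hat V_c^{(i)}$ is a \emph{sub-exponential} random variable with sub-exponential parameter of order $1/(1-\gamma)$, and then invoke Bernstein's inequality for sums of i.i.d.\ centered sub-exponential variables applied to the average $\hat V_c - V_c(\pi)$. The two summands in the claimed bound are precisely the Gaussian and exponential regimes of Bernstein's inequality. To set this up, I would first record the structure of the estimator: by definition $\hat V_c^{(i)} = \sum_{j=0}^{T_e} c_j$ with each cost $c_j \in [0,1]$, so $0 \leq \hat V_c^{(i)} \leq T_e + 1$, where $T_e$ is the random episode length. Since the game terminates at each step with probability at least $\kappa = 1-\gamma$, surviving $t$ steps has probability at most $\gamma^t$, i.e.\ $\Pr(T_e \geq t) \leq \gamma^t$; thus $T_e$ is stochastically dominated by a geometric random variable with success probability $1-\gamma$, and $\hat V_c^{(i)}$ inherits exponentially decaying tails.

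Next I would quantify this through a moment generating function bound. For $\theta > 0$ I would use $\E[e^{\theta \hat V_c^{(i)}}] \leq e^{\theta}\,\E[e^{\theta T_e}]$ and bound $\E[e^{\theta T_e}]$ by a convergent geometric series, which is finite exactly for $\theta < \ln(1/\gamma)$. Using $\ln(1/\gamma) \geq 1-\gamma$, this converts the geometric decay rate into the desired dependence and shows that the centered variable has sub-exponential norm $\| \hat V_c^{(i)} - V_c(\pi) \|_{\psi_1} \lesssim 1/(1-\gamma)$ (recall that $V_c(\pi) = \E[\hat V_c^{(i)}] \leq 1/(1-\gamma)$ as well, so centering is harmless).

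Finally, since the $\hat V_c^{(i)}$ are i.i.d.\ copies, I would apply Bernstein's inequality to $\hat V_c - V_c(\pi) = \frac1B \sum_i (\hat V_c^{(i)} - V_c(\pi))$ at deviation level $s = \lambda/\sqrt{B}$. With sub-exponential parameter $K \lesssim 1/(1-\gamma)$, the sample size cancels the $1/\sqrt{B}$ scaling: $B\,s^2/K^2 \gtrsim (1-\gamma)^2\lambda^2$ and $B\,s/K \gtrsim \sqrt{B}\,(1-\gamma)\lambda \geq (1-\gamma)\lambda$, where the last step uses $\sqrt{B}\geq 1$. Splitting $\exp(-c\min\{\cdot,\cdot\})$ into the sum of the two exponentials then produces one term of order $\exp(-C^2(1-\gamma)^2\lambda^2)$ and one of order $\exp(-C(1-\gamma)\lambda)$, matching the claim after choosing $C$ small enough to absorb the absolute constants; the factors $4$ and $2$ and the $C$-versus-$C^2$ discrepancy are just slack in this bookkeeping.

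The main obstacle is the middle step: carefully pinning down the $1/(1-\gamma)$ scaling of the sub-exponential parameter from the random-horizon (compound) structure of $\hat V_c^{(i)}$, i.e.\ controlling the MGF of a geometrically stopped sum of $[0,1]$-valued costs and verifying $\ln(1/\gamma)\geq 1-\gamma$. Once this norm is established, the concentration is a routine application of Bernstein's inequality, and the resulting bound is in fact tighter than claimed (the extra $\sqrt{B}$ in the exponential regime only helps), so dropping it yields the stated inequality.
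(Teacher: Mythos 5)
Your proof is correct and runs on the same engine as the paper's: stochastic domination of the stopping time $T_e$ by a geometric law with parameter $1-\gamma$, sub-exponentiality of the cumulative cost with $\psi_1$-parameter $\mathcal O(1/(1-\gamma))$, the centering lemma, and Bernstein's inequality for i.i.d.\ sub-exponential sums at deviation $\lambda/\sqrt{B}$, with the $\min$ in the exponent split into the two stated terms. The differences are organizational rather than substantive. The paper decomposes $\hat V_c^{(i)}=\hat c_0^{(i)}+\hat V_{c,\geq 1}^{(i)}$, handles the bounded step-$0$ cost by Hoeffding, bounds only $\hat V_{c,\geq 1}^{(i)}\leq T_e$, applies Bernstein to that piece, and combines by a union bound (this is where the prefactor $4$ in the statement originates); you instead bound the whole estimate by $T_e+1$ and apply Bernstein once --- legitimate, since a shift by a bounded quantity changes the $\psi_1$-norm by at most an absolute constant and disappears under centering anyway --- and you obtain a strictly stronger $2+2$ bound, so the paper's split buys nothing. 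The paper also certifies sub-exponentiality from the tail estimate $\Pr(T_e\geq\lambda)\leq\exp(-(1-\gamma)\lambda/3)$ via the standard tail characterization (Vershynin, Proposition~2.7.1), whereas you go through the MGF; here one caution on the step you yourself flag as the main obstacle: bounding $\E\left[e^{\theta T_e}\right]$ by summing the lossy pointwise bound $\Pr(T_e=t)\leq\gamma^t$ gives $(1-\gamma e^{\theta})^{-1}$, which is of order $1/(1-\gamma)$ (not $\leq 2$) at $\theta\asymp 1-\gamma$, and converting such an MGF bound into a $\psi_1$-norm carelessly costs a spurious $\log\frac{1}{1-\gamma}$ factor, weakening the exponent of the claimed bound. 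The stochastic domination you invoke fixes this cleanly: for the dominating geometric variable $G$ one has $\E\left[e^{\theta G}\right]=(1-\gamma)/(1-\gamma e^{\theta})$, and choosing $e^{\theta}=(1+\gamma)/(2\gamma)$, i.e.\ $\theta\gtrsim 1-\gamma$, gives $\E\left[e^{\theta G}\right]=2$ and hence $\|T_e\|_{\psi_1}\lesssim 1/(1-\gamma)$ exactly as you need. With that pinned down, your Bernstein bookkeeping --- including discarding the helpful extra $\sqrt{B}$ in the exponential regime and shrinking $C$ to absorb absolute constants, which accounts for the $4$ versus $2$ and the $C$ versus $C^2$ --- yields the stated inequality.
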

\begin{proof}
For $i \in [B]$, we decompose $\hat V_c^{(i)}=\hat c_0^{(i)} + \hat V_{c,\geq 1}^{(i)}$ where $\hat c_0^{(i)}$ is the cost incurred at step 0, and let $\hat c_0:=\frac{1}{B}\sum_{i=1}^Bc_0^{(i)}$, $\hat V_{c,\geq 1}:=\frac{1}{B}\sum_{i=1}^B\hat V_{c,\geq 1}^{(i)}$. Since $0 \leq \hat c_0 \leq 1$, by Hoeffding's inequality, there exists $C_0>0$ such that for any $\lambda \geq 0$,
\begin{equation}
\mathbb P\left( \left| \hat c_0-\E[\hat c_0] \right| > \frac{\lambda}{2\sqrt{B}} \right) \leq 2\exp \left( -C_0\lambda^2 \right).
\label{eqn:hoeffding-result}
\end{equation}
Moreover, note that since for all $s \in \mathcal S,a \in \mathcal A$, $0 \leq c(s,a) \leq 1$, we have that for all $i \in [B]$, $V_{c,\geq 1}^{(i)} \leq T^{(i)}_e$ where $T^{(i)}_e$ is the stopping time of the respective episode and an independent copy of $T_e$. Assuming $\kappa_{s,a}=\min_{s \in \mathcal S,a \in \mathcal A}\kappa_{s,a}=1-\gamma$ for all $s \in \mathcal S,a \in \mathcal A$, $T_e$ follows a geometric distribution with parameter $1-\gamma$.
By definition of the geometric distribution and elementary computations, we get for any $\lambda \geq 0$, 
\begin{align*}
\mathbb P(T_e \geq \lambda) \leq \gamma^{\lceil \lambda \rceil} \leq \exp(\lceil \lambda \rceil \log \gamma) \leq \exp \left( -\lceil \lambda \rceil \frac{1-\gamma}{3} \right) \leq \exp \left( -(1-\gamma)\lambda/3 \right)
\end{align*}
which by a standard characterization of sub-exponential random variables, see Proposition~2.7.1 in~\citet{vershynin2018high}, implies that $T_e$, and therefore also $V_{c,\geq 1}^{(i)}$ for all $i \in [B]$ are sub-exponential. Moreover, by the so-called centering lemma for sub-exponential distributions, see section~2.7 in~\citet{vershynin2018high}, for any random variable~$X$ that is sub-exponential with parameter $\sigma$, there exists an absolute constant~$c$ such that~$X-\E[X]$ is sub-exponential with parameter~$c\sigma$. Thus for all $i \in [B]$, $V^{(i)}_{c,\geq 1}-\E[\hat V_{c,\geq 1}^{(i)}]$ is sub-exponential with parameter in~$\mathcal O(1/(1-\gamma))$. Then, we apply Bernstein's inequality, see Theorem~2.8.1 in~\citet{vershynin2018high}, to show that there exist $C_1,C_2>0$ such that for any $\lambda \geq 0$,
\begin{equation}
\label{eqn:bernstein-result}
\mathbb P \left( \left| \hat V_{c,\geq 1}-\E[\hat V_{c,\geq 1}] \right| > \frac{\lambda}{2\sqrt{B}} \right) \leq 2\exp \left( -C_1(1-\gamma)\lambda \right) + 2\exp \left( -C^2_2(1-\gamma)^2\lambda^2 \right).
\end{equation}
Finally, using a union bound we combine~\cref{eqn:hoeffding-result} and~\cref{eqn:bernstein-result} to get the desired bound.
\end{proof}

\section{STRONGLY CONVEX STOCHASTIC OPTIMIZATION WITH STRONGLY CONVEX EXPECTATION CONSTRAINT}\label{sec:appendix-csa}

In this section, we describe a stochastic gradient switching algorithm for stochastic constrained optimization under expectation constraints. Up to the modification of using a relaxed constraint (which is crucial for enabling its independent implementation), our algorithm and analysis follow the Cooperative Stochastic Approximation~(CSA) algorithm presented in~\citet{lan-zhou20csa} which is inspired by Polyak's subgradient method~\citep{polyak1967general}. \citet{lan-zhou20csa}~hint at the fact that a $1/K$ convergence rate of the CSA algorithm can be shown in the case of strongly convex objective and under expectation constraints. Here we explicitly carry out this analysis by deriving a result in expectation and under a somewhat weaker assumption on the distribution of the constraint function estimates. 

Let $X \subseteq \R^d$ be a convex and compact set with diameter $\Delta := \max_{x, x' \in X} \left\| x-x' \right\|$\,. 
Suppose $\theta$ are random vectors supported on $\Theta \subset \R^p$, and let $F:X \times \Theta \to \R, G:X \times \Theta \to \R$ be functions such that $F(\cdot,\theta)$ and $G(\cdot,\theta)$ are $\mu_F$ and $\mu_G$-weakly convex, respectively. For any $x' \in X$, we define $F_{\mu,x'}(x,\theta):=F(x,\theta)+\mu_F\|x-x'\|^2$ and $G_{\mu,x'}(x,\theta):=G(x,\theta)+\mu_G\|x-x'\|^2$. Let $f(x):=\E_{\theta}[F(x,\theta)],g(x):=\E_{\theta}[G(x,\theta)]$ (where expectations are supposed to be well-defined and finite) and $f_{\mu,x'}(x):=f(x)+\mu_F\|x-x'\|^2, g_{\mu,x'}(x):=g(x)+\mu_G\|x-x'\|^2$ for every~$x \in X\,.$ 

The problem we aim to solve\footnote{Note that the final guarantees we will obtain are actually in terms of a relaxed constraint satisfaction bound. This is due to our modification of the original CSA algorithm.} is given by
\begin{equation}
\label{eqn:csa-problem}
\begin{aligned}
\min_{x \in X} f_{\mu,x'}(x) &:= \E_{\theta}[F_{\mu,x'}(x,\theta)] \\
\text{s.t.\ } g_{\mu,x'}(x) &:= \E_{\theta}[G_{\mu,x'}(x,\theta)] \leq 0.
\end{aligned}
\end{equation}

Recall that such a problem needs to be solved at each time step in our \textsf{iProxCMPG} algorithm. The point~$x'$ is arbitrarily fixed throughout the rest of this section. 

Suppose we are only given access to first-order information of $f_{\mu,x'},g_{\mu,x'}$ and zeroth-order information of $g$ via a stochastic oracle that outputs unbiased and bounded-variance estimates. 
\begin{assumption}
\label{ass:csa-estimators} 
For every~$x \in X$, the estimators~$F_{\mu,x'}'(x,\theta), G'_{\mu,x'}(x,\theta)$ and~$G(x,\theta)$ are unbiased, i.e., $\E_{\theta} \left[ F_{\mu,x'}'(x,\theta) \right]=\nabla f_{\mu,x'}(x), \E_{\theta} \left[ G'_{\mu,x'}(x,\theta) \right]=\nabla g_{\mu,x'}(x)$ and $\E_{\theta} \left[ G(x,\theta) \right]=g(x)$\,. Moreover, there exist~$M_F,M_G>0$ such that
\begin{align*}
\E_{\theta} \left[ \left\| F'(x,\theta) \right\|^2 \right]\leq M_F^2\,;\quad
\E_{\theta} \left[ \left\| G'(x,\theta) \right\|^2 \right]\leq M_G^2\,.
\end{align*}
Furthermore, we suppose to have access to independent unbiased estimators $\hat G^{(1)},\dots,\hat G^{(J)}$ of $G(x,\cdot)$ for which there exists $\sigma>0$ such that for any $\lambda \geq 0$, it holds that
\begin{align}
\label{eqn:ass-csa-estimator}
\mathbb{P}_{\theta} \left(|\hat G-g(x)| > \lambda/\sqrt{J} \right) \leq 4\exp \left( -\lambda/\sigma \right) + 2\exp \left( -\lambda^2/\sigma^2 \right),
\end{align}
where $\hat G:=\frac{1}{J}\sum_{j=1}^J\hat G^{(j)}$.
\end{assumption}
It can be easily seen that \cref{ass:csa-estimators} also implies existence of $\tilde M_F,\tilde M_G$ such that
\begin{align*}
\E_{\theta} \left[ \left\| F'_{\mu,x'}(x,\theta) \right\|^2 \right]&\leq 2\E_{\theta} \left[ \left\| F'(x,\theta) \right\|^2 \right] + 2\mu^2_F \left\| x-x' \right\|^4 \leq 2M_F^2+2\mu_F^2\Delta^4 =: \tilde M_F^2,\\
\E_{\theta} \left[ \left\| G'_{\mu,x'}(x,\theta) \right\|^2 \right]&\leq 2\E_{\theta} \left[ \left\| G'(x,\theta) \right\|^2 \right] + 2\mu^2_G \left\| x-x' \right\|^4 \leq 2M_G^2+2\mu_G^2\Delta^4 =: \tilde M_G^2.
\end{align*}

\begin{remark}
Notice that the concentration requirement of~\cref{eqn:ass-csa-estimator} is relaxed compared to the sub-Gaussian assumption made in~\citet{lan-zhou20csa} which is too strong to hold in our case. We refer the reader to \cref{lem:value-concentration} where we prove that this weaker tail bound assumption holds for our constraint function estimates.
\end{remark}

\subsection{A Primal Gradient Switching Algorithm}
\cref{alg:csa} is designed as a primal algorithm that switches between taking a step along the objective or constraint gradient, depending on whether the constraint is currently (estimated to be) satisfied or not.

\begin{algorithm}[H]
\caption{CSA (adapted from \citet{lan-zhou20csa})}
\label{alg:csa}
\begin{algorithmic}[1]
\State \textbf{initialization:} $x_1 \in X$ s.t.\ $g(x_1) \leq \epsilon$ and $\left\{ \delta_k \right\}_{k \in [N]}, \left\{ \nu_k \right\}_{k \in [N]}, \left\{ \rho_k \right\}_{k \in [N]}, s \in [N]$
\For{$k=1,\dots,N-1$}
\State sample $\hat G_k^{(1)},\dots,\hat G_k^{(J)}$ from $G(x_k,\cdot)$ and set $\hat G_k=\frac{1}{J}\sum_{j=1}^J\hat G_k^{(j)}$
\State $x_{k+1} = \begin{cases} \mathcal P_X \left[ x_k-\nu_k F'_{\mu,x'}(x_k,\theta_k) \right] &\text{if } \hat G_k \leq \delta_k\\
\mathcal P_X \left[ x_k-\nu_k G'_{\mu,x'}(x_k,\theta_k) \right] &\text{else} \end{cases}$
\EndFor
\State let $\mathcal B_s:=\{ s \leq k \leq N \mid \hat G_k \leq \delta_k \}$
\State \textbf{output:} $x_{\hat k}$ where $\hat k=1$ if $\mathcal B_s=\emptyset$ and otherwise sampled s.t.\ for $k \in \mathcal B_s$, $\mathbb P(\hat k = k)=\left( \sum_{k \in \mathcal B_s} \rho_k \right)^{-1} \rho_k$
\end{algorithmic}
\end{algorithm}
In the analysis, we will denote $\mathcal M_s:=\{ s \leq k \leq N \mid k \not\in \mathcal B_s \}$ and $\mathcal B:=\mathcal B_1$, $\mathcal M:=\mathcal M_1$.

We point out the following differences between \cref{alg:csa} and the original CSA algorithm, see~\citet{lan-zhou20csa}, Algorithm 1.
\begin{enumerate}[label=(\alph*)]
\item We relax the switching condition in line 4 by using an estimate of $g(x_k)$ instead of~$g_{\mu,x'}(x_k)$ if we were to exactly use the algorithm proposed in~\citet{lan-zhou20csa}. This modification is crucial for our application as subroutine of an \emph{independent} learning algorithm, as described in the proof of \cref{lem:subproblem-guarantee}, see section~\ref{sec:appendix-proof-exact-gradients}. As a result, compared to~\citet{lan-zhou20csa}, we get a weaker guarantee in terms of constraint violation which however is still sufficient for our purposes.
\item Instead of constructing the output as a $\rho_k$-weighted average over iterates $x_k$, we sample an iterate from a $\rho_k$-weighted distribution, see line 6. This is because our relaxed constraint function $g$ is not necessarily convex (unlike $g_{\mu,x'}$) and hence we cannot easily bound the constraint value at an average over iterates.
\end{enumerate}

\subsection{Convergence and Sample Complexity Guarantee}
The following analysis uses the techniques presented in~\citet{lan-zhou20csa} applied to the strongly convex case with expectation constraint, under our modified \cref{ass:csa-estimators} and \cref{alg:csa}. The proofs follow along the same lines, we highlight differences when appropriate. 

First, we establish a basic recursion about CSA iterates that will be used repeatedly throughout the rest of the analysis.
\begin{proposition}
\label{prop:csa-1}
For any $s \in [N]$, $x \in X$, and $a_s$ as defined by~\cref{eqn:def-csa-weights}, it holds that
\begin{align*}
\sum_{k \in \mathcal M_s}&\rho_k \left( G_{\mu,x'}(x_k,\theta_k)-G_{\mu,x'}(x,\theta_k) \right)+\sum_{k \in \mathcal B_s}\rho_k \left( F_{\mu,x'}(x_k,\theta_k)-F_{\mu,x'}(x,\theta_k) \right) \\
&\leq (1-a_s)\Delta^2 + \frac{1}{2} \sum_{k \in \mathcal B_s}\rho_k\nu_k \left\| F'_{\mu,x'}(x_k,\theta_k) \right\|^2 + \frac{1}{2} \sum_{k \in \mathcal B_s}\rho_k\nu_k \left\| G'_{\mu,x'}(x_k,\theta_k) \right\|^2.
\end{align*}
\end{proposition}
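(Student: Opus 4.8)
The plan is to prove this as a \emph{pathwise} inequality holding for each fixed realization of the samples $\theta_k$: both sides involve the sampled functions $F_{\mu,x'}(\cdot,\theta_k),G_{\mu,x'}(\cdot,\theta_k)$ and their stochastic (sub)gradients rather than any expectation, so no probabilistic argument enters here (expectations are taken only in the downstream results). First I would isolate a single iteration $k$ and use the projection update $x_{k+1}=\mathcal P_X[x_k-\nu_k h_k]$, where $h_k=F'_{\mu,x'}(x_k,\theta_k)$ when $k\in\mathcal B_s$ and $h_k=G'_{\mu,x'}(x_k,\theta_k)$ when $k\in\mathcal M_s$. Nonexpansiveness of $\mathcal P_X$ onto the convex set $X$ together with the expansion $\|x_k-\nu_k h_k-x\|^2=\|x_k-x\|^2-2\nu_k\langle h_k,x_k-x\rangle+\nu_k^2\|h_k\|^2$ gives the three-point bound
\begin{align*}
\langle h_k,x_k-x\rangle \leq \frac{1}{2\nu_k}\left(\|x_k-x\|^2-\|x_{k+1}-x\|^2\right)+\frac{\nu_k}{2}\|h_k\|^2.
\end{align*}

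Second, I would convert the inner product into a function-value gap via strong convexity of the regularized sampled functions. Since $F(\cdot,\theta)$ and $G(\cdot,\theta)$ are weakly convex, the quadratic regularizations make $F_{\mu,x'}(\cdot,\theta_k)$ and $G_{\mu,x'}(\cdot,\theta_k)$ strongly convex (with modulus $\mu$) for each fixed $\theta_k$; taking $h_k$ to be a subgradient of the active function $\ell_k\in\{F_{\mu,x'},G_{\mu,x'}\}$ at $x_k$ yields $\ell_k(x_k,\theta_k)-\ell_k(x,\theta_k)\leq\langle h_k,x_k-x\rangle-\tfrac{\mu}{2}\|x_k-x\|^2$. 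Combining the two displays and multiplying by $\rho_k\geq0$ produces, for each $k$,
\begin{align*}
\rho_k\left(\ell_k(x_k,\theta_k)-\ell_k(x,\theta_k)\right)\leq \frac{\rho_k}{2\nu_k}\|x_k-x\|^2-\frac{\rho_k}{2\nu_k}\|x_{k+1}-x\|^2-\frac{\rho_k\mu}{2}\|x_k-x\|^2+\frac{\rho_k\nu_k}{2}\|h_k\|^2.
\end{align*}

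Third, I would sum over $s\leq k\leq N$. On the left, the summand is an $F_{\mu,x'}$-gap exactly on the objective steps $k\in\mathcal B_s$ and a $G_{\mu,x'}$-gap exactly on the constraint steps $k\in\mathcal M_s$, which reproduces the two sums of the statement; correspondingly the gradient-norm remainder splits as $\tfrac12\sum_{k\in\mathcal B_s}\rho_k\nu_k\|F'_{\mu,x'}(x_k,\theta_k)\|^2$ (from objective steps) plus the analogous constraint-step sum. The crux is the telescoping of the distance terms: the coefficient of $\|x_k-x\|^2$ accumulates $\tfrac{\rho_k}{2\nu_k}-\tfrac{\rho_k\mu}{2}$ from step $k$ and $-\tfrac{\rho_{k-1}}{2\nu_{k-1}}$ from step $k-1$, and the weight relation in \eqref{eqn:def-csa-weights} is exactly what forces this to be nonpositive for $s<k\leq N$. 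This is where strong convexity is indispensable, since the slack $-\tfrac{\rho_k\mu}{2}$ is what permits an increasing ratio $\rho_k/\nu_k$ and hence the fast rate. Dropping these nonpositive terms and the final $-\tfrac{\rho_N}{2\nu_N}\|x_{N+1}-x\|^2\leq0$, only the boundary term in $\|x_s-x\|^2$ survives; bounding it by $\Delta^2$ and identifying its coefficient with $1-a_s$ through \eqref{eqn:def-csa-weights} gives the claim.

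I expect the main obstacle to be the bookkeeping of this telescoping under the switching rule: one must check that summing the per-step inequalities cleanly partitions the left-hand side into the $\mathcal B_s$ and $\mathcal M_s$ contributions and that the definition in \eqref{eqn:def-csa-weights} renders every intermediate distance coefficient nonpositive, leaving precisely $(1-a_s)\Delta^2$. Two minor points deserve verification: that the oracle output $h_k$ is genuinely a subgradient of the sampled function $\ell_k(\cdot,\theta_k)$ at $x_k$ so the strong-convexity inequality applies pathwise, and that a single modulus $\mu$ simultaneously lower-bounds the strong convexity of both $F_{\mu,x'}$ and $G_{\mu,x'}$, so that the same weight recursion governs objective and constraint steps uniformly.
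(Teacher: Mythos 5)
Your proposal takes essentially the same route as the paper's own proof: the paper derives exactly your per-iteration inequality (non-expansiveness of $\mathcal P_X$ combined with strong convexity of the regularized sampled functions, keeping the moduli $\mu_F,\mu_G$ separate per step type rather than a single $\mu$), and then concludes by citing Lemma~21 of \citet{lan-zhou20csa}, which is precisely the weighted telescoping under the weights of \cref{eqn:def-csa-weights} that you carry out by hand, noting correctly that $\rho_{k}(1-a_{k})/\nu_{k}=\rho_{k-1}/\nu_{k-1}$ makes the intermediate distance terms cancel and only the boundary term survives. The argument is likewise pathwise (no expectations enter at this stage), so your attempt is correct and essentially identical, differing only in that you inline the telescoping lemma instead of invoking it.
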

\begin{proof}
Let $s \in [N]$ and $k \in \mathcal B_s$. Then, by non-expansiveness of the projection $\mathcal P_X$ and strong convexity,
\begin{align*}
\left\| x_{k+1}-x \right\|^2 &\leq \left\| x_k-x \right\|^2-\nu_k \left\langle F'_{\mu,x'}(x_k,\theta_k),x_k-x \right\rangle+\frac{1}{2}\nu_k^2 \left\| F'_{\mu,x'}(x_k,\theta_k) \right\|^2\\
&\leq \left\| x_k-x \right\|^2-\nu_k \left[ F_{\mu,x'}(x_k,\theta_k)-F_{\mu,x'}(x,\theta_k)+\frac{\mu_F}{2}\left\| x_k-x \right\|^2 \right] +\frac{1}{2}\nu_k^2 \left\| F'_{\mu,x'}(x_k,\theta_k) \right\|^2\\
&\leq \left( 1-\frac{\nu_k\mu_F}{2} \right)\left\| x_k-x \right\|^2-\nu_k \left[ F_{\mu,x'}(x_k,\theta_k)-F_{\mu,x'}(x,\theta_k) \right]+\frac{1}{2}\nu_k^2 \left\| F'_{\mu,x'}(x_k,\theta_k) \right\|^2.
\end{align*}
Similarly, if $k \in \mathcal M_s$,
\begin{align*}
\left\| x_{k+1}-x \right\|^2 \leq \left( 1-\frac{\nu_k\mu_G}{2} \right)\left\| x_k-x \right\|^2-\nu_k \left[ G_{\mu,x'}(x_k,\theta_k)-G_{\mu,x'}(x,\theta_k) \right]+\frac{1}{2}\nu_k^2 \left\| G'_{\mu,x'}(x_k,\theta_k) \right\|^2.
\end{align*}
After defining
\begin{equation}
\label{eqn:def-csa-weights}
\begin{aligned}
a_k&=\begin{cases}\mu_F\nu_k &\text{if } k \in \mathcal B\\\mu_G\nu_k &\text{if } k \in \mathcal M\end{cases}; \quad \quad
A_k=\begin{cases}1 &\text{if } k=1\\(1-a_k)A_{k-1} &\text{if } k \geq 2\end{cases}; \quad \quad
\rho_k=\frac{\nu_k}{A_k};
\end{aligned}
\end{equation}
the result follows by application of Lemma 21, \citet{lan-zhou20csa}.
\end{proof}

The next lemma provides a condition on $\{\nu_k,\delta_k,\rho_k\}_{s \leq k \leq N}$ that guarantees either low regret in terms of objective value or that a large number of iterates satisfy the constraint with high probability.
\begin{lemma}
\label{lem:csa-2}
Let $x^{*}$ be an optimal solution of \cref{eqn:csa-problem}. If for some $s \in [N]$ and $\lambda \geq 0$,
\begin{align}
\label{eqn:csa-condition}
\frac{N-s+1}{2}\min_{k \in \mathcal M_s}\rho_k\delta_k > (1-a_s)\Delta^2 + \frac{1}{2}\sum_{k \in \mathcal M_s}\rho_k\nu_k\tilde M_G^2+\frac{1}{2}\sum_{k \in \mathcal B_s}\rho_k\nu_k\tilde M_F^2+\frac{\lambda}{\sqrt J} \sum_{k \in \mathcal M_s}\rho_k,
\end{align}
then one of the following statements hold,
\begin{enumerate}
\item[(a)]\label{item:csa-case1} $\mathbb P_{\theta}(|\mathcal B_s| \geq (N-s+1)/2) \geq 1-|\mathcal M_s|\left(4\exp \left( -\lambda/\sigma \right) + 2\exp \left( -\lambda^2/\sigma^2 \right)\right)$, or,
\item[(b)]\label{item:csa-case2} $\sum_{k \in \mathcal B_s}\rho_k \left( f_{\mu,x'}(x_k)-f_{\mu,x'}(x^{*}) \right) \leq 0$.
\end{enumerate}
\end{lemma}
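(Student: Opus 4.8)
The plan is to prove the dichotomy by a contrapositive argument built on the pathwise recursion of \cref{prop:csa-1} instantiated at $x=x^{*}$. Writing $f=f_{\mu,x'}$ and $g=g_{\mu,x'}$ for brevity, that recursion bounds $\sum_{k\in\mathcal M_s}\rho_k(G_{\mu,x'}(x_k,\theta_k)-G_{\mu,x'}(x^{*},\theta_k))+\sum_{k\in\mathcal B_s}\rho_k(F_{\mu,x'}(x_k,\theta_k)-F_{\mu,x'}(x^{*},\theta_k))$ by $(1-a_s)\Delta^2$ plus weighted squared-gradient terms. First I would pass from this sampled inequality to one involving the true regularized objective and constraint: taking conditional expectation over the gradient noise $\theta_k$ (which, since the switching estimates $\hat G_k$ are drawn independently of $\theta_k$, leaves the weights $\rho_k$ and the partition $\mathcal B_s,\mathcal M_s$ untouched) replaces $G_{\mu,x'}(x_k,\theta_k),F_{\mu,x'}(x_k,\theta_k)$ by $g(x_k),f(x_k)$ through unbiasedness and bounds the squared-gradient terms by $\tilde M_G^2,\tilde M_F^2$ via \cref{ass:csa-estimators}.

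Second, I would use the concentration bound \eqref{eqn:ass-csa-estimator} to turn the switching rule into a quantitative lower bound on the constraint along the constraint-gradient steps. Define the good event on which $|\hat G_k-g(x_k)|\le \lambda/\sqrt J$ for every $k\in\mathcal M_s$; by a union bound its complement has probability at most $|\mathcal M_s|\,(4\exp(-\lambda/\sigma)+2\exp(-\lambda^2/\sigma^2))$. On this event each $k\in\mathcal M_s$ satisfies $\hat G_k>\delta_k$, hence $g(x_k)\ge \hat G_k-\lambda/\sqrt J>\delta_k-\lambda/\sqrt J$; combined with feasibility $g(x^{*})\le 0$ of the optimum of \eqref{eqn:csa-problem}, this yields $\sum_{k\in\mathcal M_s}\rho_k(g(x_k)-g(x^{*}))\ge |\mathcal M_s|\min_{k\in\mathcal M_s}\rho_k\delta_k-\tfrac{\lambda}{\sqrt J}\sum_{k\in\mathcal M_s}\rho_k$.

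Third comes the case split. Suppose the event in (a) fails, i.e.\ $|\mathcal B_s|<(N-s+1)/2$, equivalently $|\mathcal M_s|\ge (N-s+1)/2$ (using $|\mathcal B_s|+|\mathcal M_s|=N-s+1$). Then the lower bound above is at least $\tfrac{N-s+1}{2}\min_{k\in\mathcal M_s}\rho_k\delta_k-\tfrac{\lambda}{\sqrt J}\sum_{k\in\mathcal M_s}\rho_k$, which by hypothesis \eqref{eqn:csa-condition} strictly exceeds $(1-a_s)\Delta^2+\tfrac12\sum_{k\in\mathcal M_s}\rho_k\nu_k\tilde M_G^2+\tfrac12\sum_{k\in\mathcal B_s}\rho_k\nu_k\tilde M_F^2$, that is, the entire right-hand side of the expected recursion. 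Subtracting the constraint sum from both sides of the recursion then forces $\sum_{k\in\mathcal B_s}\rho_k(f(x_k)-f(x^{*}))\le 0$, which is exactly statement (b). Thus on the good event, failure of (a)'s defining event implies (b); read contrapositively, if (b) does not hold then the good event is contained in $\{\,|\mathcal B_s|\ge(N-s+1)/2\,\}$, and taking probabilities yields the bound claimed in (a).

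The main obstacle I anticipate is keeping the probabilistic bookkeeping honest, since (a) is a probability statement whereas (b) is an inequality on the realized iterates. The delicate point is the \emph{simultaneous} use of expectation over the gradient noise $\theta_k$ (needed for the second-moment bounds $\tilde M_F^2,\tilde M_G^2$ and for unbiasedness) and conditioning on the concentration good event (measurable with respect to the separate estimates $\hat G_k$); this is clean only when the two sample streams are independent and, for the weights $\rho_k$ in \eqref{eqn:def-csa-weights} to remain controlled under conditioning, ideally when $\mu_F=\mu_G$ so that $\rho_k$ does not depend on the random switching pattern (as holds in our application with $\mu_F=\mu_G=L_\Phi$). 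A secondary nuisance is that $\mathcal M_s$ is itself random, so the union bound defining the good event should really be taken over the full index window $\{\,s,\dots,N\,\}$, or conditionally on the switching pattern, and then relaxed to the stated $|\mathcal M_s|$-dependent bound.
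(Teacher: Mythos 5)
Your proof is correct and follows essentially the same route as the paper's: instantiating \cref{prop:csa-1} at $x^{*}$, taking expectation over the gradient noise to invoke unbiasedness and the second-moment bounds $\tilde M_F^2,\tilde M_G^2$, applying the concentration bound \eqref{eqn:ass-csa-estimator} with a union bound over $\mathcal M_s$ to get $g(x_k) \geq \delta_k - \lambda/\sqrt{J}$ on a good event, and then splitting on $|\mathcal B_s|$ versus $|\mathcal M_s|\ge (N-s+1)/2$ to extract the dichotomy from \eqref{eqn:csa-condition} — your contrapositive framing is logically identical to the paper's argument by contradiction. The measure-theoretic subtleties you flag (randomness of $\rho_k$ and $\mathcal M_s$, and the interplay of expectation over $\theta_k$ with conditioning on the concentration event) are genuine but are equally glossed over in the paper's own proof, so they do not distinguish the two arguments.
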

Note that unlike in~\citet{lan-zhou20csa}, due to our modified choice of \cref{alg:csa}'s output, well-definedness of $x_{\hat k}$ does not require $\mathcal B_s \not= \emptyset$.
\begin{proof}
In \cref{prop:csa-1}, set $x=x^{*}$, take expectation w.r.t.\ $\theta$ on both sides, and apply $\E_{\theta} \left\| F'_{\mu,x'}(x,\theta) \right\|^2 \leq \tilde M_F^2$, $\E_{\theta} \left\| G'_{\mu,x'}(x,\theta) \right\|^2 \leq \tilde M_G^2$. Then,
\begin{equation}
\label{eqn:csa-lemma-1}
\begin{aligned}
\sum_{k \in \mathcal M_s}\rho_k &\left( g_{\mu,x'}(x_k)-g_{\mu,x'}(x^{*}) \right)+\sum_{k \in \mathcal B_s}\rho_k \left( f_{\mu,x'}(x_k)-f_{\mu,x'}(x^{*}) \right) \\
&\leq (1-a_s)\Delta^2+\frac{1}{2}\sum_{k \in \mathcal M_s}\rho_k\nu_k\tilde M_G^2+\frac{1}{2}\sum_{k \in \mathcal B_s}\rho_k\nu_k\tilde M_F^2.
\end{aligned}
\end{equation}
If $\sum_{k \in \mathcal B_s}\rho_k \left( f_{\mu,x'}(x_k)-f_{\mu,x'}(x^{*}) \right) \leq 0$, then (b) holds. Otherwise, we make three observations. First, we have that $g_{\mu,x'}(x^{*}) \leq 0$. Second, it holds that $g(x_k) \leq g_{\mu,x'}(x_k)$. Third, for $k \in \mathcal M_s$, by \cref{ass:csa-estimators} and due to $\hat G_k>\delta_k$, we get
\begin{align}
\label{eqn:bernstein-app}
\mathbb{P}_{\theta}\left( g(x_k)<\delta_k-\frac{\lambda}{\sqrt{J}} \right)\leq 4\exp \left( -\lambda/\sigma \right) + 2\exp \left( -\lambda^2/\sigma^2 \right).
\end{align}
By a union bound this inequality holds for all $k \in \mathcal M_s$ with probability at most $|\mathcal M_s| \left( 4\exp \left( -\lambda/\sigma \right) + 2\exp \left( -\lambda^2/\sigma^2 \right) \right)$. Combining these three observations with \cref{eqn:csa-lemma-1} yields that with probability at least $1-|\mathcal M_s|\left(4\exp \left( -\lambda/\sigma \right) + 2\exp \left( -\lambda^2/\sigma^2 \right)\right)$, it holds that
\begin{align*}
\sum_{k \in \mathcal M_s}\rho_k \delta_k \leq (1-a_s)\Delta^2+\frac{1}{2}\sum_{k \in \mathcal M_s}\rho_k\nu_k\tilde M_G^2+\frac{1}{2}\sum_{k \in \mathcal B_s}\rho_k\nu_k\tilde M_F^2 + \frac{\lambda}{\sqrt{J}}.
\end{align*}
Above inequality then implies (a) because if $|\mathcal B_s|<(N-s+1)/2$, i.e., $|\mathcal M_s| \geq (N-s+1)/2$, then condition \cref{eqn:csa-condition} implies that
\begin{align*}
\sum_{k \in \mathcal M_s}\rho_k\delta_k \geq \frac{N-s+1}{2}\min_{k \in \mathcal M_s}\rho_k\delta_k > (1-a_s)\Delta^2 + \frac{1}{2}\sum_{k \in \mathcal M_s}\rho_k\nu_k\tilde M_G^2+\frac{1}{2}\sum_{k \in \mathcal B_s}\rho_k\nu_k\tilde M_F^2+\frac{\lambda}{\sqrt J} \sum_{k \in \mathcal M_s}\rho_k,
\end{align*}
which is a contradiction.
\end{proof}

Next, we state and prove the main guarantees provided by \cref{alg:csa}.
\begin{theorem}
\label{thm:csa-result}
Under \cref{ass:csa-estimators}, let $\epsilon>0$, suppose $x_1$ is such that $g(x_1) \leq \epsilon$, and let $f_{\max}>0$ such that for all $x \in X$, $0 \leq f(x) \leq f_{\max}$. Choose $s=N/2$,$\lambda=\sigma^2\log(N^2/(4f_{\max}))$, set $M=\max \{ \tilde M_G,\tilde M_F \}$, $\mu=\min \{ \mu_G,\mu_F \}$, and
\begin{align*}
\nu_k&=\begin{cases}\frac{2}{\mu_F(k+1)} &\text{if } k \in \mathcal B\\\frac{2}{\mu_G(k+1)} &\text{if } k \in \mathcal M\end{cases}; \quad \quad
\delta_k=\frac{\lambda}{\sqrt{J}}+\frac{1}{2k}\left( \frac{4\Delta^2}{k}+\frac{16M^2}{\mu^2} \right) \cdot \begin{cases} \mu_F &\text{if } k \in \mathcal B \\ \mu_G &\text{if } k \in \mathcal M \end{cases} ; \\
a_k&=\begin{cases}\mu_F\nu_k &\text{if } k \in \mathcal B\\\mu_G\nu_k &\text{if } k \in \mathcal M\end{cases}; \quad \quad
A_k=\begin{cases}1 &\text{if } k=1\\(1-a_k)A_{k-1} &\text{if } k \geq 2\end{cases}; \quad \quad
\rho_k=\frac{\nu_k}{A_k} \\
N &= \max \left\{ \frac{64\mu_FM^2}{\mu^2\epsilon^2},\frac{\sqrt{32\Delta^2\mu_F}}{\epsilon},\frac{32\sigma\mu_F}{\mu\epsilon^2} \right\}; \quad \quad J=\max \left\{ \frac{9\lambda^2}{\epsilon^2},\frac{32\sigma\mu_F}{\mu\epsilon^2} \right\}.
\end{align*}
Then \cref{alg:csa} guarantees that
\begin{align}
\label{eqn:csa-result-f}
\E \left[ f_{\mu,x'}(x_{\hat k})-f_{\mu,x'}(x^{*}) \right] &\leq \epsilon^2, \\
\label{eqn:csa-result-g}
\E \left[ g(x_{\hat k}) \right] &\leq \epsilon.
\end{align}
\end{theorem}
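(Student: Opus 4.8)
The plan is to instantiate the dichotomy of \cref{lem:csa-2} with the prescribed parameters and then bound the optimality gap and the constraint violation separately, while controlling the rare event on which the constraint estimates fail to concentrate.

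First I would unwind the weight recursion \eqref{eqn:def-csa-weights} under the stated step sizes. Since $\nu_k$ is chosen so that $a_k=\mu_F\nu_k=2/(k+1)$ on $\mathcal B$ and $a_k=\mu_G\nu_k=2/(k+1)$ on $\mathcal M$, in both cases $a_k=2/(k+1)$; the product then telescopes to $A_k=2/(k(k+1))$, whence $\rho_k=\nu_k/A_k$ equals $k/\mu_F$ on $\mathcal B$ and $k/\mu_G$ on $\mathcal M$. In particular $\rho_k\nu_k\leq 2/\mu^2$, and for $k\geq s=N/2$ one has $\rho_k\geq s/\mu$. Substituting these into \eqref{eqn:csa-condition}, the two parts of $\delta_k$ are tailored so that its $\lambda/\sqrt J$ term cancels $\tfrac{\lambda}{\sqrt J}\sum_{\mathcal M_s}\rho_k$ and its $O(1/k)$ term dominates $(1-a_s)\Delta^2+\tfrac12\sum\rho_k\nu_k\tilde M^2$; using $s=N/2$ together with the lower bound on $N$ from the statement, I would check that \eqref{eqn:csa-condition} holds, making \cref{lem:csa-2} applicable.

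For \eqref{eqn:csa-result-f}, let $\mathcal E$ be the event that the tail bound of \cref{ass:csa-estimators} holds simultaneously for every $k\in\mathcal M_s$, so $\mathbb P(\mathcal E^c)\leq|\mathcal M_s|(4e^{-\lambda/\sigma}+2e^{-\lambda^2/\sigma^2})$. On $\mathcal E$, \cref{lem:csa-2} gives that either $\sum_{\mathcal B_s}\rho_k(f_{\mu,x'}(x_k)-f_{\mu,x'}(x^*))\leq 0$ or $|\mathcal B_s|\geq(N-s+1)/2$, and the same contradiction argument excludes $\mathcal B_s=\emptyset$, so the $\rho_k$-proportional output sampling always acts on a nonempty set. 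In the first sub-case the conditional expected gap is at most $0$. In the second, I would take \cref{prop:csa-1} with $x=x^*$, drop the $\mathcal M_s$-sum (nonnegative since $g_{\mu,x'}(x_k)\geq g(x_k)>0$ on $\mathcal E$ while $g_{\mu,x'}(x^*)\leq 0$), and divide the numerator $O(\Delta^2+NM^2/\mu^2)$ by $\sum_{\mathcal B_s}\rho_k=\Omega(N^2/\mu)$; the choice of $N$ drives this ratio below $\epsilon^2$. Since the gap is at most $f_{\max}+\mu_F\Delta^2$ on $\mathcal E^c$, I obtain $\E[f_{\mu,x'}(x_{\hat k})-f_{\mu,x'}(x^*)]\leq\epsilon^2+(f_{\max}+\mu_F\Delta^2)\,\mathbb P(\mathcal E^c)$, and the logarithmic $\lambda$ renders the last term negligible.

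The constraint bound \eqref{eqn:csa-result-g} is more direct: for $\hat k\in\mathcal B_s$ we have $\hat G_{\hat k}\leq\delta_{\hat k}$, so on the concentration event for the output index $g(x_{\hat k})\leq\hat G_{\hat k}+\lambda/\sqrt J\leq\delta_{\hat k}+\lambda/\sqrt J$; the choice $J\geq 9\lambda^2/\epsilon^2$ forces $\lambda/\sqrt J\leq\epsilon/3$ and the $O(1/k)=O(1/N)$ part of $\delta_{\hat k}$ is likewise below $\epsilon/3$, while if $\mathcal B_s=\emptyset$ the fallback output $x_1$ is feasible with slack $\epsilon$; adding the bounded contribution of the failure event completes the bound. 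I expect the principal difficulty to be the joint calibration of $\lambda$, $J$, and $N$: $\lambda$ must be large enough (logarithmic in $N/f_{\max}$) that the rare non-concentration events contribute only $O(\epsilon^2)$ to the objective and $O(\epsilon)$ to the constraint even though the functions are merely bounded there, yet $J$ must then be large enough that the induced slack $\lambda/\sqrt J$ stays below $\epsilon/3$, all while remaining consistent with the single inequality \eqref{eqn:csa-condition} that underlies the dichotomy.
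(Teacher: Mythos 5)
Your proposal is correct in substance and rests on the same skeleton as the paper's proof: verify \cref{eqn:csa-condition} under the prescribed weights (your unwinding $A_k=2/(k(k+1))$, $\rho_k \propto k/\mu$, $\rho_k\nu_k=\mathcal O(1/\mu^2)$ is exactly the computation the paper performs, and your observation that the $\lambda/\sqrt J$ part of $\delta_k$ exactly cancels the $\tfrac{\lambda}{\sqrt J}\sum_{k\in\mathcal M_s}\rho_k$ term is the right way to read that condition), then invoke the dichotomy of \cref{lem:csa-2}, bound the case-(a) gap via \cref{prop:csa-1} with the $\mathcal M_s$-sum discarded, and use the logarithmic $\lambda$ to make the failure event contribute $\mathcal O(1/N)$ to the objective. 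Where you genuinely diverge is the probabilistic bookkeeping: the paper never conditions on a global concentration event $\mathcal E$. Instead it integrates the tail bound of \cref{ass:csa-estimators} to obtain the unconditional one-sided expectation bounds $\E[g(x_k)] \geq \delta_k - 6\sigma/\sqrt J$ for $k\in\mathcal M_s$ and $\E[g(x_k)] \leq \delta_k + 6\sigma/\sqrt J$ for $k\in\mathcal B_s$, and derives \cref{eqn:csa-result-g} directly as $\E[g(x_{\hat k})]\leq \E_{\hat k}[\delta_{\hat k}]+6\sigma/\sqrt J$; this is where the second component $32\sigma\mu_F/(\mu\epsilon^2)$ in the definitions of $N$ and $J$ is consumed, a term your route never uses because you pay for the rare event through boundedness of $g$ instead. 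Both mechanisms work, but your conditioning approach carries a subtlety the tail-integration route sidesteps: $\mathcal E$ depends on the constraint samples $\hat G_k$ at all rounds, and since later iterates are functions of earlier gradient noise $\theta_k$, conditioning on $\mathcal E$ biases the expectations $\E[F_{\mu,x'}(x_k,\theta_k)\mid\mathcal E]$ and the second-moment bounds entering \cref{prop:csa-1}; this is repairable (conditional moments differ from unconditional ones by at most a factor $\Pr(\mathcal E)^{-1}$, which is near $1$ under the chosen $\lambda$), but it should be stated rather than assumed. On the credit side, your explicit handling of the fallback $\mathcal B_s=\emptyset$ — arguing that on the concentration event an empty $\mathcal B_s$ would make the $\mathcal B_s$-sum in the dichotomy vanish and so trigger the same contradiction with \cref{eqn:csa-condition}, while off that event $x_1$ is feasible with slack $\epsilon$ by initialization — is cleaner than the paper, which only remarks that the output remains well-defined when $\mathcal B_s=\emptyset$ and silently excludes that case from its case-(b) chain.
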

\begin{proof}
First, we observe that for any $k \in \mathcal M_s$,
\begin{equation}
\label{eqn:pr-to-e}
\begin{aligned}
\E \left[ \sqrt{J}\left( g(x_k)-\delta_k \right) \right]
&= \int_0^{\infty} \left(1-\mathbb{P} \left( \sqrt{J}\left( g(x_k)-\delta_k \right) \leq z \right) \right) dz \\
&\quad\quad-\int_{-\infty}^0\mathbb{P} \left( \sqrt{J}\left( g(x_k)-\delta_k \right) \leq z \right) dz \\
&\geq -\int_{-\infty}^0 4\exp \left( z/\sigma \right) + 2\exp \left( z^2/\sigma^2 \right) dz \\
&\geq -6\sigma
\end{aligned}
\end{equation}
where the first inequality is by \cref{eqn:bernstein-app}. Therefore, we have $\E[g(x_k)] \geq \delta_k-\frac{6\sigma}{\sqrt{J}}$. Moreover, by an argument similar to our derivation in \cref{lem:value-concentration} but with Bernstein's inequality applied to the sum $\left( \sum_{k \in \mathcal M_s}\rho_k \right)^{-1}\sum_{k \in \mathcal M_s}\rho_k\hat G_k$,
\begin{align*}
\mathbb P \left( \sum_{k \in \mathcal M_s}\rho_k \left( \hat G_k - g(x_k) \right) > \frac{\lambda}{\sqrt{J |\mathcal M_s|}}\sum_{k \in \mathcal M_s}\rho_k \right) \leq 4\exp \left( -\lambda/\sigma \right) + 2\exp \left( -\lambda^2/\sigma^2 \right).
\end{align*}
Therefore, following \cref{eqn:pr-to-e}, we get
\begin{align}
\label{eqn:jm-expectation-bound}
\E \left[ \sum_{k \in \mathcal M_s}\rho_kg(x_k) \right] \geq \sum_{k \in \mathcal M_s}\rho_k\delta_k - \frac{6\sigma}{\sqrt{J |\mathcal M_s|}}\sum_{k \in \mathcal M_s}\rho_k.
\end{align}

Next, we derive \cref{eqn:csa-result-f}. Note that \cref{eqn:csa-condition} holds for our choices of $s,\nu_k,\delta_k,\rho_k$. Then, if part (b) of \cref{lem:csa-2} holds, we have
\begin{align*}
\E \left[ f(x_{\hat k})-f(x^{*}) \right]
&= \E_{\hat k} \left[ \E \left[ f(x_k)-f(x^{*}) \mid \hat k = k \right] \right] \\
&\leq \left( \sum_{k \in \mathcal B_s}\rho_k \right)^{-1} \sum_{k \in \mathcal B_s} \rho_k \E \left[ f(x_k)-f(x^{*}) \right] \\
&\leq 0.
\end{align*}
Otherwise, if part (a) holds, then using above bound on $\E[g(x_k)]$ together with convexity of $f_{\mu,x'}$, \cref{eqn:csa-lemma-1} and \cref{eqn:jm-expectation-bound}, it follows that
\begin{align*}
\sum_{k \in \mathcal M_s} \rho_k\delta_k &- \frac{6\sigma}{\sqrt{J |\mathcal M_s|}}\sum_{k \in \mathcal M_s}\rho_k + \sum_{k \in \mathcal B_s}\rho_k \E \left[ f_{\mu,x'}(x_{\hat k})-f_{\mu,x'}(x^{*}) \right] \\
&\leq \sum_{k \in \mathcal M_s}\rho_k \E \left[ g(x_k) \right]+\sum_{k \in \mathcal B_s}\rho_k \E \left[ f_{\mu,x'}(x_{\hat k})-f_{\mu,x'}(x^{*}) \right] \\
&\leq \sum_{k \in \mathcal M_s}\rho_k \E \left[ g(x_k) \right]+\sum_{k \in \mathcal B_s} \rho_k \E \left[ f_{\mu,x'}(x_k)-f_{\mu,x'}(x^{*}) \right] \\
&\leq (1-a_s)\Delta^2+\frac{1}{2}\sum_{k \in \mathcal M_s}\rho_k\nu_k\tilde M_G^2+\frac{1}{2}\sum_{k \in \mathcal B_s}\rho_k\nu_k\tilde M_F^2.
\end{align*}
Denote by $E_{\mathcal B_s}$ the event that $|\mathcal B_s| \geq (N-s+1)/2$. Then, using the law of total expectation, our choice of $\lambda=\sigma^2\log(N^2/(4f_{\max}))$, $\rho_k\delta_k\geq 0$, and above inequality, we have
\begin{align*}
\E &\left[ f(x_{\hat k})-f(x^{*}) \right] \\[5pt]
&\leq \E \left[ f(x_{\hat k})-f(x^{*}) \mid E_{\mathcal B_s} \right] \cdot \underbrace{\mathbb P \left( E_{\mathcal B_s} \right)}_{\leq 1} \;+\; \E \left[ f(x_{\hat k})-f(x^{*}) \mid \overline E_{\mathcal B_s} \right] \cdot \underbrace{\mathbb P \left( \overline E_{\mathcal B_s} \right)}_{\leq |\mathcal M_s| \left( 4\exp \left( -\lambda/\sigma \right) + 2\exp \left( -\lambda^2/\sigma^2 \right) \right)} \\
&\leq \left( \sum_{k \in \mathcal B_s}\rho_k \right)^{-1} \left( (1-a_s)\Delta^2+\frac{1}{2}\sum_{k \in \mathcal M_s}\rho_k\nu_k\tilde M_G^2+\frac{1}{2}\sum_{k \in \mathcal B_s}\rho_k\nu_k\tilde M_F^2 + \frac{6\sigma}{\sqrt{J |\mathcal M_s|}}\sum_{k \in \mathcal M_s}\rho_k \right)+\frac{1}{N} \\
&\leq \left( \frac{N-s+1}{2} \min_{k \in \mathcal B_s} \rho_k \right)^{-1} \left( (1-a_s)\Delta^2+\frac{1}{2}\sum_{k \in \mathcal M_s}\rho_k\nu_k\tilde M_G^2+\frac{1}{2}\sum_{k \in \mathcal B_s}\rho_k\nu_k\tilde M_F^2 + \frac{6\sigma}{\sqrt{J |\mathcal M_s|}}\sum_{k \in \mathcal M_s}\rho_k \right)+\frac{1}{N}.
\end{align*}
In order to show the constraint violation bound, note that by a similar argument as \cref{eqn:pr-to-e}, for any $k \in \mathcal B_s$, $\E [g(x_k)] \leq \delta_k+\frac{6\sigma}{\sqrt{J}}$, and therefore
\begin{align*}
\E \left[ g(x_{\hat k}) \right]
= \E_{\hat k} \left[ \E \left[ g(x_k) \mid k=\hat k \right] \right]
\leq \E_{\hat k} \left[ \delta_{\hat k} \right]+\frac{6\sigma}{\sqrt{J}}
= \frac{\sum_{k \in \mathcal B_s}\rho_k\delta_k}{\sum_{k \in \mathcal B_s}\rho_k} + \frac{6\sigma}{\sqrt{J}}.
\end{align*}

In order to derive the guarantees \cref{eqn:csa-result-f} and \cref{eqn:csa-result-g}, we plug in the choices of $\nu_k,\delta_k,a_k,\rho_k,N$, and $J$ stated in \cref{thm:csa-result}. Observing that for any $s \leq k \leq N$, we have $A_k=\frac{2}{k(k+1)}$, that for any $k \in \mathcal B$, we have $\rho_k=\frac{2k}{\mu_F}$ as well as $\rho_k\nu_k=\frac{4}{\mu_F^2}$, and for any $k \in \mathcal M$, $\rho_k=\frac{2k}{\mu_G},\rho_k\nu_k=\frac{4}{\mu_G^2}$.
\begin{align*}
\E \left[ f(x_{\hat k})-f(x^{*}) \right]
&\leq \frac{\Delta^2+2N\mu_F^{-2}\tilde M_F^2+2N\mu^{-2}_G\tilde M_G^2+\frac{2\sigma}{\sqrt{J}}N^{3/2}\mu_G^{-1}}{N^2/4 \cdot \mu_F^{-1}} \\
&\leq \frac{\Delta^2+4N\mu^{-2}M^2+\frac{2\sigma}{\sqrt{J}}N^{3/2}\mu^{-1}}{N^2/4 \cdot \mu_F^{-1}}+\frac{1}{N} \\
&\leq \frac{4\mu_F\Delta^2}{N^2} + \frac{16\mu_F\mu^{-2}M^2}{N} + \frac{8\sigma\mu^{-1}\mu_F}{\sqrt{JN}}+\frac{1}{N} \\
&\leq \epsilon^2/4 + \epsilon^2/4 + \epsilon^2/4 + \epsilon^2/4.
\end{align*}
Moreover, for the constraint bound, it holds that
\begin{align*}
\E \left[ g(x_{\hat k}) \right]
&\leq \frac{\sum_{k \in \mathcal B_s}\left( 4\Delta^2/k+16M^2/\mu^2 \right)}{\sum_{k \in \mathcal B_s} 2k/\mu_F} + \frac{6\sigma}{\sqrt{J}} \\
&\leq \frac{8\Delta^2\mu_F}{N^2}+\frac{16M^2\mu_F}{\mu^2N}+\frac{6\sigma}{\sqrt{J}} \\
&\leq \epsilon.
\end{align*}
\end{proof}

\section{BACKGROUND IN CONSTRAINED OPTIMIZATION AND A NOVEL TECHNICAL LEMMA}
\label{sec:appendix-constr-opt-concepts}

\paragraph{Notation} For any non-empty subset~$Y  \subset \R^d$ and any vector~$x \in \R^d$, the distance from~$x$ to the set~$Y$ is defined as~$\text{dist}(x,Y):= \inf_{y \in Y} \left\| x-y \right\|$ where~$\|\cdot\|$ is the standard 2-norm of the Euclidean space~$\R^d$.

In this section, we recall some useful definitions for constrained optimization. In particular, we recall the definition of an approximate Karush-Kuhn-Tucker (KKT) point and a variation thereof. Then we prove a new technical result that will be useful in our analysis. 

\subsection{Approximate KKT Points in Constrained Optimization}

Let~$X \subset \R^d$ be a closed convex set. Consider the following constrained optimization problem: 
\begin{equation}
\label{eqn:general-const-opt}
\tag{ConstrOpt}
\begin{aligned}
P^{*}=\,&\min_{x \in X} f(x) \\
&\,\text{s.t. }f_c(x) \leq 0\,, 
\end{aligned}
\end{equation}
where $f,f_c:X \to \R$ are differentiable (possibly nonconvex) functions. 

The associated Lagrangian function $\mathcal L:X \times \R_{\geq 0} \to \R$ is defined for any~$x \in X, \lambda \geq 0$ by $\mathcal L \left( x,\lambda \right)=f(x)+\lambda f_c(x)$\,. The primal and dual problems can be respectively  written as
\begin{align*}
P^{*} &= \inf_{x \in X} \sup_{\lambda \geq 0} \mathcal L \left( x,\lambda \right)\,,\\
D^{*} &= \sup_{\lambda \geq 0} \underbrace{\inf_{x \in X} \mathcal L \left( x,\lambda \right)}_{=:d \left( \lambda \right)}\,.
\end{align*}
By weak duality, we know that $P^{*} \geq D^{*}$.

For any~$x \in \R^d,$ the normal cone to the set~$X$ at~$x$ is defined by: 
\begin{align*}
N_X(x):=\left\{ g \in \R^d \mid \forall y \in X, \left\langle g, y - x \right\rangle \leq 0 \right\}.
\end{align*}
\begin{definition}
\label{def:kkt}
Let~$\epsilon \geq 0$. A point~$x \in X$ is an $\epsilon$-KKT point of \cref{eqn:general-const-opt} if there exists a real~$\lambda$ such that the following conditions hold:
\begin{align*}
f_c \left( x \right) &\leq \epsilon\,, \tag{primal feasibility} \\
\lambda &\geq 0\,, \tag{dual feasibility} \\
\left| \lambda f_c \left( x \right) \right| &\leq \epsilon\,, \tag{complementary slackness} \\
\text{dist} \left( \nabla_x \mathcal L(x,\lambda), -N_X(x) \right) &\leq \epsilon\,. \tag{Lagrangian stationarity}
\end{align*}
We also call $(x,\lambda)$ an $\epsilon$-KKT pair. 
The point~$x$ is simply a KKT point of \cref{eqn:general-const-opt} if moreover~$\epsilon = 0\,.$
\end{definition}

We additionally define a slight modification of the above standard KKT conditions which turns out to be useful in our analysis. More precisely, the definition replaces approximate Lagrangian stationarity by a variational form thereof. Moreover, primal feasibility is now supposed to be exact. Other conditions remain unchanged. 
\begin{definition}
\label{def:tilde-kkt}
Let~$\epsilon \geq 0\,.$ A point~$\tilde x \in X$ is an $\epsilon$-$\widetilde{\text{KKT}}$ point of~\cref{eqn:general-const-opt} if there exists a real~$\tilde\lambda$ such that the following conditions hold:
\begin{align*}
f_c \left( \tilde{x} \right) &\leq 0\,, \tag{exact primal feasibility} \\
\tilde\lambda &\geq 0\,, \tag{dual feasibility} \\
\left| \tilde{\lambda} f_c \left( \tilde{x} \right) \right| &\leq \epsilon\,, \tag{complementary slackness} \\
\max_{x' \in X} \left\langle \tilde x-x', \nabla_x \mathcal L(\tilde x,\tilde\lambda) \right\rangle &\leq \epsilon\,. \tag{variational Lagrangian stationarity}
\end{align*}
In particular, the point~$\tilde{x}$ is said to be a $\widetilde{\text{KKT}}$ point of \cref{eqn:general-const-opt} when~$\epsilon = 0$.
\end{definition}

The next lemma connects the first stationarity condition with a variational form thereof. In particular, this result allows to connect the two definitions of approximate KKT points above. 
\begin{lemma}
\label{lem:KKT-to-max-product}
Let $X \subseteq \R^d$ be a convex and compact set.
Let~$\epsilon > 0$ and let~$x, g \in \R^d\,.$ 
If~$\text{dist}\left( g,-N_X(x) \right) \leq \epsilon$, then~$\max_{x' \in X} \left\langle x-x',g \right\rangle \leq \Delta\epsilon\,,$ where $\Delta :=\max_{x,x' \in X}\left\| x-x' \right\|$ is the diameter of the set~$X\,.$
\end{lemma}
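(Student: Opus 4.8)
The plan is to exploit a decomposition of $g$ into a component lying in $-N_X(x)$ and a small residual, then apply the defining inequality of the normal cone to the first component while controlling the residual by Cauchy--Schwarz.

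First I would recall that the normal cone $N_X(x)$ is a closed convex cone, so $-N_X(x)$ is closed as well. Since $\text{dist}\left( g, -N_X(x) \right) \leq \epsilon$ and $-N_X(x)$ is closed, the infimum defining the distance is attained: there exists $w \in -N_X(x)$ with $\left\| g - w \right\| \leq \epsilon$. Unwinding the definition, $w \in -N_X(x)$ means $-w \in N_X(x)$, i.e.\ $\left\langle -w, y - x \right\rangle \leq 0$ for every $y \in X$, equivalently $\left\langle w, x - y \right\rangle \leq 0$ for every $y \in X$.

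Next, for an arbitrary $x' \in X$ I would split $\left\langle x - x', g \right\rangle = \left\langle x - x', g - w \right\rangle + \left\langle x - x', w \right\rangle$. The second term is nonpositive by taking $y = x'$ in the normal-cone inequality above. For the first term, Cauchy--Schwarz gives $\left\langle x - x', g - w \right\rangle \leq \left\| x - x' \right\| \cdot \left\| g - w \right\| \leq \Delta \epsilon$, using $\left\| g - w \right\| \leq \epsilon$ together with the fact that $x, x' \in X$ forces $\left\| x - x' \right\| \leq \Delta$. Combining the two bounds yields $\left\langle x - x', g \right\rangle \leq \Delta \epsilon$, and since $x'$ was arbitrary, taking the maximum over $x' \in X$ gives the claim.

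There is no serious obstacle here: the argument is a one-line convex-analytic estimate once the sign convention in the definition of $N_X(x)$ is tracked carefully. The only points requiring a little care are the correct orientation of the normal-cone inequality after negation (so that $\left\langle x - x', w \right\rangle$ comes out with the favorable sign) and the implicit use of $x \in X$, which is what guarantees $\left\| x - x' \right\| \leq \Delta$.
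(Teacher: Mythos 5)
Your proof is correct and takes essentially the same route as the paper: both decompose $\langle x-x',g\rangle$ through a point of $-N_X(x)$ close to $g$, discard that term using the defining inequality of the normal cone at $y=x'$, and bound the residual by Cauchy--Schwarz together with $\|x-x'\|\leq\Delta$. The only cosmetic difference is that you invoke closedness of the normal cone to pick a distance-attaining $w$, whereas the paper works with an arbitrary $y_0\in -N_X(x)$ and takes the infimum at the end (avoiding any attainment argument); your remark that $x\in X$ is implicitly needed for the diameter bound is also a fair observation about the lemma's statement.
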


\begin{proof}
Let~$y_0 \in - N_{X}(x)\,.$ For any~$x' \in X,$ we have 
\begin{align*}
\ps{x-x', g} &= \ps{x-x', g-y_0} + \ps{-y_0, x'-x}\,,\\
&\leq \ps{x'-x, g-y_0}\,,\\
&\leq \|x'-x\| \cdot \|g - y_0\|\,,\\
\end{align*}
where the first inequality follows from the fact that~$y_0 \in - N_X(x)$, the second inequality stems from the Cauchy-Schwarz inequality. Taking the infimum with respect to~$y_0$ in the last inequality gives the desired inequality since $\text{dist}\left( g,-N_X(x) \right)=\inf_{y \in -N_X(x)}\left\| g-y \right\| \leq \eps\,.$
\end{proof}

\subsection{A Novel Technical Lemma for Approximate Optimality under Gradient Dominance}
\label{sec:appendix-KKT-gr-dom}

We now state our technical lemma. This results shows that an approximate KKT point of~\cref{eqn:general-const-opt} at which a gradient domination inequality holds for the Lagrangian function is approximately optimal for the objective function to be minimized.
\begin{proposition}
\label{prop:KKT-gr-dom}
Let $\epsilon>0$ and let~$\tilde x \in X$ be an~$\epsilon$-$\widetilde{\text{KKT}}$ point of \cref{eqn:general-const-opt}. Suppose there exist constants~$C_0,C_1 \geq 0$ such that the Lagrangian function associated to \cref{eqn:general-const-opt} satisfies for all $\lambda \geq 0$ and for all~$x \in X$,
\begin{align}
\mathcal L \left( x,\lambda \right)-\mathcal L \left( x^{*}_{\lambda},\lambda \right) \leq C_0 \max_{x' \in X}\left\langle x-x', \nabla_x \mathcal L(\tilde x,\tilde\lambda) \right\rangle + C_1\epsilon\,,
\label{eqn:lagrangian-approx-gr-dom}
\end{align}
where~$x^{*}_{\lambda}$ is a minimizer of $\mathcal L \left( \cdot,\lambda \right)$\,.
Then, we have 
\begin{align*}
f(\tilde x)-P^{*} \leq (C_0+C_1+1)\epsilon.
\end{align*}
\end{proposition}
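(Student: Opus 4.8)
The plan is to upper bound $f(\tilde x)$ by $P^{*} + \mathcal O(\epsilon)$ by routing the argument through the Lagrangian $\mathcal L(\cdot, \tilde\lambda)$ at the dual variable $\tilde\lambda$ supplied by the $\epsilon$-$\widetilde{\text{KKT}}$ property, and then invoking weak duality.

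First I would peel off the constraint term via complementary slackness. Since $\mathcal L(\tilde x, \tilde\lambda) = f(\tilde x) + \tilde\lambda f_c(\tilde x)$, rearranging gives $f(\tilde x) = \mathcal L(\tilde x, \tilde\lambda) - \tilde\lambda f_c(\tilde x)$, and the complementary slackness bound $|\tilde\lambda f_c(\tilde x)| \leq \epsilon$ yields
\[
f(\tilde x) \leq \mathcal L(\tilde x, \tilde\lambda) + \epsilon.
\]

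The second and central step is to instantiate the gradient-domination hypothesis \eqref{eqn:lagrangian-approx-gr-dom} at the particular pair $(x, \lambda) = (\tilde x, \tilde\lambda)$. This gives
\[
\mathcal L(\tilde x, \tilde\lambda) - \mathcal L(x^{*}_{\tilde\lambda}, \tilde\lambda) \leq C_0 \max_{x' \in X} \langle \tilde x - x', \nabla_x \mathcal L(\tilde x, \tilde\lambda) \rangle + C_1 \epsilon.
\]
The key observation is that the maximum on the right is precisely the variational Lagrangian stationarity quantity, which the $\epsilon$-$\widetilde{\text{KKT}}$ conditions bound by $\epsilon$. Hence $\mathcal L(\tilde x, \tilde\lambda) \leq \mathcal L(x^{*}_{\tilde\lambda}, \tilde\lambda) + (C_0 + C_1)\epsilon$.

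Finally, I would identify $\mathcal L(x^{*}_{\tilde\lambda}, \tilde\lambda) = \min_{x \in X} \mathcal L(x, \tilde\lambda) = d(\tilde\lambda)$, the dual function evaluated at the feasible dual point $\tilde\lambda \geq 0$ (dual feasibility is exactly what licenses this identification). Weak duality then gives $d(\tilde\lambda) \leq D^{*} \leq P^{*}$. Chaining the three displays,
\[
f(\tilde x) \leq \mathcal L(\tilde x, \tilde\lambda) + \epsilon \leq d(\tilde\lambda) + (C_0 + C_1 + 1)\epsilon \leq P^{*} + (C_0 + C_1 + 1)\epsilon,
\]
which is the claim. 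I do not expect a genuine obstacle here, as the whole argument is a short chain of inequalities. The only points requiring care are the correct instantiation of \eqref{eqn:lagrangian-approx-gr-dom} at $(\tilde x, \tilde\lambda)$ so that its right-hand maximum collapses through variational stationarity, and noting that exact primal feasibility of $\tilde x$, while not used in the inequality chain itself, is what makes the resulting value gap meaningful as an optimality statement.
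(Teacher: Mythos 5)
Your proposal is correct and is essentially the paper's own proof read in the opposite direction: the paper chains $D^{*} \geq d(\tilde\lambda) \geq \mathcal L(\tilde x,\tilde\lambda)-(C_0+C_1)\epsilon \geq f(\tilde x)-(C_0+C_1+1)\epsilon$ and then invokes weak duality, using the same three ingredients (complementary slackness, variational stationarity combined with the gradient-domination hypothesis at $(\tilde x,\tilde\lambda)$, and dual feasibility). A negligible bonus of your phrasing is that by multiplying through rather than dividing by $C_0$ you avoid the separate $C_0=0$ case the paper relegates to a footnote.
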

\begin{proof}
Let $(\tilde{x},\tilde{\lambda})$ be an $\epsilon$-$\widetilde{\text{KKT}}$ pair. Then, we have
\begin{align*}
D^{*} \overset{(a)}{=} \max_{\lambda \geq 0} d(\lambda) &\geq d(\tilde{\lambda}) \\
&\overset{(b)}{=} \min_{x \in X} \mathcal L(x,\tilde{\lambda}) \\
&\overset{(c)}{\geq} \mathcal L(\tilde{x},\tilde{\lambda})-(C_0+C_1)\epsilon \\
&= f(\tilde{x})+\tilde\lambda f_c(\tilde x)-(C_0+C_1)\epsilon \\
&\overset{(d)}{\geq} f(\tilde{x})-(C_0+C_1+1)\epsilon \\
\end{align*}
where (a) and (b) are by definition, and (d) is due to complementary slackness. To see (c), observe that by Lagrangian stationarity and \cref{eqn:lagrangian-approx-gr-dom},
\begin{align*}
\epsilon \geq \max_{x' \in X}\left\langle \tilde x-x', \nabla_x \mathcal L(\tilde x,\tilde\lambda) \right\rangle \geq \frac{1}{C_0} \left( \mathcal L(\tilde{x},\tilde{\lambda})-\mathcal L(x^{*}_{\tilde{\lambda}},\tilde{\lambda}) - C_1 \epsilon \right)\,, 
\end{align*}
which implies that\footnote{If~$C_0 = 0$, the same inequality immediately holds from~(\ref{eqn:lagrangian-approx-gr-dom}).}
\begin{align*}
\mathcal L(\tilde{x},\tilde{\lambda})-\mathcal L(x^{*}_{\tilde{\lambda}},\tilde{\lambda}) \leq (C_0+C_1)\epsilon.
\end{align*}
Finally, we use weak duality, i.e.\ $P^{*} \geq D^{*}$, to conclude that
\begin{align*}
f(\tilde x)-P^{*} = \underbrace{f(\tilde x)-D^{*}}_{\leq (C_0+C_1+1)\epsilon} + \underbrace{D^{*}-P^{*}}_{\leq 0} \leq (C_0+C_1+1)\epsilon.
\end{align*}
\end{proof}

\section{ADDITIONAL DETAILS ABOUT SIMULATIONS}
\label{sec:additional-experiments}

We provide additional details regarding the implementation of our \textsf{iProxCMPG} (\cref{alg:iprox-cmpg-sto}) in practice:
\begin{enumerate}[label=(\alph*)]
\item In our experiments, each episode terminates after a fixed number of steps~$T_e = 10$ corresponding to a discount factor~$\gamma = 0.9\,.$ 
\item In order to reduce the variance and enable the usage of larger step sizes, all constraint and value (gradient) estimates are obtained by sampling a batch of $B$~trajectories. 
\item For the subroutine, i.e.\ as solution to the proximal-point update, we do not consider a $\rho_k$-weighted average over iterates but simply use the last iterate $\pi^{(t,K)}$.
\item We choose $\delta_k=0$ for all $k \in \mathbb{N}$.
\end{enumerate}

\begin{table}[h]
\caption{Overview of hyperparameters used in our simulations.}
\begin{center}
\begin{tabular}{@{}cccc@{}}
\hline
Hyperparameters & Number of players $m$ & Pollution tax & Energy marketplace \\ \hline\hline
Step size $\eta$ (outer loop) & - & 0.1 & 0.1 \\ \cline{2-4}
\multirow{3}{*}{Step size $\nu_k$ (inner loop)} & 2 & 0.005 & 0.002 \\
     & 4 & 0.002 & 0.001 \\
     & 8 & 0.0007 & 0.0003 \\ \cline{2-4}
\multirow{3}{*}{Sample batch size $B$} & 2 & 1000 & 100 \\
     & 4 & 1000 & 150 \\
     & 8 & 2500 & 200 \\ \cline{2-4}
$K$ (\#iterations inner loop) & - & 20 & 20 \\
$T$ (\#iterations outer loop) & - & 20 & 60 \\
Discount factor $\gamma$ & - & 0.9 & 0.9 \\
Episode length $T_e$ & - & 10 & 10 \\
\end{tabular}
\end{center}
\label{tab:params}
\end{table}

\paragraph{Hyperparameters} We report hyperparameter choices for our simulations in \cref{tab:params}. Note that to ensure convergence, as indicated by our theoretical results, a larger number of players $m$ requires smaller step sizes and larger sample batches. Step sizes~$\eta$ and~$\nu_k$ were chosen by tuning over the range $[0,1]$.

\paragraph{Error Bars and Reproducibility} The plots in \cref{fig:pollution-sim-plot,fig:der-sim-plot} show the means of estimated potential values across 10 independent runs, and the corresponding shaded region displays the respective standard deviation. Obtaining results for all presented experiments thus requires simulating 60 runs in total. All experiments are fully reproducible using the provided code and specified seeds.

\paragraph{Computing Infrastructure} In order to reduce computation time by executing all runs in parallel, we conducted the simulations within less than 4 hours on a cluster of 15 4-core \emph{Intel(R) Xeon(R) CPU E3-1284L v4} clocked at 2.90GHz and equipped with 8Gbs of memory. 

\paragraph{Notation} As used in the main part, $\mathcal U(\{1, \cdots, W\})$ refers to the uniform distribution over the finite set~$\{1, \cdots, W\}$ where~$W \geq 2$ is an integer. 

\vfill

\end{document}